\newcommand{\algrule}[1][.2pt]{\par\vskip.5\baselineskip\hrule height #1\par\vskip.5\baselineskip}
\newcommand{\hide}[1]{}
\newcommand{\LB}{LB}
\newcommand{\UB}{UB}
\newcommand{\R}{\mathbb{R}}
\newcommand{\C}{\mathbb{C}}
\newcommand{\DB}{\mathcal{DB}}
\newcommand{\F}{\mathcal{F}}
\newcommand{\IF}{\mathcal{F}^{-1}}
\newcommand{\x}{\mathbf{x}}
\newcommand{\X}{\mathbf{X}}
\newcommand{\y}{\mathbf{y}}
\newcommand{\U}{\mathbf{U}}
\newcommand{\Y}{\mathbf{Y}}
\newcommand{\q}{\mathbf{q}}
\newcommand{\Q}{\mathbf{Q}}
\newcommand{\z}{\mathbf{z}}
\newcommand{\opt}{\text{opt}}
\newcommand{\norm}[1]{\left|\left|#1\right|\right|_2}
\newcommand{\tab}{\hspace*{2em}}
\newcommand{\tabb}{\hspace*{4em}}
\newcommand{\tabbb}{\hspace*{6em}}
\DeclareMathOperator*{\minmax}{min\,(max)}
\begin{document}

\title{Compressive Mining: Fast and Optimal Data Mining \\ in the Compressed Domain}
\titlerunning{Compressive Mining}        % if too long for running head
\author{Michail Vlachos \and Nikolaos M. Freris \and Anastasios Kyrillidis}
%\author{Michail Vlachos
%		\thanks{Michail Vlachos and Anastasios Kyrillidis are with IBM-Research Z\"{u}rich, S\"{a}umerstrasse 4, CH-8803, R\"{u}schlikon, Switzerland. E-mail: \tt{\{mvl,nas\}@zurich.ibm.com}.} 
%		\and
%        Nikolaos M. Freris
%        \thanks{Nikolaos M. Freris is with the School of Computer and Communication Sciences, \'{E}cole Polytechnique F\'{e}d\'{e}rale de Lausanne (EPFL), CH-1015 Lausanne, Switzerland. E-mail: \tt{nikolaos.freris@epfl.ch}.}% 
%        \and
%        Anastasios Kyrillidis}

%%\authorrunning{Short form of author list} % if too long for running head
%

\institute{M. Vlachos and A. Kyrillidis \at
            IBM-Research Z\"{u}rich, S\"{a}umerstrasse 4, CH-8803, R\"{u}schlikon, Switzerland\\  
           \email{\{mvl,nas\}@zurich.ibm.com}           %  \\
          \and
          N. Freris \at
          School of Computer and Communication Sciences, \'{E}cole Polytechnique F\'{e}d\'{e}rale de Lausanne (EPFL), CH-1015 Lausanne, Switzerland\\ 
          \email{nikolaos.freris@epfl.ch}
}          
\date{Received: date / Accepted: date}
%% The correct dates will be entered by the editor

\sloppy

\maketitle
\begin{abstract}
Real-world data typically contain repeated and periodic patterns.
This suggests that they can be effectively represented and compressed 
using only a few coefficients of an appropriate
basis (e.g., Fourier, Wavelets, etc.).
However, distance estimation when the data are represented using different sets of coefficients
is still a largely unexplored area. This work studies the optimization problems related to obtaining the \emph{tightest} lower/upper bound on Euclidean distances when each data object is potentially compressed using a different set of orthonormal coefficients.
Our technique leads to tighter distance estimates, which translates into more accurate search, learning and mining operations 
\textit{directly} in the compressed domain.

We formulate the problem of estimating lower/upper distance bounds as an optimization problem.
We establish the properties of optimal solutions, and leverage the theoretical analysis to develop a fast algorithm to obtain an \emph{exact} solution to the problem. The suggested solution provides the tightest estimation of the $L_2$-norm or the correlation. 
We show that typical data-analysis operations, such as k-NN search or k-Means clustering, can operate more accurately
using the proposed compression and distance reconstruction technique. We compare it with many other prevalent compression
and reconstruction techniques, including random projections and PCA-based techniques. We highlight a surprising result, namely
that when the data are highly sparse in some basis, our technique may even outperform PCA-based compression.

The contributions of this work are generic as our methodology is applicable to any sequential or high-dimensional data as well as to any orthogonal data transformation used for the underlying data compression scheme.

\medskip
\noindent \textbf{Keywords:}
Data Compression, Compressive Sensing, Fourier, Wavelets, Water-filling algorithm, Convex Optimization
\end{abstract}
\section{Introduction}

Increasing data sizes are a perennial problem for data analysis.
This dictates the need not only for more efficient data-compression schemes, but
also for analytic operations that work directly in the compressed domain.
Compression schemes exploit inherent patterns and structures in the data. 
In fact, many natural and industrial processes exhibit patterns
and periodicities. Periodic behavior is omnipresent; be it
in environmental and natural processes
\cite{periodicOceanography,periodicAstronomy}, in medical and physiological measurements (e.g., ECG data \cite{frerisECG}),
weblog data \cite{immorlicaWWW05,RosieJones05}, or network measurements \cite{AKAMAI}.
The aforementioned are only a few of the numerous scientific and industrial fields that exhibit repetitions.
Examples from some of these areas are shown in Fig. \ref{fig:periodicDataExamples}.

%%%%%%%%%%%%%%%%%%%%%%%%%%%%%%%%%%%%%%%%%%%%%%%%%%%%%%%%%%%%%%%%%%%%
\begin{figure}[!htpb]
\centerline{\includegraphics[width=\linewidth]{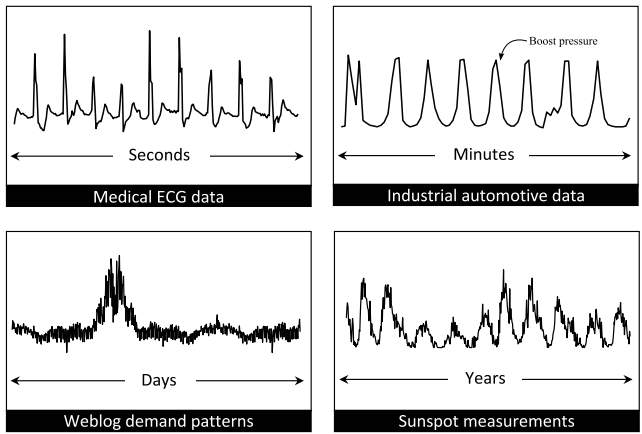}}
\vspace{-\baselineskip}
\caption{Many scientific fields entail periodic data. Examples from 
medical, industrial, web and astronomical measurements.}
\label{fig:periodicDataExamples}
\end{figure}
%%%%%%%%%%%%%%%%%%%%%%%%%%%%%%%%%%%%%%%%%%%%%%%%%%%%%%%%%%%%%%%%%%%%

\hide{
the majority of environmental, medical and industrial data exhibit strong periodic patterns, among others:
	\rule[1]{0.1in}{0.1in} Environmental and natural processes such as tidal patterns
(oceanography), sunspots \cite{} (astronomy), temperature changes \cite{periodicMeteorology} (meteorology), etc. manifest strong or weak periodic behavior. Such data are constantly recorded and monitored nowadays using sensors \cite{sensorsSurvey}.%, \cite{Freris_sensor}.
For storage considerations, there is great need to compress the data, and yet perform data analysis in the compressed domain.
	\item Medical and physiological measurements such as heart-beat rate,
	respiration, body part movement, and so on, are characterized by
	a precise periodic behavior, as well.
	\item Most of the transmitted electrical and wireless signals
	are also encoded on top of diverse periodic bases.
	\item Industrial measurements including engine vibrations and rotary movement of mechanical parts clearly manifest repeated patterns.
\end{itemize}
}

When data contain an inherent structure, more efficient compression can be performed
with minimal loss in data quality (see Fig.~\ref{fig:overview} for an example). 
The bulk of related work on compression and distance estimation
used the same sets of coefficients for all objects \cite{Agrawal93,rafiei98,adaFuWavelet,Eruhimov2006}. This simplified
the distance estimation in the compressed domain. However, by encoding the data using only a few and potentially disjoint sets of \emph{high-energy coefficients} (i.e., coefficients of highest absolute value) in an
orthonormal basis, one can achieve better reconstruction performance. Nonetheless, it was not known how to compute tight distance estimates using such a representation.
Our work exactly addresses this issue: given data that are compressed  
using disjoint coefficient sets of an orthonormal basis (for reasons of higher fidelity), \textit{how can distances among the compressed
objects be estimated with the highest fidelity}?

Here, we provide the tightest possible upper and lower bounds on the original distances, based only on the compressed
objects. By \textit{tightest}, we mean that, given the information available, no better estimate can be derived. Distance estimation is fundamental for data mining: the majority of mining and learning tasks are distance-based, including clustering (e.g. k-Means or hierarchical), k-NN classification, outlier detection, pattern matching, etc. 
This work focuses on the case where the distance is the widely used Euclidean distance ($L_2$-norm), 
but makes no assertions on the underlying transform used to compress the data:
As long as the transform is orthonormal, our methodology is applicable.
In the experimental section, we use both Fourier and Wavelets Decomposition as a data compression technique. 
Our main \textbf{contributions} are summarized below:
\vspace{.3\baselineskip}
  
  - We formulate the problem of tight distance estimation in the compressed domain
  as two optimization problems for obtaining lower/upper bounds.
  We show that both problems can be solved simultaneously by solving a single convex optimization program.
  
  - We derive the necessary and sufficient Karush-Kuhn-Tucker (KKT) conditions and study the properties of optimal solutions. We use the analysis to devise exact closed-form solution algorithms for the optimal distance bounds.
      
  - We evaluate our analytical findings experimentally;
  we compare the proposed algorithms with prevalent distance estimation schemes, and demonstrate significant improvements in terms of estimation accuracy.  We further compare the performance of our optimal algorithm with that of a numerical scheme based on convex optimization, and show that our scheme is at least two orders of magnitude faster, while also providing more accurate results. 
  
  - We also provide extensive evaluations with mining tasks in the compressed domain using our approach
  and many other prevalent compression and distance reconstruction schemes used in the literature (random projections, SVD, etc).

%\vspace{.3\baselineskip}

%%%%%%%%%%%%%%%%%%%%%%%%%%%%%%%%%%%%%%%%%%%%%%%%%%%%%%%%%%%%%%%%%%%%%%%%%%%%%
% Related
%%%%%%%%%%%%%%%%%%%%%%%%%%%%%%%%%%%%%%%%%%%%%%%%%%%%%%%%%%%%%%%%%%%%%%%%%%%%%

\section{Related Work}

%%%%%%%%%%%%%%%%%%%%%%%%%%%%%%%%%%%%%%%%%%%%%%%%%%%%%%%%%%%%%%%%%%%%
\begin{figure*}[!ht]
\centerline{\includegraphics[width=0.77\linewidth]{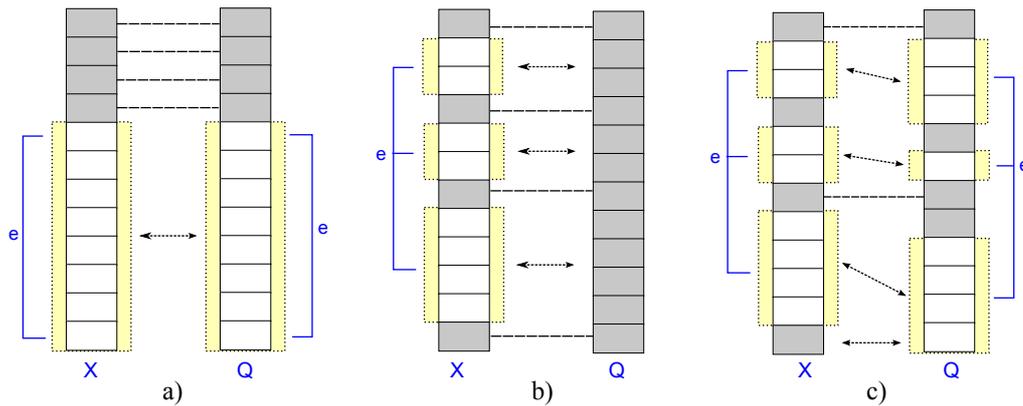}}
\vspace{-\baselineskip}
\caption{Comparison with previous work.
Distance estimation between a compressed sequence (X) and a query (Q) represented in \emph{any} complete orthonormal basis.
A compressed sequence is represented by a set of stored coefficients (gray) as well as the error $e$ incurred because of compression (yellow). a) Both X,Q are compressed by storing the first coefficients. b) The highest-energy coefficients are used for X, whereas Q is uncompressed as in \cite{vlachosTWEB,vlachosSDM09}.
c) The problem we address: both sequences are compressed using the highest-energy coefficients;
note that in general for each object a different set of coefficients is used.}
\label{fig:visualComparisonFirstBest}
\end{figure*}
%%%%%%%%%%%%%%%%%%%%%%%%%%%%%%%%%%%%%%%%%%%%%%%%%%%%%%%%%%%%%%%%%%%%

We briefly position our work in the context of other similar approaches in the area. 
The majority of data-compression techniques for sequential data use the \emph{same} set of low-energy coefficients
whether using Fourier \cite{Agrawal93,rafiei98}, Wavelets \cite{adaFuWavelet,Eruhimov2006} or Chebyshev polynomials \cite{ChebyshevNg} as the orthogonal basis for representation and compression. Using the same set of orthogonal coefficients has several advantages: a) it is straightforward to compare the respective coefficients;
b) space partitioning and indexing structures (such as R-trees) can be directly used on the compressed data;
c) there is no need to store also the indices (position) of the basis functions to which the stored coefficients correspond.
The disadvantage is that both object reconstruction and distance estimation may be far from optimal for a given fixed compression ratio.

%Figure \ref{fig:compressionComparison} depicts two query examples
%and their approximation for various time-series
%compression techniques, such as Piecewise Aggregate Approximation (PAA) \cite{faloutsosPAA},
%Adaptive Piecewise Constant Approximation (APCA) \cite{keogh01}, Chebyshev Polynomials \cite{ChebyshevNg}
%and the use of first Fourier coefficients \cite{Agrawal93,rafiei98}. We observe that
%the sequence reconstruction error $e$ using the best Fourier
%coefficients (last column) is very low, indicating the merits
%of the proposed compression technique.

One can also record side information, such as the energy of the discarded coefficients, to better approximate the distance between compressed sequences by leveraging the Cauchy--Schwartz inequality \cite{WangMultiLevel}. This is shown in Figure \ref{fig:visualComparisonFirstBest}a). In~\cite{vlachosSDM09,vlachosTWEB}, the authors advocated the use of high-energy coefficients and side information on the discarded coefficients for weblog sequence
repositories; in that setting one of the sequences was
compressed, whereas the query was uncompressed, i.e., all coefficients were available as illustrated in Figure \ref{fig:visualComparisonFirstBest}b).
This work examines the most general and challenging case when both objects
are compressed. In such case, we record a (generally) different set of high-energy coefficients and also store
aggregate side information, such as the energy of the omitted data; this is depicted  in Figure \ref{fig:visualComparisonFirstBest}c).
We are not aware of any previous art addressing this problem to derive either optimal or suboptimal bounds on distance estimation.

The above approaches consider determining distance estimation in the compressed domain.
There is also a big body of work that considers probabilistic distance estimation
via low-dimensional embeddings. Several projection techniques for dimensionality reduction can preserve the geometry of the points \cite{dasgupta2000experiments,calderbank2009compressed}. These results heavily depend on the  work of Johnson and Lindenstrauss \cite{johnson1984extensions}, according to which any set of points can be projected onto a logarithmic (in the cardinality of the data points) dimensional subspace, while still retaining the relative distances between the points, thus preserving an approximation of their nearest neighbors \cite{indyk2007nearest,ailon2006approximate} or clustering \cite{boutsidis2010random,cardoso2012iterative}. Both random \cite{achlioptas2001database} and deterministic \cite{bingham2001random} constructions have been proposed in the literature. %Thus, by controlling the dimensionality of the subspace, one can compress the data efficiently while preserving the geometry.

This paper extends and expands the work of \cite{SIAM12}. Here we include additional experiments that
show the performance of our methodology for $k$-NN-search, and $k$-Means clustering directly
in the compressed domain. We also compare our approach with the performance of Principal Components and Random Projection techniques
(both in the traditional and in the compressive sensing setting).
Finally, we also conduct experiments using other orthonormal bases (namely, wavelets) to demonstrate the generality of our technique. 
In the experimental section of this work, we compare our methodology to 
both deterministic and probabilistic techniques.

%This paper extends the work of \cite{SIAM12}. Here we include additional experiments that
%show the performance of our methodology for kNN-search, and k-Means clustering directly
%in the compressed domain. We also compare with the performance of Principal Components and Random Projection techniques.
%Finally, compared to the original work, we also conduct experiments using other orthonormal bases (wavelets)
%to demonstrate the generality of our technique.

%%%%%%%%%%%%%%%%%%%%%%%%%%%%%%%%%%%%%%%%%%%%%%%%%%%%%%%%%%%%%%%%%%%%%%%%%%%%%
% Searching data
%%%%%%%%%%%%%%%%%%%%%%%%%%%%%%%%%%%%%%%%%%%%%%%%%%%%%%%%%%%%%%%%%%%%%%%%%%%%%

%%%%%%%%%%%%%%%%%%%%%%%%%%%%%%%%%%%%%%%%%%%%%%%%%%%%%%%%%%%%%%%%%%%%%%%%%%%%%

\section{Searching Data Using Distance Estimates}

\noindent
%\textbf{Bracketing the distance estimates:}
%\textbf{upper/lower bounds}:

We consider a database $\DB$ that stores sequences as $N$-dimensional complex vectors
$\x^{(i)} \in \C^N, i =1,\dots,V$. A search problem that we examine is abstracted as follows:
a user is interested in finding the $k$ most `similar' sequences
to a given query sequence $\q \in \DB$, under a certain distance metric $d(\cdot,\cdot) : \C^{N\times N} \to \R_+$.
This is an elementary, yet fundamental operation known
as $k$-Nearest-Neighbor ($k$-NN)-search. It is a core function in database-querying, data-mining and machine-learning algorithms including classification (NN classifier), clustering, etc.

In this paper, we focus on the case where $d(\cdot, \cdot)$ is the Euclidean distance.
We note that other measures, e.g., time-invariant matching, can be formulated
as Euclidean distance on the periodogram \cite{vlachosSDM05}. Correlation can also be expressed
as an instance of Euclidean distance on properly normalized sequences \cite{MueenSigmod2010}. 
Therefore, our approach is applicable on a wide range of distance measures with little or no modification.
However, for ease of exposition, we focus on the Euclidean distance
as the most used measure in the literature~\cite{keoghBenchmark}.

Search operations can be quite costly, especially for cases when
the dimensionality $N$ of data is high: sequences
need to be retrieved from the disk for comparison against the query $\q$.
An effective way to mitigate this is to retain a compressed
representation of the sequences to be used as an initial
pre-filtering step. The set of compressed sequences could be small enough
to keep in-memory, hence enabling a significant performance speedup.
In essence, this is a \textit{multilevel} filtering mechanism.
With only the compressed sequences available, we obviously cannot infer the exact
distance between the query $\q$ and a sequence $\x^{(i)}$ in the database.
However, it is still plausible to obtain \textit{lower} and \textit{upper bounds} of the distance. 
Using these bounds, one might request a superset of the $k$-NN answers, which will be then verified using the uncompressed
sequences that will need to be fetched and compared with the query, so that the exact distances can be computed.
Such filtering ideas are used in the majority of the data-mining literature
for speeding up search operations \cite{Agrawal93,rafiei98,keogh2009}.

%Later in the course of the paper we explain how our methodology can be tailored
%to work directly on the compressed data, for typical analytics operations such as $k$-NN search, or $k$-Means clustering operation.

%%%%%%%%%%%%%%%%%%%%%%%%%%%%%%%%%%%%%%%%%%%%%%%%%%%%%%%%%%%%%%%%%%%%
% Notation
%%%%%%%%%%%%%%%%%%%%%%%%%%%%%%%%%%%%%%%%%%%%%%%%%%%%%%%%%%%%%%%%%%%%

\section{Notation}

\begin{figure*}[!htpb]
\centerline{\includegraphics[width=\linewidth]{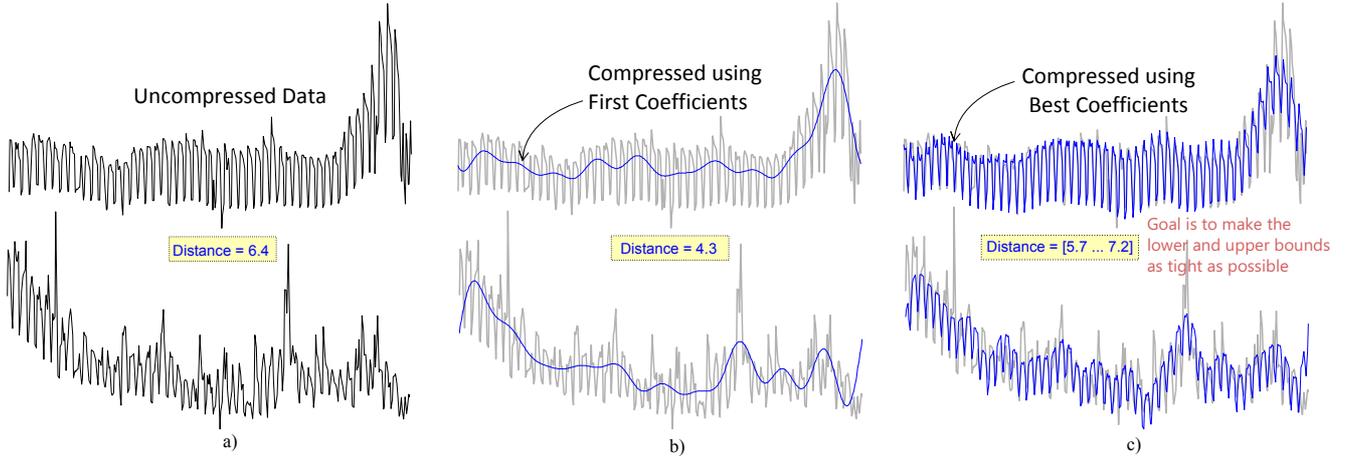}}
\vspace{-\baselineskip}
\caption{Motivation for using the high-energy (best) coefficients for compression. Using the best 10 coefficients
(c) results in significantly better sequence approximation than when using the first coefficients (b).}
\label{fig:overview}
\end{figure*}
%%%%%%%%%%%%%%%%%%%%%%%%%%%%%%%%%%%%%%%%%%%%%%%%%%%%%%%%%%%%%%%%%%%%

%\textcolor{red}{Tasos: I removed the E stuff because it was complicated given that we don't use it in our analysis. If you need it back, it is in the .tex commented}
Consider an $N$-dimensional sequence $\x = [x_1~ x_2~ \dots ~ x_N]^T \in \R^N$.
For compression purposes, $\x$ is first transformed using a sparsity-inducing (i.e., compressible) basis $\mathcal{F}(\cdot)$ in $\mathbb{R}^N$ or $\mathbb{C}^N$, such that $\X = \mathcal{F}(\x)$. We denote the \emph{forward} linear mapping $\x \to \X$ by $\F$, whereas the inverse linear map $\X \to \x$ is denoted by $\IF$, i.e., we say $\X =\F(\x)$ and $\x =\IF(\X)$.
A nonexhaustive list of invertible linear transformations includes Discrete Fourier Transform (DFT), Discrete Cosine Transform, 
Discrete Wavelet Transform, etc. %By the fact that we work on a finite dimension-space, we have that any orthornormal basis of dimension $N$ is complete, so that Parseval's identity applies,  that is to say $\F$ is an $L_2$-isometry.

As a running example for this paper, we assume that a sequence is compressed using DFT.
In this case, the basis represent sinusoids of different frequencies, and the pair $(\x,\X)$ satisfies
\vspace{-1.6\baselineskip}
\begin{center}
$$ X_l = \frac{1}{\sqrt{N}} \sum_{k=1}^N x_k e^{i 2 \pi (k-1)(l-1)/N}, \  l=1, \ldots, N$$
$$ x_k = \frac{1}{\sqrt{N}} \sum_{l=1}^N X_le^{i 2 \pi (k-1)(l-1)/N}, \  k=1, \ldots, N$$
\end{center}
\vspace{-.4\baselineskip}
\noindent
where $i$ is the imaginary unit $i^2=-1$. 

Given the above, we assume the $L_2$-norm as the distance between two sequences $\x$, $\q$, 
which can easily be translated into distance in the frequency domain because of Parseval's theorem \cite{oppenheim1999discrete}:
$$d(\x,\q) := \norm{\x - \q} =   \norm{\X - \Q}\nonumber$$

In the experimental section, we also show applications of our methodology when
wavelets are used as the signal decomposition transform.

\section{Motivation}

The choice of which coefficients to use has a direct impact on the data approximation quality.
It has long been recognized that sequence approximation is indeed superior when using
high-energy coefficients \cite{keogh01,vlachosSDM09}; in fact, using high-energy coefficients corresponds to optimal $L_2$ compression--as indicated by Parseval's theorem--hence, we also use the term `'best coefficients'' to refer to the high-energy coefficients maintained during compression; see also Figure \ref{fig:overview} for an illustrative example - However, a barrier
still has to be overcome when using optimal $l_2$ compression: the efficiency of solution for distance estimation.

Consider a sequence represented using its high-energy coefficients.
Then, the compressed sequence will be described
by a set of $C_x$ coefficients that hold the largest energy.
We denote the vector describing the positions of those
coefficients in $\X$ as $p^+_x$, %\footnote{In the case when the coefficients are in complex conjugate pairs, then we need to store only one of them in $p^+_x$.}
and the positions of the
remaining ones as $p^-_x$ (that is, $p^+_x \cup p^-_x = \lbrace 1, \dots, N\rbrace$).
For any sequence $\X$, we store the
vector $\X(p^+_x)$ in the database, which we denote simply by $\X^+ := \{X_i\}_{i\in p^+_x}$.
We denote the vector of discarded coefficients by $\X^- := \{X_i\}_{i\in p^-_x}$.
In addition to the best coefficients of a sequence, we can
also record one additional value for the energy of the compression error,
$e_x = \norm{\X^-}^2$, i.e., the sum of squared magnitudes of
the omitted coefficients.

Then, one needs to solve the following minimization (maximization) problem
for calculating the lower (upper) bounds on the distance between two sequences based on their compressed versions:
\begin{align}\label{opt_d}
\begin{aligned}
	\minmax_{\X^{-} \in \C^{|p^-_x|} ,~\Q^{-} \in \C^{|p^-_q|}} & \tab \norm{\X-\Q}\\
\mbox{s.t. } \ \ \  & |X_l^{-}| \le \min_{j\in p^+_x} |X_j|, \ \ \forall l \in p^-_x\\
& |Q_l^{-}| \le \min_{j\in p^+_q} |Q_j|, \ \ \forall l \in p^-_q\\
& \sum_{l\in p^-_x} |X_l^{-}|^2 = e_x , \sum_{l\in p^-_q} |Q_l^{-}|^2 = e_q
\end{aligned}
\end{align}

\noindent
%where the decision variables are the vectors $\X^-,\Q^-$. 
The inequality constraints are due to the fact that we use the high-energy components for the compression.
Hence, any of the omitted components must have an energy lower than the minimum energy
of any kept component.

%The optimization problem presented is a complex-valued program: the minimization problem can easily be recast as an \textcolor{red}{Tasos: equivalent} \emph{convex program} by relaxing the equality constraints into $\le$ inequality constraints, as will be justified.
%Hence, it can be solved efficiently with numerical methods~\cite{BV04}, cf. Sec. \ref{algo:numerical}.
%However, as we show in the experimental section, evaluating an instance of this problem just for a pair
%of sequences is not efficient in practice. %: it requires approximately one second on a modern CPU.
%Therefore, although a solution can be found \textit{numerically}, it is generally costly and not tailored for large mining tasks where we would like to evaluate thousands or millions of such lower/upper bounds on compressed sequences.

The optimization problem presented is a complex-valued program: we show a single real-valued \emph{convex program} that is \emph{equivalent} to both the minimization and maximization problems. %by relaxing the equality constraints into $\le$ inequality constraints, as will be justified.
This program can be solved efficiently with numerical methods~\cite{BV04}, cf. Sec. \ref{algo:numerical}.
However, as we show in the experimental section, evaluating an instance of this problem is not efficient in practice, even for a single pair
of sequences. %: it requires approximately one second on a modern CPU.
Therefore, although a solution can be found \textit{numerically}, it is generally costly and not suitable for large mining tasks,
where one would like to evaluate thousands or millions of lower/upper bounds on compressed sequences.

In this paper, we show how to solve this problem \textbf{analytically} by exploiting the derived optimality conditions. 
In this manner we can solve the problem in a fraction of the time required by numerical methods.
We solve this problem as a `double-waterfilling' instance.
Vlachos et al. have shown how the optimal lower and upper
distance bounds between a compressed and an uncompressed sequence
can be relegated to a single waterfilling problem \cite{vlachosSDM09}.
We revisit this approach as it will be used as a building block for our solution.
In addition, we later derive optimality properties for our solution.

%More importantly, we show in our experiments
%that our approach is more than two orders of magnitude faster than numerical solutions.

%%%%%%%%%%%%%%%%%%%%%%%%%%%%%%%%%%%%%%%%%%%%%%%%%%%%%%%%%%%%%%%%%%%%
% Convex
%%%%%%%%%%%%%%%%%%%%%%%%%%%%%%%%%%%%%%%%%%%%%%%%%%%%%%%%%%%%%%%%%%%%

%%%%%%%%%%%%%%%%%%%%%%%%%%%%%%%%%%%%%%%%%%%%%%%%%%%%%%%%%%%%%%%%%%%%%%%%%%%%%%%%%%%%%%%%%%%%5
\section{An Equivalent Convex Optimization Problem}
\label{real-opt}

For ease of notation, we consider the partition $\mathcal{P} = \{P_0,P_1,P_2,P_3\}$ of $\{1,\hdots,N\}$
(see Fig. \ref{fig:pSets}), where we set the following:

\begin{itemize}
	\item $P_0 = p^+_x\cap p^+_q$ are the common known components in two compressed sequences $\X,\Q$.
	\item $P_1 = p^-_x\cap p^+_q$ are the components unknown for $\X$ but known for $\Q$.
	\item $P_2 = p^+_x\cap p^-_q$ are the components known for $\X$ but unknown for $\Q$.
	\item $P_3 = p^-_x\cap p^-_q$ are the components unknown for both sequences.
\end{itemize}
%\vspace{-0.1cm}
%%%%%%%%%%%%%%%%%%%%%%%%%%%%%%%%%%%%%%%%%%%%%%%%%%%%%%%%%%%%%%%%%%%%
\begin{figure}[!h]
\centerline{\includegraphics[width=0.5\linewidth]{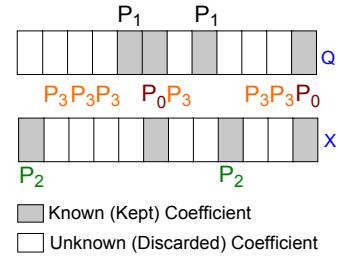}}
\vspace{-0.5\baselineskip}
\caption{Visual illustration of sets $P_0,P_1,P_2,P_3$ between two compressed objects.}
\label{fig:pSets}
\end{figure}
%%%%%%%%%%%%%%%%%%%%%%%%%%%%%%%%%%%%%%%%%%%%%%%%%%%%%%%%%%%%%%%%%%%%

Using the standard notation $\x^*$ for the \emph{conjugate transpose} of a complex vector $\x$, $\Re\lbrace \cdot \rbrace$ to denote the real part of a complex number, and considering all vectors as column vectors, we have that the squared Euclidean distance is given by:
\begin{eqnarray*}
\norm{\x-\q}^2 &=& \norm{\X-\Q}^2 = (\X-\Q)^*(\X-\Q)\nonumber\\
&=& \norm{\X}^2 + \norm{\Q}^2 - 2\X^*\Q\nonumber\\
&=& \norm{\X}^2 + \norm{\Q}^2 - 4\sum_{i=1}^N\Re\{X_i Q_i\}\nonumber\\
&=& \norm{\X}^2 + \norm{\Q}^2 - 4(\!\!\sum_{l\in P_0} \!\!\Re\{X_l Q_l\}\nonumber\\
& & + \!\!\sum_{l\in P_1} \!\!\Re\{X_l Q_l\} + \!\!\sum_{l\in P_2} \!\!\Re\{X_l Q_l\}\nonumber\\
& & + \!\!\sum_{l\in P_3} \!\!\Re\{X_l Q_l\}).
\end{eqnarray*}

Note that $\norm{\X}, \norm{\Q}$ can be inferred by summing the squared magnitudes of the known coefficients with the energy of the compression error. Also, the term $\sum_{l\in P_0} \!\!\Re\{X_l Q_l\}$ is known,
whereas the last three sums are unknown. Considering the polar form, i.e.,
absolute value $|\cdot|$ and argument $\text{arg}(\cdot)$
$$X_l = |X_l|e^{i\text{arg}(X_l)}, \ \ Q_l = |Q_l|e^{i\text{arg}(Q_l)},$$
\noindent
we have that the decision variables are vectors $|X_l|, \text{arg}(X_l), l\in p^-_x$ as well as $|Q_l|, \text{arg}(Q_l), l\in p^-_q$.
Observe that for $x,y \in \C$ with $|x|,|y|$ known, we have that
$-|x||y|\le\Re\{xy\}\le|x||y|$, where the upper bound is attained when $\text{arg}(x)+\text{arg}(y) = 0$ and the lower bound when $\text{arg}(x)+\text{arg}(y)=\pi$. %In addition, the constraint set \emph{does} now depend on the arguments.

Therefore, both problems (\ref{opt_d}) boil down to the real-valued optimization problem
\begin{align}
\label{opt2}
\min & \tab -\!\!\sum_{l\in P_1} \!\!a_l b_l - \!\!\sum_{l\in P_2} \!\!a_l b_l - \!\!\sum_{l\in P_3} \!\! a_l b_l\\
\mbox{s.t.} &  \tabbb 0\le a_l  \le A, \ \ \forall l \in p^-_x\nonumber\\
& \tabbb 0 \le b_l \le B, \ \ \forall l \in p^-_q\nonumber\\
& \tabbb \sum_{l\in p^-_x} a_l^2 \le e_x\nonumber\\
& \tabbb \sum_{l\in p^-_q} b_l^2 \le e_q\nonumber ,
\end{align}
\noindent
where $a_l, b_l$ represent $|X_l|, |Q_l|$, respectively, and $A:=\min_{j\in p^+_q} |X_j|, B:=\min_{j\in p^+_q} |Q_j|$.
Note also that we have relaxed the equality constraints to inequality constraints
as the objective function of (\ref{opt2}) is decreasing in all $a_i, b_i$, so the optimum of (\ref{opt2}) has to satisfy
the relaxed inequality constraints  with equality, because of the elementary property that $|p^-_x|A^2 \ge e_x,
|p^-_q|B^2 \ge e_q$. Recall that in the first sum only $\{a_i\}$ are known and in the second only $\{b_i\}$, whereas in the third all variables are unknown.

We have reduced the original problem to a single optimization program, which, however, is not convex unless $p^-_x\cap p^-_q = \emptyset$. It is easy to check that the constraint set is convex and compact; however, the bilinear function $f(x,y) := xy$ is convex in each argument alone, but \emph{not} jointly. We consider the re-parametrization of the decision variables $z_i = a_i^2$ for $i\in p^-_x$, and $y_i = b_i^2$ for $i\in p^-_q$, we set $Z := A^2, Y := B^2$ and get the equivalent problem:
\begin{align}
\label{opt3}
\min & \tab -\!\!\sum_{i\in P_1} \!\!b_i \sqrt{z}_i - \!\!\sum_{i\in P_2} \!\!a_i \sqrt{y}_i  - \!\!\sum_{i\in P_3} \!\! \sqrt{z}_i\sqrt{y}_i\\
\mbox{s.t.} & \tabbb 0\le z_i  \le Z, \ \ \forall i \in p^-_x\nonumber\\
& \tabbb 0 \le y_i \le Y, \ \ \forall i \in p^-_q\nonumber\\
& \tabbb \sum_{i\in p^-_x} z_i \le e_x\nonumber\\
& \tabbb \sum_{i\in p^-_q} y_i \le e_q\nonumber \;.
\end{align}

The necessary and sufficient conditions on optimality are presented in appendix \ref{app:properties}. 

\medskip
\noindent \textbf{Optimal lower/upper bounds:} Let us denote the optimal value of (\ref{opt3}) by $v_{\opt} \le 0$. Then the optimal lower bound (LB) and upper bound (UB) for the distance estimation problem under consideration are given by
%\vspace{-0.1cm}
\begin{eqnarray}\label{bounds}
\tab \LB &=&   \sqrt{\hat{D}+ 4v_{\opt}}\label{LB}\\
\tab \UB &=&   \sqrt{\hat{D} - 4v_{\opt}}\label{UB}\\
\hat{D} &:=& \norm{X}^2 + \norm{Q}^2 - 4\!\!\sum_{l\in P_0}\!\!\Re\{X_l Q_l\} \;.  \nonumber
\end{eqnarray}
%The length of the confidence interval for the distance, i.e., $[\LB,\UB]$ is equal to $8v_{\opt}$.

\begin{remark}\label{rem:lb}
Interestingly, the widely used convex solver cvx~\cite{cvx} cannot directly address \eqref{opt3}--the issue is that it fails to recognize convexity of the objective functions. For a numerical solution, we consider solving a relaxed version of the minimization problem \eqref{opt_d}, where equality constraints are replaced by $\le$ inequalities. We note that this problem is not equivalent to \eqref{opt_d}, but still provides a viable lower bound. An upper bound can be obtained by (cf. \eqref{LB}, \eqref{UB}):
$$ \UB = \sqrt{2\hat{D} - \LB^2}.$$
We test the tightness of such approach in the experimental section \ref{sec:experiments}. 
\end{remark}
%The length of the confidence interval for the distance, i.e., $[\LB,\UB]$ is equal to $8v_{\opt}$.

%%%%%%%%%%%%%%%%%%%%%%%%%%%%%%%%%%%%%%%%%%%%%%%%%%%%%%%%%%%%%%%%%%%%%%%%%%%%%%%%%%%%%%%%%%%%%%%%%%%%%%%%%%%%%%%%%%%%%%%%%

%%%%%%%%%%%%%%%%%%%%%%%%%%%%%%%%%%%%%%%%%%%%%%%%%%%%%%%%%%%%%%%%%%%%%%%%%%%%%%%%%%%%%%%%%%%%%%%%%%%%%%%%%%%%%%%%%%%%%%%%%

\section{Exact Solutions}

In this section, we study algorithms for obtaining exact solutions for the optimization problem (\ref{opt3}).
By \emph{exact}, we mean that the optimal value is obtained in a finite number of computations as opposed to when using a numerical scheme for convex optimization. In the latter case, an approximate solution is obtained by means of an iterative scheme which converges with finite precision.
Before addressing the general problem, we briefly recap a special case that was dealt with in~\cite{vlachosSDM09},
where the sequence $\Q$ was assumed to be uncompressed. In this case, an exact solution is provided via the \emph{waterfilling} algorithm,
which constitutes a key building block for obtaining exact solutions to the general problem later on.
We then proceed to study the properties of optimal solutions; our theoretical analysis gives rise to an \emph{exact} algorithm, cf. Sec. \ref{algo:exact}.

\subsection{Waterfilling Algorithm.}

The case that $\Q$ is uncompressed is a special instance of our problem with $p^-_q=\emptyset$, whence also $P_2=P_3=\emptyset$. The problem is strictly convex, and (\ref{O}) yields
%
%%In such case we deal with the optimization problem (\ref{opt3}) becomes:
%%\begin{subequations}
%%\begin{align}\label{opt_old}
%%\min & \tabb  - \sum_{i\in p^-_x} b_i \sqrt{z}_i\\
%%\mbox{s.t.} & \tabbb 0\le z_i  \le Z, \ \ \forall i \in p^-_x\label{ineq1}\\
%%& \tabbb \sum_{i\in p^-_x} z_i \le e_x\label{ineq2}
%%\end{align}
%%\end{subequations}
%%This is a strictly convex program, hence has a unique optimal solution.
%%Relaxing all constraints except for the non-negativity constraints for $x_i$ we have the Lagrangian
%%\begin{equation}
%%L(\z, \bm{\lambda}, \mu) := - \sum_{i\in p^-_x} b_i \sqrt{z}_i + \sum_{i\in p^-_x} \lambda_i (z_i - Z) + \mu (\sum_{i\in p^-_x} z_i - e_x)
%%\end{equation}
%%where $\z:=\{z_i\}_{i\in p^-_x}$ is the vector of the decision variables, $\bm{\lambda}:=\{\lambda_i\}_{i\in p^-_x}$ is the vector of the Lagrange multipliers corresponding to
%%(\ref{ineq1}) and $\mu$ is the Lagrange multiplier for (\ref{ineq2}). The KKT conditions for the problem are
%%\begin{subequations}
%%\begin{align}
%%0\le z_i  \le Z, \ \sum_{i\in p^-_x} z_i \le e_x, & \tab \text{(primal feasibility)}\\
%%\lambda_i \ge 0, \ \mu \ge 0, & \tab \text{(dual feasibility)}\\
%%\lambda_i(z_i - Z) = 0, \ \mu (\sum_{i\in p^-_x} z_i - e_x) = 0, & \tab \text{(complementary slackness)}\\
%%\frac{\partial L}{\partial z_i} = -\frac{1}{2} \frac{b_i}{z_i} + \lambda_i + \mu = 0, & \tab \text{(optimality)}\label{kkt_opt}
%%\end{align}
%%\end{subequations}
%%The last condition yields:
\begin{equation}
%z_i = (\frac{b_i}{2(\lambda + \alpha_i)})^2
z_i = \big(\frac{b_i}{\lambda + \alpha_i}\big)^2 \Leftrightarrow a_i = \frac{b_i}{\lambda + \alpha_i} 
\end{equation}
%or equivalently
%\begin{equation}
%%a_i = \frac{b_i}{2(\lambda + \alpha_i)}
%a_i = \frac{b_i}{\lambda + \alpha_i}
%\end{equation}
In such a case, the strict convexity guarantees the existence of a \emph{unique} solution satisfying the KKT conditions as given by the \emph{waterfilling} algorithm, cf. Fig. \ref{fig:waterfill}.
The algorithm progressively increases the unknown coefficients $a_i$ until saturation, i.e., until they reach $A$, in which case they are fixed. The set $C$ is the set of non saturated coefficients at the beginning of each iteration, whereas $R$ denotes the ``energy reserve,'' i.e., the energy that can be used to increase the non saturated coefficients; $v_{opt}$ denotes the optimal value.
%An efficient solution to satisfy the necessary and sufficient KKT conditions can be obtained via the \emph{water-filling} algorithm which is shown next.

\begin{figure}[th]
\fbox{
\begin{minipage}[l]{8cm}
\noindent \textbf{Waterfilling algorithm}
\\
\noindent \textbf{Inputs:} $\{b_i\}_{i\in p^-_x}, e_x, A$
\\
\noindent \textbf{Outputs:} $\{a_i\}_{i\in p^-_x}, \lambda, \{\alpha_i\}_{i\in p^-_x}, v_{\opt}, R$

\begin{enumerate}
\item Set $R=e_x, \ C = p^-_x$

\item \textbf{while} $R>0$ and $C\ne \emptyset$ do

%\item \tab set $\lambda = \frac{1}{2}\frac{\sum_{i\in C}b_i^2}{R}, \ \  a_i = \frac{a_i}{2\lambda}, \ i\in C$
\item \tab set $\lambda = \sqrt{\frac{\sum_{i\in C}b_i^2}{R}}, \ \  a_i = \frac{b_i}{\lambda}, \ i\in C$

\item \tab \textbf{if} for some $i\in C, \ a_i > A$ then

\item \tabb $a_i=A, \ C \leftarrow C - \{i\}$

\item \tab \textbf{else} \textbf{break};

\item \tab \textbf{end if}

%\item \tab $R = e_x - \sum_{i\in p^-_x - C}a_i^2$
\item \tab $R = e_x - (|p^-_x| - |C|)A^2$

\item \textbf{end while}

\item Set $v_{\opt} = -\sum_{i\in p^-_x} a_ib_i$ and
\begin{equation*}
\alpha_i =
\left\{
  \begin{array}{ll}
    0, & \hbox{if } a_i < A\\
%    \frac{b_i}{2A} - \lambda, & \hbox{if } a_i = A
    \frac{b_i}{A} - \lambda, & \hbox{if } a_i = A
  \end{array}
\right.
\end{equation*}

\end{enumerate}
\end{minipage}
}
\caption{Waterfilling algorithm for optimal distance estimation between a compressed and an uncompressed sequence}
\label{fig:waterfill}
\end{figure}
As a shorthand notation, we write $\mathbf{a} = \text{waterfill}(\mathbf{b},e_x,A)$.
Note that in this case the problem (\ref{opt2}) for $P_2=P_3=\emptyset$ is convex, so the solution can be obtained via the KKT conditions to (\ref{opt2}), which are different from those for the re-parameterized problem (\ref{opt3}); this was done in~\cite{vlachosSDM09}. The analysis and straightforward extensions are summarized in Lemma \ref{exact_lemma}.

\begin{lemma}[Exact solutions]\label{exact_lemma}~\\
\begin{enumerate}
\item If either $p^-_x = \emptyset$ or $p^-_q = \emptyset$ (i.e., when at least one of the sequences is uncompressed) we can obtain an exact solution to the optimization problem (\ref{opt2}) via the waterfilling algorithm.

\item If $P_3 = p^-_x\cap p^-_q = \emptyset$, i.e., when the two compressed sequences do not have any common unknown coefficients, the problem is decoupled in $\mathbf{a},\mathbf{b}$, and the waterfilling algorithm can be used separately to obtain exact solutions to both unknown vectors.%In such case, the optimization problem (\ref{opt2}) is decoupled into two problems one for finding $\{a_l\}_{l\in p^-x}$ and one for $\{b_l\}_{l\in p^-q}$, which can be solved separately via the water-filling algorithm.

\item If $P_1=P_2=\emptyset$, i.e., when both compressed sequences have the same discarded coefficients, the optimal value is simply equal to $-\sqrt{e_x}\sqrt{e_q}$, but there is no unique solution for $\mathbf{a},\mathbf{b}$.
\end{enumerate}
\end{lemma}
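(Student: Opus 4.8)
\textbf{Proof plan for Lemma \ref{exact_lemma}.}

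The plan is to handle the three cases separately, each following from a structural simplification of the optimization problem (\ref{opt2}) or (\ref{opt3}).

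For part (1), suppose without loss of generality $p^-_q = \emptyset$ (the other case is symmetric, by relabeling $\x \leftrightarrow \q$). Then $P_2 = p^+_x \cap p^-_q = \emptyset$ and $P_3 = p^-_x \cap p^-_q = \emptyset$, so the objective of (\ref{opt2}) reduces to $-\sum_{l\in P_1} a_l b_l$, where every $b_l$ is a known constant (since $l \in p^+_q$) and only $\{a_l\}_{l\in p^-_x}$ are decision variables. The constraints involving $\{b_l\}$ drop out, leaving $0 \le a_l \le A$ and $\sum_{l\in p^-_x} a_l^2 \le e_x$. First I would observe that this objective is linear in $\{a_l\}$ over a convex compact set, hence the problem is convex; moreover it is strictly convex after the re-parametrization that the waterfilling derivation uses (or one argues directly via the KKT conditions displayed just before the lemma, which give $a_i = b_i/(\lambda + \alpha_i)$). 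Then I would invoke the correctness of the waterfilling algorithm: it produces the unique point satisfying the KKT conditions, which by convexity is the global optimum. This is essentially the result of \cite{vlachosSDM09}, so the work here is just checking that the hypotheses reduce the general problem to exactly the instance waterfilling solves.

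For part (2), assume $P_3 = \emptyset$ but allow $P_1, P_2$ nonempty. The key observation is that with $P_3 = \emptyset$ the objective of (\ref{opt2}) splits as $-\sum_{l\in P_1} a_l b_l - \sum_{l\in P_2} a_l b_l$, where in the first sum the $b_l$ are known constants and the $a_l$ (for $l \in p^-_x$, since $P_1 \subseteq p^-_x$) are variables, while in the second sum the $a_l$ are known constants and the $b_l$ (for $l\in p^-_q$) are variables. Crucially, no term couples an unknown $a$ with an unknown $b$. The variable sets $\{a_l\}_{l\in p^-_x}$ and $\{b_l\}_{l\in p^-_q}$ appear in disjoint terms of the objective and in separate constraints ($\sum a_l^2 \le e_x$, $0\le a_l\le A$ versus $\sum b_l^2 \le e_q$, $0 \le b_l \le B$). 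Hence the program decouples into two independent subproblems, each of exactly the form handled in part (1): minimize $-\sum (\text{const}) \cdot (\text{var})$ subject to a box constraint and an $\ell_2$-ball constraint. I would then apply waterfilling to each — $\mathbf{a} = \text{waterfill}(\mathbf{b}|_{P_1}$ padded appropriately$, e_x, A)$ and analogously for $\mathbf{b}$ — noting that coefficients indexed outside $P_1$ (resp. $P_2$) have zero coefficient in the objective and so waterfilling can set them freely subject to feasibility; one should remark that any feasible assignment of those "free" coordinates is optimal, which is why the solution for the full vectors need not be unique even though the optimal value is.

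For part (3), assume $P_1 = P_2 = \emptyset$, so $p^-_x = p^-_q = P_3$ and the objective of (\ref{opt2}) is $-\sum_{l\in P_3} a_l b_l$ with \emph{both} factors variable. Here I would bound the objective using Cauchy--Schwarz: $\sum_{l} a_l b_l \le \sqrt{\sum_l a_l^2}\,\sqrt{\sum_l b_l^2} \le \sqrt{e_x}\sqrt{e_q}$, using the energy constraints. So $v_{\opt} \ge -\sqrt{e_x}\sqrt{e_q}$. For the matching upper bound on $-v_{\opt}$ (i.e., to show the value $-\sqrt{e_x e_q}$ is attained), I must exhibit a feasible point achieving equality in Cauchy--Schwarz, i.e., with $\mathbf{a}$ proportional to $\mathbf{b}$ and both energy constraints tight. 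The only subtlety — and the one place the box constraints $a_l \le A$, $b_l \le B$ can bite — is feasibility of such a proportional vector. I would use the elementary fact already invoked in the text, $|P_3| A^2 \ge e_x$ and $|P_3| B^2 \ge e_q$ (a consequence of the high-energy selection: each discarded coefficient has magnitude at most the smallest retained one, and there are $|P_3|$ of them), to conclude that the uniform choice $a_l = \sqrt{e_x/|P_3|} \le A$ for all $l$, and $b_l = \sqrt{e_q/|P_3|} \le B$, is feasible, makes $\mathbf{a}$ proportional to $\mathbf{b}$, and saturates both energy constraints; it therefore attains $-\sqrt{e_x}\sqrt{e_q}$. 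Non-uniqueness is then clear: any common permutation of the coordinates, or more generally any pair $(\mathbf{a},\mathbf{b})$ with $\mathbf{a} = c\,\mathbf{b}$, $\|\mathbf{a}\|^2 = e_x$, $\|\mathbf{b}\|^2 = e_q$, and $\mathbf{a},\mathbf{b}$ within the boxes, is also optimal, and when $|P_3| \ge 2$ there are infinitely many such points.

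The main obstacle is not any single deep step but rather the bookkeeping in part (2): one must carefully verify that with $P_3 = \emptyset$ there is genuinely no residual coupling between $\mathbf{a}$ and $\mathbf{b}$ — neither through the objective nor through a shared constraint — and handle cleanly the coordinates in $p^-_x \setminus P_1$ and $p^-_q \setminus P_2$ that contribute nothing to the objective, so that "apply waterfilling separately" is literally correct. In part (3) the only thing requiring (a small amount of) care is invoking the box-capacity inequalities to certify feasibility of the Cauchy--Schwarz equality case.
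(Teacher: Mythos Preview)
Your proposal is correct and matches the paper's approach: the paper dismisses parts (1) and (2) as ``obvious'' and proves part (3) exactly as you do, via Cauchy--Schwarz for the bound and the uniform allocation $a_l=\sqrt{e_x/|P_3|}$, $b_l=\sqrt{e_q/|P_3|}$ for attainment, with feasibility following from $|p^-_x|A^2\ge e_x$, $|p^-_q|B^2\ge e_q$. One small simplification for your part (2) bookkeeping: since $p^-_x = P_1\cup P_3$ and $p^-_q = P_2\cup P_3$, the hypothesis $P_3=\emptyset$ gives $p^-_x=P_1$ and $p^-_q=P_2$ exactly, so the ``free coordinates'' $p^-_x\setminus P_1$ and $p^-_q\setminus P_2$ you worry about are empty and there is nothing further to handle.
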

\begin{proof}
The first two cases are obvious. For the third one, note that it follows immediately from the Cauchy--Schwartz inequality that $-\sum_{l\in P_3}a_l b_l \ge -\sqrt{e_x}\sqrt{e_q}$, and in this is case this is also attainable. Just consider for example, $a_l = \sqrt{\frac{e_x}{|P_3|}}, b_l = \sqrt{\frac{e_q}{|P_3|}}$,
which is feasible because $|p^-_x|A^2 \ge e_x, |p^-_q|B^2 \ge e_q$, as follows by compression with the high-energy coefficients.  \hfill$\blacksquare$%\hfill$\square$
\end{proof}

We have shown how to obtain exact optimal solutions for special cases.
To derive efficient algorithms for the general case, we first study and establish some properties of the optimal solution of (\ref{opt3}). %Let us first define the maximum amount of energy that can be allocated to $\{a_l\}_{l\in P_1}, \{a_l\}_{l\in P_1}$ respectively $\bar{e}_x, \bar{e}_q$ :
%\begin{eqnarray}
%\bar{e}_x := \min(e_x,|P_1|A^2)\label{ebar_def}\\
%\bar{e}_q := \min(e_q,|P_2|B^2)\nonumber
%\end{eqnarray}

\begin{theorem} [Properties of optimal solutions]\label{prop_thm}~\\
Let an augmented optimal solution of (\ref{opt2}) be denoted by ($\mathbf{a}^{\opt},\mathbf{b}^{\opt}$); where $\mathbf{a}^{\opt} := \{a_i^{\opt}\}_{i \in p^-_x \cup p^-_q}$ denotes the optimal solution extended to include the known values $|X_l|_{l\in P_2}$, and $\mathbf{b}^{\opt} := \{b_i^{\opt}\}_{i \in p^-_x \cup p^-_q}$ denotes the optimal solution extended to include the known values $|Q_l|_{l\in P_1}$. Let us further define $e_x' = e_x - \sum_{l\in P_1} a_l^2, e_q' = e_q - \sum_{l\in P_2}b_l^2$. %with implicit dependency from $\mathbf{a},\mathbf{b}$ respectively.
We then have the following:
\begin{enumerate}
  \item The optimal solution satisfies\footnote{This has a natural interpretation as the \emph{Nash equilibrium} of a 2-player game~\cite{basar} in which Player 1 seeks to minimize the objective of (\ref{opt3}) with respect to $\mathbf{z}$, and Player 2 seeks to minimize the same objective with respect to $\mathbf{y}$.}
\begin{subequations}\label{nec_kkt_waterfill}
\begin{eqnarray}
\tab \mathbf{a}^{\opt} = \text{waterfill }(\mathbf{b}^{\opt},e_x,A),\\
\tab \mathbf{b}^{\opt} = \text{waterfill }(\mathbf{a}^{\opt},e_q,B).
\end{eqnarray}
\end{subequations}
In particular, it follows that $a_i^{\opt}>0$ iff $b_i^{\opt}>0$ and that $\{a_i^{\opt}\}, \{b_i^{\opt}\}$ have the same ordering. In addition, $\min_{l\in P_1}a_l \ge \max_{l\in P_3}a_l, \min_{l\in P_2}b_l \ge \max_{l\in P_3}b_l$.% and $e_x' \le \frac{1}{2}e_x, e_q' \le \frac{1}{2}e_q$.

\item If at optimality it holds that $e_x' e_q' >0$  %we have $e_x' \le |P_3|A^2, e_q' \le |P_3|b^2$ and
there exists a multitude of solutions. One solution $(\mathbf{a},\mathbf{b})$ satisfies $a_l = \sqrt{\frac{e_x'}{|P_3|}}, b_l = \sqrt{\frac{e_q'}{|P_3|}}$ for all $l\in P_3$, whence
\begin{subequations}
\begin{eqnarray}\label{lagrange_mult}
\lambda = \sqrt{\frac{e_q'}{e_x'}} \ \ \  \mu = \sqrt{\frac{e_x'}{e_q'}},\\
\alpha_i = \beta_i =0 \ \forall i \in P_3.
\end{eqnarray}
\end{subequations}
In particular, $\lambda \mu = 1$ and
the values $e_x',e_q'$ need to be solutions to the following set of nonlinear equations:
\begin{subequations}\label{non_lin}
\begin{eqnarray}
\tab \sum_{l\in P_1} \min\Big(b_l^2\frac{e_x'}{e_q'},A^2\Big) = e_x - e_x',\\
\tab \sum_{l\in P_2} \min\Big(a_l^2\frac{e_q'}{e_x'},B^2\Big) = e_q - e_q'.
\end{eqnarray}
\end{subequations}

\item At optimality, it is not possible to have $e_x'e_q' = 0$ unless $e_x' = e_q' = 0$.

\item Consider the vectors $\mathbf{a}, \mathbf{b}$ with $a_l = |X_l|, \ l\in P_2, a_l = |X_l|, \ l\in P_1$ and
\begin{subequations}\label{double_waterfill}
\begin{eqnarray}
\tab\tab \{a_l\}_{l\in P_1} =  \text{waterfill }(\{b_l\}_{l\in P_1},e_x,A),\\
\tab\tab \{b_l\}_{l\in P_2} = \text{waterfill }(\{a_l\}_{l\in P_2},e_q,B).
\end{eqnarray}
\end{subequations}
If $e_x \le |P_1|A^2$ and $e_q \le |P_2|B^2$,
whence $e_x' = e_q' = 0$, % and we further have\footnote{This condition can be always guaranteed by properly scaling the sequences as a pre-processing step.}
%\begin{equation}
%AB \le 1
%\end{equation}
then by defining $a_l=b_l=0$ for $l\in P_3$, we obtain a \emph{globally} optimal solution $(\mathbf{a},\mathbf{b})$. %This condition is also necessary for $(\mathbf{a},\mathbf{b})$ to be optimal in this case.
\end{enumerate}
\end{theorem}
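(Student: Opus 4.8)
The plan is to establish the four claims in order, leaning on the KKT conditions of \eqref{opt3} (in appendix \ref{app:properties}) and on the waterfilling characterization of Lemma \ref{exact_lemma}. For claim (1), the idea is a fixed-point / coordinate-wise argument: fixing $\mathbf{b}^{\opt}$, the subproblem in $\mathbf{z}$ (equivalently $\mathbf{a}$) is exactly the strictly convex problem solved by the waterfilling algorithm with effective energy budget $e_x$, since the known values $|X_l|_{l\in P_2}$ enter the objective linearly as coefficients and the terms in $P_1\cup P_3$ are precisely of the form $b_i\sqrt{z_i}$. Hence any optimal $\mathbf{a}^{\opt}$ must coincide with $\text{waterfill}(\mathbf{b}^{\opt},e_x,A)$, and symmetrically for $\mathbf{b}^{\opt}$; this is the Nash/best-response reformulation. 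From the explicit waterfilling formula $a_i = b_i/(\lambda+\alpha_i)$ one reads off that $a_i^{\opt}>0 \iff b_i^{\opt}>0$ and that the two vectors are similarly ordered (larger $b_i$ gives larger $a_i$, with saturation respecting the order). The inequalities $\min_{l\in P_1}a_l \ge \max_{l\in P_3}a_l$ and the analogue for $P_2$ then follow because the $P_1$ coordinates of $\mathbf{b}^{\opt}$ are the \emph{known} (large, i.e. $\ge B$... more precisely the retained/high-energy) magnitudes $|Q_l|$ while the $P_3$ coordinates are themselves waterfilled and hence bounded by the threshold, so they cannot exceed the $P_1$ ones after applying waterfilling in $\mathbf{a}$.

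For claim (2), assume $e_x'e_q'>0$. On the set $P_3$ the residual problem is $\min -\sum_{l\in P_3}\sqrt{z_l}\sqrt{y_l}$ subject to $\sum z_l\le e_x'$, $\sum y_l\le e_q'$ and the box constraints; by Cauchy--Schwartz (as in Lemma \ref{exact_lemma}(3)) the optimum is $-\sqrt{e_x'}\sqrt{e_q'}$, attained at the flat allocation $a_l=\sqrt{e_x'/|P_3|}$, $b_l=\sqrt{e_q'/|P_3|}$, which is feasible because compression with high-energy coefficients guarantees $|p^-_x|A^2\ge e_x$ etc. Plugging this flat solution into the stationarity conditions of \eqref{opt3} for $l\in P_3$ — where the box constraints are inactive, so $\alpha_i=\beta_i=0$ — and solving the two first-order equations gives $\lambda=\sqrt{e_q'/e_x'}$, $\mu=\sqrt{e_x'/e_q'}$, hence $\lambda\mu=1$. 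Finally, substituting the waterfilling expressions for the $P_1$ coordinates of $\mathbf{a}$ (namely $a_l=\min(b_l/\lambda,A)=\min(b_l\sqrt{e_x'/e_q'},A)$) into the energy budget $\sum_{l\in P_1}a_l^2 = e_x - e_x'$ yields the first equation of \eqref{non_lin}, and symmetrically the second; note we square to get $a_l^2 = \min(b_l^2 e_x'/e_q', A^2)$.

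For claim (3), suppose for contradiction that exactly one of $e_x',e_q'$ is zero, say $e_x'=0<e_q'$. Then all of $\mathbf{a}$'s residual energy has been consumed by the $P_1$ coordinates, so $a_l=0$ for every $l\in P_3$; but then the $P_3$ terms $\sqrt{z_l}\sqrt{y_l}$ contribute nothing to the objective, so the $y_l$ for $l\in P_3$ are free and in particular can be taken $0$, contradicting $e_q'>0$ being \emph{used} — more carefully, one shows that having $e_q'>0$ allocated to $P_3$ while $e_x'=0$ is not consistent with $\mathbf{b}^{\opt}=\text{waterfill}(\mathbf{a}^{\opt},e_q,B)$ from claim (1), since waterfilling against an $\mathbf{a}$ that is zero on $P_3$ would push all of $e_q$ onto $P_2$, giving $e_q'=0$. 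Hence $e_x'=e_q'=0$ whenever $e_x'e_q'=0$. This is the step I expect to be the main obstacle: making the "residual energy is forced onto the other block" argument fully rigorous requires carefully tracking the waterfilling output and the coupling between the two best-response maps, rather than just the Cauchy--Schwartz bound.

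For claim (4), the hypotheses $e_x\le|P_1|A^2$ and $e_q\le|P_2|B^2$ mean the waterfilling in \eqref{double_waterfill} can absorb the entire budgets $e_x,e_q$ within $P_1$ and $P_2$ respectively, so $e_x'=e_q'=0$ and setting $a_l=b_l=0$ on $P_3$ is consistent. The plan is to verify this candidate satisfies the KKT conditions of \eqref{opt3}: stationarity on $P_1$ and $P_2$ holds because those blocks are each an isolated strictly convex waterfilling problem (the $P_3$ terms vanish since the other factor is $0$), and on $P_3$ the objective gradient is $0$ (both $\sqrt{z_l}$ and $\sqrt{y_l}$ appear multiplied by a vanishing partner), so the complementary slackness with $z_l=y_l=0$ and the multipliers from the two waterfilling calls is satisfied. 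Since \eqref{opt3} has a convex feasible set and, after the reparametrization, a convex objective (as noted in the text, $\sqrt{z}\sqrt{y}$ is jointly concave so $-\sqrt{z}\sqrt{y}$ is convex), the KKT conditions are sufficient for global optimality, giving the claim. I would close by remarking that claims (2)–(4) together furnish a constructive dichotomy — either reduce to the decoupled double-waterfilling of (4), or solve the small nonlinear system \eqref{non_lin} of (2) — which is exactly what the exact algorithm of Sec. \ref{algo:exact} exploits.
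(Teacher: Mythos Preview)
Your approach for parts (1)--(3) matches the paper's proof essentially line for line: the coordinate-wise best-response characterization for (1), Cauchy--Schwartz plus the KKT equations at the flat $P_3$ allocation for (2), and for (3) the observation that $\mathbf{b}^{\opt}=\text{waterfill}(\mathbf{a}^{\opt},e_q,B)$ with $\mathbf{a}^{\opt}$ vanishing on $P_3$ forces $\mathbf{b}^{\opt}$ to vanish there too. (You flag (3) as the main obstacle, but the paper dispatches it in one line by appealing to (1); the real work is in (4).)

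The genuine gap is in part (4). Your plan is to check KKT at the candidate and conclude by convexity, asserting that ``on $P_3$ the objective gradient is $0$ (both $\sqrt{z_l}$ and $\sqrt{y_l}$ appear multiplied by a vanishing partner).'' But the term $-\sqrt{z_l}\sqrt{y_l}$ is \emph{not differentiable} at $(z_l,y_l)=(0,0)$: along either coordinate axis it is identically zero, yet along the diagonal $(z_l,y_l)=(t,t)$ it equals $-t$, so the directional derivative there is $-1$. Hence the smooth KKT stationarity condition~(\ref{O}) simply does not apply at this boundary point, and the ``gradient equals zero'' claim is false as a directional statement. The paper explicitly notes this nondifferentiability and instead performs a perturbation analysis: for any feasible direction that removes energies $\{\epsilon_i\}$ from $P_1\cup P_2$ and redistributes them into $P_3$, it bounds the first-order change from below by (after Cauchy--Schwartz and setting $\epsilon=\sqrt{\epsilon_1/\epsilon_2}$) a quadratic $g_2(\epsilon)=\frac{b_i}{A}\epsilon^2-2\epsilon+\frac{a_j}{B}$ whose discriminant $4\big(1-\frac{a_j b_i}{AB}\big)$ is nonpositive because $b_i\ge B$ on $P_1$ and $a_j\ge A$ on $P_2$. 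This is precisely what rules out the possibility that pushing energy \emph{simultaneously} into $z_l$ and $y_l$ on $P_3$ decreases the objective; your KKT-verification plan does not cover this direction and therefore does not establish optimality.
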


\begin{proof}
See appendix \ref{app:proof}.
\end{proof}

\begin{remark} One may be tempted to think that an optimal solution can be derived by waterfilling for the coefficients of $\{a_l\}_{l \in P_1},\{b_l\}_{l \in P_2}$ separately, and then allocating the remaining energies $e_x',e_q'$ to the coefficients in $\{a_l,b_l\}_{l\in P_3}$ leveraging the Cauchy--Schwartz inequality, the value being $-\sqrt{e_x'}\sqrt{e_q'}$. However, the third and fourth parts of Theorem \ref{prop_thm} state that this is \emph{not} optimal unless $e_x'=e_q'=0$.
\end{remark}

We have shown that there are two possible cases for an optimal solution of (\ref{opt2}): either $e_x' = e_q' = 0$ or $e_x',e_q' >0$. The first case is easy to identify by checking whether
(\ref{double_waterfill}) yields $e_x' = e_q' = 0$. If this is not the case, we are in the latter case and need to find a solution to the set of nonlinear  equations (\ref{non_lin}).

Consider the mapping $T: \mathbb{R}_+^2 \to \mathbb{R}_+^2$ defined by
\begin{small}
\begin{equation}\label{T_map}
T((x_1,x_2)) := \bigg(e_x - \sum_{l \in P_1}\min\Big(b_l^2\frac{x_1}{x_2},A^2\Big), \ \ e_q-\sum_{l \in P_2}\min\Big(a_l^2\frac{x_2}{x_1},B^2\Big)\bigg)
\end{equation}
\end{small}

\noindent
The set of nonlinear equations of $(\ref{non_lin})$ corresponds to a positive \emph{fixed point} of $T$, i.e., $(e_x',e_q') = T(e_x',e_q'), e_x',e_q' >0$. 
Calculating a fixed point of $T$ may at first seem involved, but it turns out that this can be accomplished \emph{exactly} and with minimal overhead. 
The analysis can be found in appendix \ref{app:fixed_point}, where we prove that this problem is no different from the simplest numerical problem: finding the root of a scalar linear equation.

\begin{remark}
We note that coefficients $\{a_l\}_{P_2},\{b_l\}_{P_1}$ are already sorted because of the way we perform compression--by storing high-energy coefficients. It is plain to check that all other operations can be efficiently implemented, with the average complexity being \emph{linear} ($\Theta(N)$)--hence the term \emph{minimum-overhead} algorithm. 
\end{remark}

\begin{remark}[Extensions]
It is straightforward to see that our approach can be  applied without modification to the case that each point is compressed using a different number of coefficients.  Additionally, we note here two important extensions of our problem formulation and optimal algorithm. 
\begin{enumerate}
\item Consider the case that a data point is a  \emph{countably infinite} sequence. For example, we may express a continuous function via its Fourier series, or Chebyshev polynomials expansion. In that case, surprisingly enough, our algorithm can be applied without any alteration. This is because $P_0,P_1,P_2$ are finite sets as defined above, whereas $P_3$ is now infinite. The energy allocation in $P_3$ can be computed exactly by the same procedure (cf. appendix \ref{app:fixed_point}) and then in $P_3$ the Cauchy--Schwartz inequality is again applied\footnote{The proof of optimality in this case assumes a finite subset of $P_3$ and applies the same conditions of optimality that were leveraged before; in fact particular selection of the subset is not of importance, as long as its cardinality is large enough to accommodate the computed energy allocation $(e'_x,e'_q)$.}. 
\item Of particular interest is the case that data are compressed using an \emph{over-complete} basis (also known as \emph{frame} in the signal-processing literature~\cite{oppenheim1999discrete}); this approach has recently become popular in the compressed-sensing framework~\cite{donoho2006compressed}. Our method can be extended to handle this important general case, by storing the compression errors corresponding to each given basis, and calculating the lower/upper bounds in each one separately using our approach. We leave this direction for future research work.

\end{enumerate}
\end{remark}
%%%%%%%%%%%%%%%%%%%%%%%%%%%%%%%%%%%%%%%%%%%%%%%%%%%%%%%%%%%%%%%%%%%%%%%%%%%%%%%%%%%%%%%%%%%%%%%%%%%%%%%%%%%%%%%%%%%%%%%
%%%%%%%%%%%%%%%%%%%%%%%%%%%%%%%%%%%%%%%%%%%%%%%%%%%%%%%%%%%%%%%%%%%%%%%%%%%%%%%%%%%%%%%%%%%%%%%%%%%%%%%%%%%%%%%%%%%%%%%

\section{Algorithm for Optimal Distance Estimation}
In this section, we present an algorithm for obtaining the \emph{exact optimal} upper and lower bounds on the distance between the original sequences, when fully leveraging all information available given their compressed counterparts.
First, we present a simple numerical scheme using a convex solver such as cvx~\cite{cvx} and then
use our theoretical findings to derive an analytical algorithm which we call `double-waterfilling'.

\subsection{Convex Programming}\label{algo:numerical}
~\\
%We present an explicit formulation of (\ref{opt_d}) as a real-valued convex program directly amenable to a numerical solution via a convex solver such as cvx~\cite{cvx}.
%Let us define
%\begin{equation}
%D' := \sum_{l\in P_0} |X_l - Q_l|^2
%\end{equation}
We let $M := N -|P_0|$, and consider the nontrivial case $M>0$.
Following the discussion in Sec. \ref{real-opt}, we set the $2M\times 1$ vector $\mathbf{v} = (\{a_l\}_{l\in P_1 \cup P_2 \cup P_3},\{b_l\}_{l\in P_1 \cup P_2 \cup P_3})$
and consider the following convex problem directly amenable to a numerical solution via a solver such as cvx:
%\begin{align}\label{opt_d2}
%\min & \sum_{\l \in P_1 \cup P_2 \cup P_3} (a_l - b_l)^2\\
%\mbox{s.t. } \ \ \  & a_l \le A, \ \ \forall l \in p^-_x\nonumber\\
%& b_l \le B, \ \ \forall l \in p^-_q\nonumber\\
%& \sum_{l\in p^-_x} a_l^2 \le e_x\nonumber\\
%& \sum_{l\in p^-_q} b_l^2 \le e_q\nonumber\\
%& a_l = |X_l|, \ \ \forall l \in P_2\nonumber\\
%& b_l = |Q_l|, \ \ \forall l \in    P_1\nonumber
%\end{align}
\begin{eqnarray*}
\min & \sum_{\l \in P_1 \cup P_2 \cup P_3} (a_l - b_l)^2\\
\mbox{s.t. } \ \ \  & a_l \le A, \ \forall l \in p^-_x, \ \ b_l \le B, \  \forall l \in p^-_q\\
& \sum_{l\in p^-_x} a_l^2 \le e_x, \ \ \sum_{l\in p^-_q} b_l^2 \le e_q\\
& a_l = |X_l|, \ \forall l \in P_2, \ \  b_l = |Q_l|, \ \forall l \in    P_1
\end{eqnarray*}
The lower bound ($\LB$) can be obtained by adding $D' := \sum_{l\in P_0} |X_l - Q_l|^2$ to the optimal value of (\ref{opt_d}) and taking the square root; then the upper bound is given by $\UB = \sqrt{2D' -\LB^2}$, cf. (\ref{bounds}).

%%%%%%%%%%%%%%%%%%%%%%%%%%%%%%%%%%%%%%%%%%%%%%%%%%%%%%%%%%%%%%%%%%%%%%%%%%%%%%%%%%%%%%%%%%%%%%%%%%%%%%%%%%%%%%%%%%%%%%%%
%%%%%%%%%%%%%%%%%%%%%%%%%%%%%%%%%%%%%%%%%%%%%%%%%%%%%%%%%%%%%%%%%%%%%%%%%%%%%%%%%%%%%%%%%%%%%%%%%%%%%%%%%%%%%%%%%%%%%%%%
\begin{figure}[!ht]
\fbox{
\begin{minipage}[l]{8cm}
\noindent \textbf{Double-waterfilling algorithm}
\\
\noindent \textbf{Inputs:} $\{b_i\}_{i\in P_1}, \{a_i\}_{i\in P_2}, e_x, e_q, A, B$
\\
\noindent \textbf{Outputs:} $\{a_i,\alpha_i\}_{i\in p^-_x}, \{b_i,\beta_i\}_{i\in p^-_q}, \lambda, \mu, v_{\opt}$
\vspace{-0.2cm}
\begin{enumerate}

\item \textbf{if} $p^-_x \cap p^-_q = \emptyset$ %or $p^-_q = \emptyset$ or $$
\textbf{then} use waterfilling algorithm (see Lemma \ref{exact_lemma} parts 1,2); \textbf{return}; \textbf{endif}

\item \textbf{if} $p^-_x = p^-_q$ \textbf{then} set $a_l = \sqrt{\frac{e_x}{|P_3|}}, b_l = \sqrt{\frac{e_q}{|P_3|}}$, $\alpha_l=\beta_l =0$ for all $l\in p^-_x$, $v_{\opt}=-\sqrt{e_x}\sqrt{e_q}$; \textbf{return}; \textbf{endif}

\item  \textbf{if} $e_x \le |P_1|A^2$ \textbf{and} $e_q \le |P_2|B^2 $ \textbf{then}
\begin{eqnarray*}
\{a_l\}_{l\in P_1} = \text{waterfill }(\{b_l\}_{l\in P_1},e_x,A)\\
\{b_l\}_{l\in P_2} = \text{waterfill }(\{a_l\}_{l\in P_2},e_q,B)
\end{eqnarray*}
with optimal values $v^{(a)}_{\opt},v^{(b)}_{\opt}$, respectively.

\item \tab Set $a_l =  b_l = \alpha_l = \beta_l =0$ for all $l\in P_3$, $v_{\opt} = -v^{(a)}_{\opt} - v^{(b)}_{\opt}$; \textbf{return};

%\item Set $e_x' = e_x - \sum_{l\in P_1} a_l^2, e_q' = e_q - \sum_{l\in P_2}b_l^2$

%\item \textbf{if} $e_x'e_q' = 0$ then

%\item \tab set $a_l = b_l = 0, \ l\in P_3$ and $v_{\opt} = v^{(a)}_{opt} + v^{(b)}_{opt}$

\item \textbf{endif}

%\item \tab Define $\bar{e}_x,\bar{e}_q,\alpha^{(1),\alpha^{(2)}}$ as in (\ref{alpha_def}), (\ref{ebar_def}).

\item Calculate the root $\bar{\gamma}$ as in Remark \ref{root_calc} (appendix 12.3) and define $e_x',e_q'$  as in (\ref{exeq_res}).

\item Set
\begin{eqnarray*}
\{a_l\}_{l\in P_1} = \text{waterfill }(\{b_l\}_{l\in P_1},e_x-e_x',A)\\
\{b_l\}_{l\in P_2} = \text{waterfill }(\{a_l\}_{l\in P_2},e_q-e_q',B)
\end{eqnarray*}
with optimal values $v^{(a)}_{opt},v^{(b)}_{opt}$, respectively.

\item Set $a_l = \sqrt{\frac{e_x'}{|P_3|}}, b_l = \sqrt{\frac{e_q'}{|P_3|}}, \alpha_l=\beta_l=0, l\in P_3$ and set $v_{\opt} = -v^{(a)}_{opt} - v^{(b)}_{opt} - \sqrt{e_x'}\sqrt{e_q'}$

%\item \textbf{endif}
\end{enumerate}
\end{minipage}
}
\caption{Double-waterfilling algorithm for optimal distance estimation between two compressed sequences.}
\label{algo:double_waterfill}
\end{figure}
%%%%%%%%%%%%%%%%%%%%%%%%%%%%%%%%%%%%%%%%%%%%%%%%%%%%%%%%%%%%%%%%%%%%%%%%%%%%%%%%%%%%%%%%%%%%%%%%%%%%%%%%%%%%%%%%%%%%%%%%
%%%%%%%%%%%%%%%%%%%%%%%%%%%%%%%%%%%%%%%%%%%%%%%%%%%%%%%%%%%%%%%%%%%%%%%%%%%%%%%%%%%%%%%%%%%%%%%%%%%%%%%%%%%%%%%%%%%%%%%%

\subsection{Double-waterfilling}\label{algo:exact}~\\
Leveraging our theoretical analysis, we derive a simple efficient algorithm to obtain an \emph{exact} solution to the problem of finding tight lower/upper bound on the distance of two compressed sequences; we call this the ``double-waterfilling algorithm.''  The idea is to obtain an exact solution of (\ref{opt2}) based on the results of Theorems \ref{prop_thm}, \ref{opt_lem}, and Remark \ref{root_calc}; then the lower/upper bounds are given by (\ref{LB}), (\ref{UB}). The algorithm is described in Fig. \ref{algo:double_waterfill}; its proof of optimality follows immediately from the preceding theoretical analysis.

%\subsection{Algorithm 1: Gradient descent}
%\subsection{Algorithm 1: Repeated play} 

%%%%%%%%%%%%%%%%%%%%%%%%%%%%%%%%%%%%%%%%%%%%%%%%%%%%%%%%%%%%%%%%%%%%%%%%%%%%%%%%%%%%%%%%%%%%%%%%%%%%%%%%%%%%%%%%%%%%%%%%
% Mining in the compressed domain
%%%%%%%%%%%%%%%%%%%%%%%%%%%%%%%%%%%%%%%%%%%%%%%%%%%%%%%%%%%%%%%%%%%%%%%%%%%%%%%%%%%%%%%%%%%%%%%%%%%%%%%%%%%%%%%%%%%%%%%%
\section{Mining in the compressed domain}

In the experimental section, we will demonstrate the performance of our methodology
when operating directly in the compressed domain for distance-based operations.
We will use two common search and mining tasks to showcase our methodology: 
$(i)$ the $k$-NN search and $(ii)$ the $k$-Means clustering. 
Performing such operations in the compressed domain may require modifications
in the original algorithms, because of the uncertainty introduced in the distance
estimation in the compressed domain. We discuss these modifications in the sections that follow. 
We also elaborate on previous state-of-art approaches.

\subsection{$k$-NN search in the compressed domain}
Finding the closest points to a given query point is an elementary subroutine to many problems in search, classification, and prediction. 
A brute-force approach via exhaustive search on the uncompressed data can incur a prohibitive cost \cite{achlioptas2001database}. 
Thus, the capability to work directly in a compressed domain provides a very practical advantage to any algorithm.

In this context, the $k$-NN problem \cite{cover1967nearest} in the compressed domain can be succinctly described as:
\vskip.1in
\noindent \textsc{$k$-NN Problem:} {\it Given a compressed query representation $\Y_\Q$ and $k \in \mathbb{Z}_{+}$, find the $k$ closest elements $\X^{(i)} \in \mathcal{DB}$ with respect to the $\ell_2$-norm through their compressed representations $\Y^{(i)}$.}
\vskip.1in
%As already mentioned, one can use distance metrics other than $\ell_2$-norm \cite{indyk2000stable}, depending on the application and the nature of the data. %Next, we describe some related work and present comparison results with our proposed approach.

\medskip
Various diverse approaches exist to tackle this problem efficiently and robustly; cf., \cite{jones2011randomized}. 
Here, we compare our methodology with two algorithmic approaches of $k$-NN search in a low-dimensional space: $(i)$ Randomized projection-based $k$-NN and $(ii)$ PCA-based $k$-NN. We describe these approaches in more detail.

%%%%%%%%%%%%%%%%%%%%%%%%%%%%%%%%%%%%%%%%%%%%%%%%%%%%%%%%%%%%%%
\begin{algorithm*}[!htpb]
\caption{RP-based approximate $k$-NN algorithm}\label{algo:rand}
\begin{small}
\begin{algorithmic}[1]
\Statex ~~~\textbf{Input:} $k \in \mathbb{Z}_{+}$, $\Q$, $\epsilon \in (0, 1)$ \hfill \Comment{$k$: \# of Nearest Neighbors, $\Q$: the transformed \emph{uncompressed} query}
\Statex ~~~~~~~~~~~~~~~$\mathcal{X}$ with the transformed elements $\X^{(i)} = \mathcal{F}(\x^{(i)})$.
\algrule
\State ~~~ Select appropriate $d = \mathcal{O}(\epsilon^{-2} \log (V))$
\State ~~~ Construct $\boldsymbol{\Phi} \in \mathbb{R}^{d \times N}$ where $\Phi_{ij}$ is i.i.d. $\sim \mathcal{N}(0, \frac{1}{\sqrt{d}})$ or $\sim \text{\texttt{Ber}}\lbrace \pm \frac{1}{\sqrt{d}} \rbrace$ or according to \eqref{eq:ach}.
\State ~~~ \textbf{for each} $\X^{(i)} \in \mathcal{X}$ \textbf{do} \hfill (\textsc{Pre-processing step})
\State ~~~~~~ Compute $\Y^{(i)} = \boldsymbol{\Phi} \X^{(i)} \in \mathbb{C}^d$
\State ~~~ \textbf{end for} 
\algrule
\State ~~~ Compute $\Y_\Q = \boldsymbol{\Phi} \Q \in \mathbb{C}^d$.
\State ~~~ \textbf{for each} $\Y^{(i)}$ \textbf{do} \hfill (\textsc{Real-time execution step})
\State ~~~~~~ Compute $\|\Y_\Q - \Y^{(i)}\|_2$
\State ~~~ \textbf{end for}
\State ~~~ Sort and keep the $k$-closest $\Y^{(i)}$'s, in the $\ell_2$-norm sense.
%\State ~~~\textcolor{red}{Remove some trees to meet system requirements? e.g., space}
\end{algorithmic}
\end{small}
\end{algorithm*}
%%%%%%%%%%%%%%%%%%%%%%%%%%%%%%%%%%%%%%%%%%%%%%%%%%%%%%%%%%%%%%

\vskip.1in
\noindent \textit{Approximate $k$-NN using Randomized Projections (RP):} One of the most established approaches for low-dimensional data processing is through the {\it Johnson Lindenstrauss (JL) Lemma}\footnote{While JL Lemma applies for any set of points $\lbrace \X^{(1)}, \dots, \X^{(V)} \rbrace$ in high dimensions, more can be achieved if sparse representations of $\X^{(i)}, \forall i$, are known to exist \emph{a priori}. Compressive sensing (CS) \cite{candes2006stable} \cite{donoho2006compressed} roughly states that a sparse signal, compared with its ambient dimension, can be \emph{perfectly reconstructed} from far fewer samples than dictated by the well-known Nyquist--Shannon theorem. To this extent, CS theory exploits the sparsity to extend the JL Lemma to more general signal classes, not restricted to a collection of points $\mathcal{X}$. As a by-product of this extension, the CS version of the JL Lemma constitutes the Restricted Isometry Property (RIP).}:
\begin{lemma}[JL Lemma]
Let $\mathcal{X} = \lbrace \X^{(1)}, \dots, \X^{(V)} \rbrace$ be any arbitrary collection of $V$ points in $N$ dimensions. For an isometry constant $\epsilon \in (0, 1)$, we can construct with high probability a linear mapping $\boldsymbol{\Phi}: \mathbb{C}^N \rightarrow \mathbb{C}^d$, where $d = \mathcal{O}(\epsilon^{-2}\log(V))$, such that 
$${\label{eq:JL}}
(1 - \epsilon) \leq \frac{\|\boldsymbol{\Phi}(\X^{(i)}) - \boldsymbol{\Phi}(\X^{(j)})\|_2^2}{\|\X^{(i)} - \X^{(j)}\|_2^2 } \leq (1 + \epsilon)
$$ for all $\X^{(i)}, \X^{(j)} \in \mathcal{X}$.
\end{lemma}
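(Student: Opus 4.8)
\medskip
\noindent\emph{Proof sketch.}
The plan is to run the classical random-projection argument, taking $\boldsymbol{\Phi}$ to be the scaled Gaussian matrix of Algorithm~\ref{algo:rand} (entries i.i.d.\ $\mathcal{N}(0,1/d)$), and to reduce the statement to one concentration inequality combined with a union bound. Because the claimed two-sided estimate is invariant under rescaling of $\X^{(i)}-\X^{(j)}$, the entire content is the \emph{distributional} Johnson--Lindenstrauss statement: for a fixed vector $u$, the random variable $\norm{\boldsymbol{\Phi}u}^2$ has mean $\norm{u}^2$ and concentrates around it, with failure probability decaying exponentially in $d$.

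The heart of the proof, and the step I expect to be the main obstacle, is this concentration bound. With $\Phi_{rj}$ i.i.d.\ $\mathcal{N}(0,1/d)$, each coordinate $(\boldsymbol{\Phi}u)_r=\sum_j\Phi_{rj}u_j$ is $\mathcal{N}(0,\norm{u}^2/d)$ and the $d$ coordinates are independent, so $d\,\norm{\boldsymbol{\Phi}u}^2/\norm{u}^2\sim\chi^2_d$ and $\mathbb{E}\,\norm{\boldsymbol{\Phi}u}^2=\norm{u}^2$. I would then estimate the upper and lower tails separately by the Chernoff method applied to the moment generating function $\mathbb{E}\,e^{tW}=(1-2t)^{-d/2}$ of $W\sim\chi^2_d$ (valid for $0<t<\tfrac12$), optimizing over $t$, to obtain
\[
\Pr\!\big[\,\big|\norm{\boldsymbol{\Phi}u}^2-\norm{u}^2\big|>\epsilon\,\norm{u}^2\,\big]\;\le\;2\exp\!\big(-c\,\epsilon^2\,d\big)
\]
for an absolute constant $c>0$, uniformly over $\epsilon\in(0,1)$; the MGF evaluation and the optimization over $t$ are the only genuinely nontrivial calculations. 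Complex entries cause no real difficulty: writing $\boldsymbol{\Phi}u=\boldsymbol{\Phi}(\Re u)+i\,\boldsymbol{\Phi}(\Im u)$ gives $\norm{\boldsymbol{\Phi}u}^2=\norm{\boldsymbol{\Phi}(\Re u)}^2+\norm{\boldsymbol{\Phi}(\Im u)}^2$ and $\norm{u}^2=\norm{\Re u}^2+\norm{\Im u}^2$, so applying the bound to $\Re u$ and $\Im u$ separately (costing a factor of two in the union bound) places $\norm{\boldsymbol{\Phi}u}^2$ within $\epsilon\norm{u}^2$ of $\norm{u}^2$ with the same exponential failure probability.

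Finally, I would union-bound over the $\binom{V}{2}$ difference vectors $u=\X^{(i)}-\X^{(j)}$, each contributing at most two of the events above. The probability that the estimate fails for some pair is at most $4\binom{V}{2}\exp(-c\epsilon^2 d)<2V^2\exp(-c\epsilon^2 d)$. Choosing $d=C\,\epsilon^{-2}\log V$ with $C$ a sufficiently large absolute constant makes this at most $V^{-a}$ for some $a>0$, hence $\to0$; on the complementary event, which thus occurs with high probability, \emph{every} pair $\X^{(i)},\X^{(j)}$ satisfies $(1-\epsilon)\le\norm{\boldsymbol{\Phi}(\X^{(i)})-\boldsymbol{\Phi}(\X^{(j)})}^2/\norm{\X^{(i)}-\X^{(j)}}^2\le(1+\epsilon)$, which is the claim. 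If $\boldsymbol{\Phi}$ is instead built from $\text{\texttt{Ber}}\{\pm 1/\sqrt{d}\}$ entries or via the Achlioptas-style construction~\eqref{eq:ach}, the argument is unchanged except that the $\chi^2$ concentration step is replaced by the corresponding sub-Gaussian (Hoeffding/Bernstein-type) tail bound for a sum of $d$ independent, bounded terms with mean $\norm{u}^2$, which again has the form $2\exp(-\Theta(\epsilon^2 d))$.
\hfill$\blacksquare$
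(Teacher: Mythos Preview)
Your proof sketch is correct and follows the classical Dasgupta--Gupta route (Gaussian projection, $\chi^2_d$ concentration via the Chernoff method, then a union bound over the $\binom{V}{2}$ pairs), with a clean handling of the complex case by splitting into real and imaginary parts. There is nothing to compare it against, however: the paper does not prove the JL Lemma at all. It is stated purely as background for the RP-based baseline in Algorithm~\ref{algo:rand}, with attribution to~\cite{johnson1984extensions} and pointers to the specific constructions~\cite{dasgupta1999learning,arriaga1999algorithmic,achlioptas2001database,ailon2006approximate}; no argument is given or even sketched. So your write-up supplies a proof where the paper simply cites one.

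One small quibble worth tightening if you flesh this out: in the paper's Algorithm~\ref{algo:rand} the Gaussian entries are written as $\mathcal{N}(0,1/\sqrt{d})$, which is ambiguous between standard deviation and variance; your computation (variance $1/d$, so that $d\,\norm{\boldsymbol{\Phi}u}^2/\norm{u}^2\sim\chi^2_d$) is the intended normalization, but it would be worth stating explicitly which convention you are using so that the $\chi^2$ identification is unambiguous.
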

Therefore, instead of working in the ambient space of $N$ dimensions, we can construct a linear, nearly {\it isometric map} $\boldsymbol{\Phi}$ that projects the data onto a lower subspace, {\it approximately preserving their relative distances} in the Euclidean sense. 
Variants of this approach have also been proposed in \cite{kushilevitz2000efficient}.
As we are not interested in {\it recovering} the entries of $\X^{(i)}$ in the $d$-dimensional space, rather than just performing data manipulations in this domain, one can control the distortion $\epsilon$ so that tasks such as classification and clustering can be performed quite accurately with low computational cost. However, we underline that the JL guarantees are probabilistic and asymptotic, whereas the lower and upper bounds provided by our technique cannot be violated.

The topic of constructing matrices $\boldsymbol{\Phi}$ that satisfy the JL Lemma with nice properties (e.g., deterministic construction, low space-complexity for storage, cheap operations using $\boldsymbol{\Phi}$) is still an open question, although many approaches have been proposed.
Similar questions also appear under the tag of {\it locality-sensitive hashing}, where {\it sketching} matrices ``sense'' the signal under consideration. Fortunately, many works have proved the existence of random universal matrix ensembles that satisfy the JL Lemma with \emph{overwhelming probability}, thus ignoring the deterministic construction property.\footnote{Recent developments \cite{hegde2012convex} describe \emph{deterministic constructions} of $\boldsymbol{\Phi}$ in polynomial time, based on the fact that the data $\mathcal{X}$ is known \emph{a priori} and fixed. The authors in \cite{hegde2012convex} propose the \texttt{NuMax} algorithm, a SemiDefinite Programming (SDP) solver for convex nuclear norm minimization over $\ell_{\infty}$-norm and positive semidefinite constraints. However, \texttt{NuMax} has $\mathcal{O}(C + N^3 + N^2C^2)$ time-complexity per iteration and overall $\mathcal{O}(C^2)$ space-complexity, where $C := {V \choose 2}$; this renders \texttt{NuMax} prohibitive for real-time applications. Such an approach is not included in our experiments, but we mention it here for completeness.}
Representative examples include random Gaussian matrices \cite{dasgupta1999learning} and random binary (Bernoulli) matrices \cite{arriaga1999algorithmic}\cite{achlioptas2001database}. 
Ailon and Chazelle \cite{ailon2006approximate} propose a fast JL transform for the $k$-NN problem with faster execution time than its predecessors. 
Achlioptas \cite{achlioptas2001database} proposes a randomized construction for $\boldsymbol{\Phi}$, both simple and fast, which is suitable for standard SQL-based database environments; each entry $\Phi_{ij}$ independently takes one of the following values:
\begin{align}\label{eq:ach}
\Phi_{ij} =
\left\{
	\begin{array}{ll} \vspace{0.05in}
		1  & \mbox{with probability } \frac{1}{6}, \\ \vspace{0.05in}
		0  & \mbox{with probability } \frac{2}{3}, \\ 
		-1 & \mbox{with probability } \frac{1}{6}.
	\end{array}
\right.
\end{align}
This has the additional advantage of producing a \emph{sparse} transformation matrix $\mathbf{\Phi}$ which results in computation savings at the data-compression step. 

In our experiments, we shall refer to the Gaussian distributed projection matrix as \texttt{GRP}, the Bernoulli distributed projection matrix as \texttt{BRP} and Achlioptas' construction \texttt{ARP}. We omit sophisticated constructions of $\boldsymbol{\Phi}$ because of their increased implementation complexity. 

Using linear maps $\boldsymbol{\Phi} \in \mathbb{R}^{d \times N}$ or $\boldsymbol{\Phi} \in \mathbb{C}^{d \times N}$ satisfying the JL Lemma, one can compress $\X^{(i)}$ as $\Y^{(i)} = \boldsymbol{\Phi} \X^{(i)} \in \mathbb{C}^{d}$ and store only $\Y^{(i)}$ for further processing. 
Given a compressed representation of a query $\Q$, $\Y_\Q = \boldsymbol{\Phi} \Q$, one can compute the distances of each $\Y^{(i)}$ to $\Y_\Q$ and pick the $k$ nearest points in the Euclidean sense. 
We provide a pseudo-code description of the above in Algorithm \ref{algo:rand}.

Overall, Algorithm \ref{algo:rand} requires $\mathcal{O}(\epsilon^{-2} V N \log (V))$ time to \emph{preprocess} the entries of $\mathcal{DB}$ and $\mathcal{O}(\epsilon^{-2}(V + N)\log(V))$ double-sized space-complexity, in the Gaussian case.
In the other two cases, the space complexity can be further reduced thanks to the binary representation of $\boldsymbol{\Phi}$. Given a query $\Y_\Q$, Algorithm \ref{algo:rand} requires $\mathcal{O}(\max\lbrace V, N \rbrace \cdot \epsilon^{-2} \log(V))$ time-cost.%\footnote{If $\|\X^{(i)}\|_0 \leq s \ll N, \forall i$, then this time-complexity drops to $\mathcal{O}(\max\lbrace V, s \rbrace \epsilon^{-2} \log(N))$.}

%%%%%%%%%%%%%%%%%%%%%%%%%%%%%%%%%%%%%%%%%%%%%%%%%%%%%%%%%%%%%%
\begin{algorithm*}[!htpb]
\caption{Optimal bounds-based $k$-Means}\label{algo:opt_kmeans}
\begin{small}
\begin{algorithmic}[1]
\Statex ~~~\textbf{Input:} $k \in \mathbb{Z}_{+}$, $\mathcal{Y} = \lbrace \Y^{(1)}, \dots, \Y^{(V)} \rbrace$  \hfill \Comment{$k$: \# of clusters, $\Y^{(i)}$: Compressed data vectors.}
\algrule
\State ~~~ Select randomly $k$ centroids $C^{(t)}, t = 1, \dots, k, $ in the compresed domain. \hfill (\textsc{Initialization step})
\algrule
\State ~~~ \textbf{while} $\Y^{(i)}$ assignment to $C^{(t)}$ changes \textbf{do}
\State ~~~~~~ Compute optimal lower $\ell_b$ and upper $u_b$ bounds between each $\Y^{(i)}$ and $C^{(t)}$, $\forall i, t$.
\State ~~~~~~ For each pair $\Y^{(i)}, C^{(t)}$ calculate a distance metric $m_{i,t}$, based on $\ell_b$ and $u_b$.
\State ~~~~~~ Assign $\Y^{(i)}$'s to groups $G^{(t)}$ such that: $G^{(t)} = \left \{ \Y^{(i)} ~|~ m_{i, t} \le m_{i, q}, \forall q \neq t \right\}.$ \hfill (\textsc{Assignment step})
\State ~~~~~~ Update the centroids: $C^{(t)} = \frac{1}{|G^{(t)}|} \sum_{\Y^{(i)} \in G^{(t)}} \Y^{(i)}.$ \hfill (\textsc{Update step})
%\State ~~~~~~ Compress centroids such that each $C^{(t)}$ contains $r$ bytes of information. \hfill (\textsc{Compression step})
\State ~~~ \textbf{end while} 
%\State ~~~\textcolor{red}{Remove some trees to meet system requirements? e.g., space}
\end{algorithmic}
\end{small}
\end{algorithm*}
%%%%%%%%%%%%%%%%%%%%%%%%%%%%%%%%%%%%%%%%%%%%%%%%%%%%%%%%%%%%%%

\medskip
\noindent \textit{Approximate $k$-Nearest Neighbors using PCA:} Instead of projecting onto a randomly chosen low-dimensional subspace, one can use the {\it most informative} subspaces to construct a projection matrix, based on $\mathcal{X}$. PCA-based $k$-NN relies on this principle: let $\mathbf{X} := [\X^{(1)}~~\X^{(2)}~~\dots~~\X^{(V)}] \in \mathbb{C}^{N \times n}$ be the data matrix. Given $\mathbf{X}$, one can compute the Singular Value Decomposition (SVD) $\mathbf{X} = \U \boldsymbol{\Sigma} \mathbf{V}^T$ to identify the $d$ most important subspaces of $\mathbf{X}$, spanned by the $d$ dominant singular vectors in $\U$. In this case, $\boldsymbol{\Phi} := \U(1:d, :)$ works as a low-dimensional linear map, biased by the information contained in $\mathbf{X}$. 

The main shortcoming of PCA-based $k$-NN search is the computation of the SVD of $\mathbf{X}$; generally, such an operation has cubic complexity in the number of entries in $\mathbf{X}$. Moreover, PCA-based projection provides no guarantees on the order of distortion in the compressed domain: While in most cases $\boldsymbol{\Phi} := \U(1:d, :)$ outperforms RP-based approaches with JL guarantees, one can construct test cases where the pairwise point distances are heavily distorted such that points in $\mathcal{X}$ might be mapped to a single point \cite{achlioptas2001database} \cite{hegde2012convex}. Finally, note that computation of the SVD requires 
the presence of the entire dataset, whereas approaches such as ours operate on a per-object basis.

\medskip
\noindent \textit{Optimal bounds-based $k$-NN:} Our approach can easily be adapted to perform $k$-NN search operations
in the compressed domain. Similar to Algorithm \ref{algo:rand}, instead of computing random projection matrices, we keep the high-energy coefficients for each transformed signal representation $\X^{(i)}$ (in Fourier, Wavelet or other basis) and also record the total discarded energy per object. Following a similar approach to compress the input query $\Q$, say $\Y_\Q$, we perform the optimal bounds procedure to obtain upper ($u_b$) and lower bounds ($\ell_b$) of the distance in the original domain. Therefore, we do not have only one
distance, but can compute three \textit{distance proxies} based on the upper and lower bounds on the distance:

\begin{itemize}
\item [$(i)$] We use the lower bound $\ell_b$ as indicator of how close  the uncompressed $\x^{(i)}$ is to the uncompressed query $\q$. %Finally, we compute the $k$-NN by keeping the $k$ {\it maximum} lower bounds over all $\Y^{(i)}$.
\item [$(ii)$] We use the upper bound $u_b$ as indicator of how close the uncompressed $\x^{(i)}$ is to the uncompressed query $\q$. %In this case, we compute the $k$-NN by keeping the $k$ {\it minimum} upper bounds over all $\Y^{(i)}$.
\item [$(iii)$] We define the average metric $\frac{\ell_b + u_b}{2}$ as indicator of how close the uncompressed $\x^{(i)}$ is to the uncompressed query $\q$. %Here, we compute the $k$-NNs as the $k$ minimum $\frac{\ell_b + u_b}{2}$ values among all $\Y^{(i)}$. 
\end{itemize} 

In the experimental section, we evaluate the performance of these three metrics, and show that the last metric based
on the average distance bound provides the most robust performance.

\subsection{$k$-Means clustering in the compressed domain}
Clustering is a rudimentary mining task for summarizing and visualizing large amounts of data.
Specifically, the $k$-clustering problem is defined as follows:
\vskip.1in
\noindent \textsc{$k$-Clustering Problem:} {\it Given a $\mathcal{DB}$ containing $V$ compressed representations of $\x^{(i)}, \forall i,$ and a target number of clusters $k$, group the compressed data into $k$ clusters in an accurate way through their compressed representations.} 

This is an \emph{assignment} problem and is in fact NP-hard~\cite{freris2012cluster}. 
Many approximations to this problem exist, one of the most widely-used algorithms
being the $k$-Means clustering algorith \cite{lloyd1982least}. 
Formally, $k$-Means clustering involves partitioning the $V$ vectors into $k$ clusters, i.e., into $k$ disjoint subsets $G^{(t)}$ ($1{\leq}t{\leq}k$) with $\cup_{t}G^{(t)}=V$, such that the sum of intraclass variances%, defined by
\begin{equation}\label{eq:metric}
V := \sum_{t=1}^k\sum_{\x^{(i)}\in G^{(t)}}||\x^{(i)} - C^{(t)}||^2,%^T(\mathbf{x}_j-\boldsymbol{\mu}_k),
\end{equation}
is minimized, where $C^{(t)}$ is the \emph{centroid} of the $k$-th cluster.

\noindent There also exist other formalizations for data clustering, based on either hierarchical clustering (``top-down'' and ``bottom-up'' constructions, cf. \cite{TSS04}); flat or centroid-based clustering, or on spectral-based clustering \cite{Dhi01}. 
 In our subsequent discussions, we focus on the $k$-Means algorithm because of its widespread use and fast runtime.
 Note also that $k$-Means is easily amenable for use by our methodology owning to its computation of distances between
 objects and the derived centroids.
%For simplicity and clarity reasons, without loss of generality, we compare the list of algorithms using the $k$-Means algorithm as the baseline procedure for clustering in the compressed domain: given a set of points to cluster and $k$ initial centroids, greedily update the $k$ centers such that the total distances between points in each cluster and the corresponding centroids is minimized. 

%By working in the compressed domain, we try to answer the following question \cite{dasgupta2000experiments}: ``...how much can the (data) dimension be reduced while still maintaining a reasonable amount of separation between different clusters?''

\medskip
Similar to the $k$-NN problem case, we consider low-dimensional embedding matrices based on both PCA and randomized constructions \cite{huber1985projection}\cite{dasgupta2000experiments} \cite{boutsidis2010random}. We note that \cite{boutsidis2010random} theoretically proves that a specific random matrix construction achieves a $(2 + \epsilon)$-optimal $k$-partition of the points in the compressed domain in $\mathcal{O}(V N \frac{k}{\epsilon^2 \log(N)})$ time. Based on simulated annealing clustering heuristics, \cite{cardoso2012iterative} proposes an iterative procedure where sequential $k$-means clustering is performed, with increasing projection dimensions $d$, for better clustering performance. We refer the reader to \cite{boutsidis2010random} for a recent discussion of the above approaches.

Similar, in spirit, to our approach is the work of \cite{freris2012cluster}. There, the authors propose 1-bit Minimum Mean Square Error (MMSE) quantizers per dimension and cluster, and provide guarantees for cluster preservation in the compressed domain.  %, under the assumption that the original data clusters are ``well-separated''.

\medskip
\noindent \textbf{Optimal bounds-based $k$-Means:} To describe how to use our proposed bounds in a $k$-clustering task, let $G^{(t)}, t = 1, \dots, k, $ be the $k$ groups of a partition with centroids $C^{(t)}, t = 1, \dots, k$. We use a modification of Lloyd's algorithm~\cite{lloyd1982least} which consists of the following steps: 
\vskip.05in
\noindent \textit{Assignment step:} Let $C^{(t)}, t = 1, \dots, k,$ be the current centroids.\footnote{For centroid initialization, one can choose $C^{(t)}$ to be $(i)$ completely random points in the compressed domain; $(ii)$ set randomly to one of the compressed representations of $\X^{(i)} \in \mathcal{DB}$, or (iii) use the better performing k-Means++ initialization algorithm~\cite{kMeans++,freris2012cluster}.} For each compressed sequence $\Y^{(i)} \in \mathcal{DB}$, we compute the corresponding upper $u_b$ and lower $\ell_b$ bounds with respect to every centroid $ C^{(t)}$. We use a distance metric $m_{i,t}$, based on $u_b$ and $\ell_b$ to decide the assignment of each $\Y^{(i)}$ to one of the centroids, i.e., 
$$
G^{(t)} = \left \{ \Y^{(i)} ~|~ m_{i, t} \le m_{i, q}, \forall q \neq t \right\}.
$$ Here, we use $m_{i,t} = \frac{u_b + \ell_b}{2}$ where $u_b,~\ell_b$ denote the upper- and lower-bounds between the compressed sequence $\Y^{(i)}$ and the centroid $C^{(t)}$; other distance metrics can be used depending on the nature and the requirements of the problem at hand.

\medskip
\noindent \textit{Update step:} To update the centroids $C^{(t)}$, we use the average rule using the current distribution of points at each cluster, i.e., 
$${\label{eq:update_step}}
C^{(t)} = \frac{1}{|G^{(t)}|} \sum_{\Y^{(i)} \in G^{(t)}} \Y^{(i)}.
$$

\medskip
\noindent \textit{Distance to new centroids:} 
Recall that each of the compressed objects has information only about its high-energy coefficients.
This set of coefficients may be different across objects. So, during the above averaging operation for computing the centroids, we may end up with the new centroids having (potentially) all coefficient positions filled with some energy.
However, this does not pose a problem for the distance computation because the waterfilling algorithm
can compute distance estimates even between compressed sequences with different number of coefficients.
Therefore, we exploit all information available in the computed centroid, as this does not increase
the space complexity of our technique. Alternatively, one could keep only the top high-energy coefficients
for the new centroid. However, there is no need to discard this extra information.

\noindent
The above steps are summarized in Algorithm \ref{algo:opt_kmeans}.

%%%%%%%%%%%%%%%%%%%%%%%%%%%%%%%%%%%%%%%%%%%%%%%%%%%%%%%%%%%%%
% EXPERIMENTAL SECTIONs
%%%%%%%%%%%%%%%%%%%%%%%%%%%%%%%%%%%%%%%%%%%%%%%%%%%%%%%%%%%%%

\section{Experiments}\label{sec:experiments}
Here we conduct a range of experiments to showcase a)
the tightness of bounds that we calculate; b) the low runtime to compute those bounds and,
c) the comparative quality of various mining tasks when using the optimal distance estimates.

Our intention in the experimental section is not to focus on a specific application, rather to evaluate different methodologies under implementation
invariant settings. This makes the contribution of our work for compression and search more generic and fundamental.

\medskip \noindent 
\textbf{Datasets:} We use two datasets: $(i)$ a weblog time-series dataset and $(ii)$ an image dataset consisting of Very Large Scale Integration (VLSI) layouts.  

%%%%%%%%%%%%%%%%%%%%%%%%%%%%%%%%%%%%%%%%%%%%%%%%%%%%%%%%%%%%%%
\begin{figure*}[!htpb]
\centering
\includegraphics[width=0.305\textwidth]{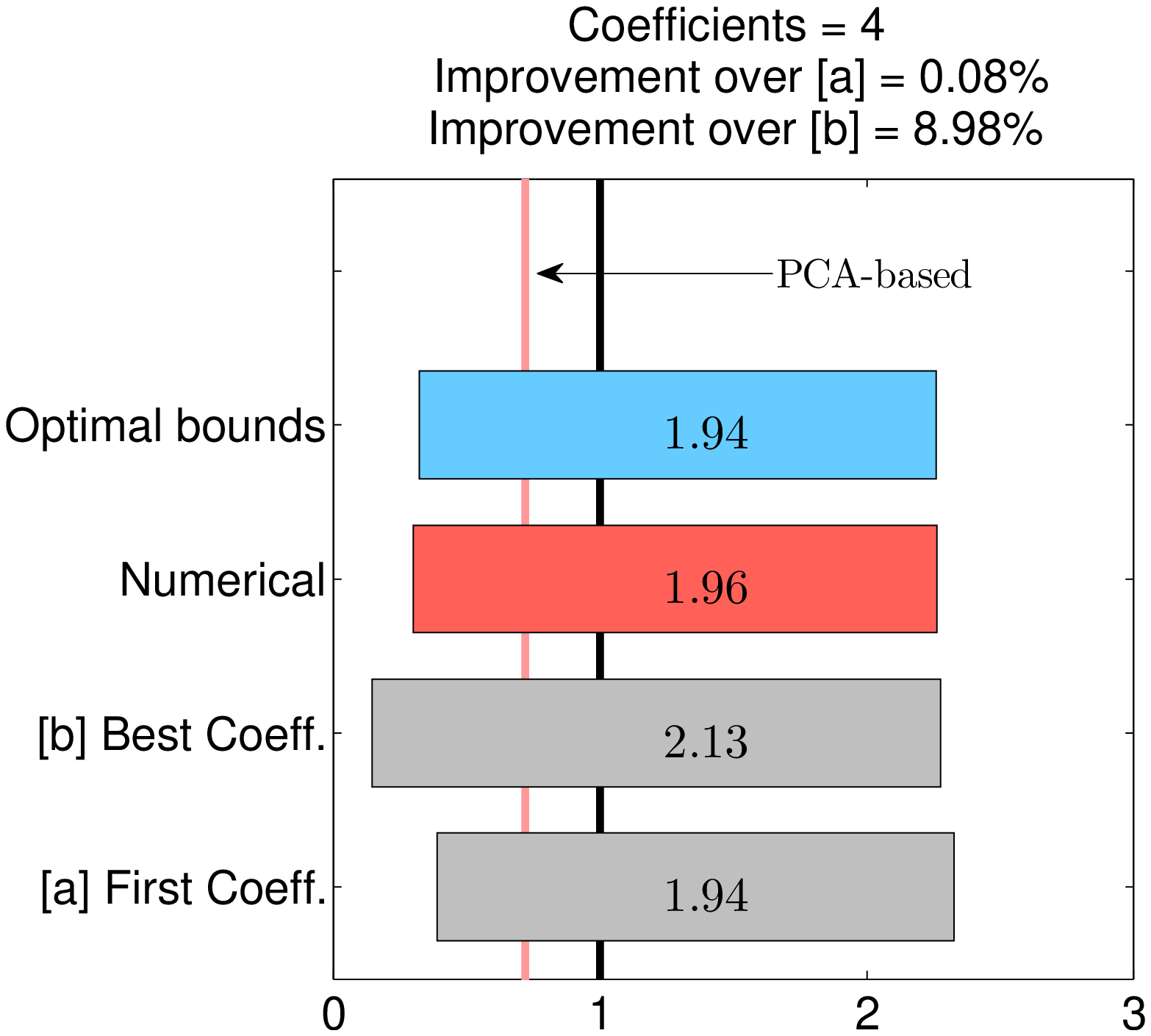} \includegraphics[width=0.22\textwidth]{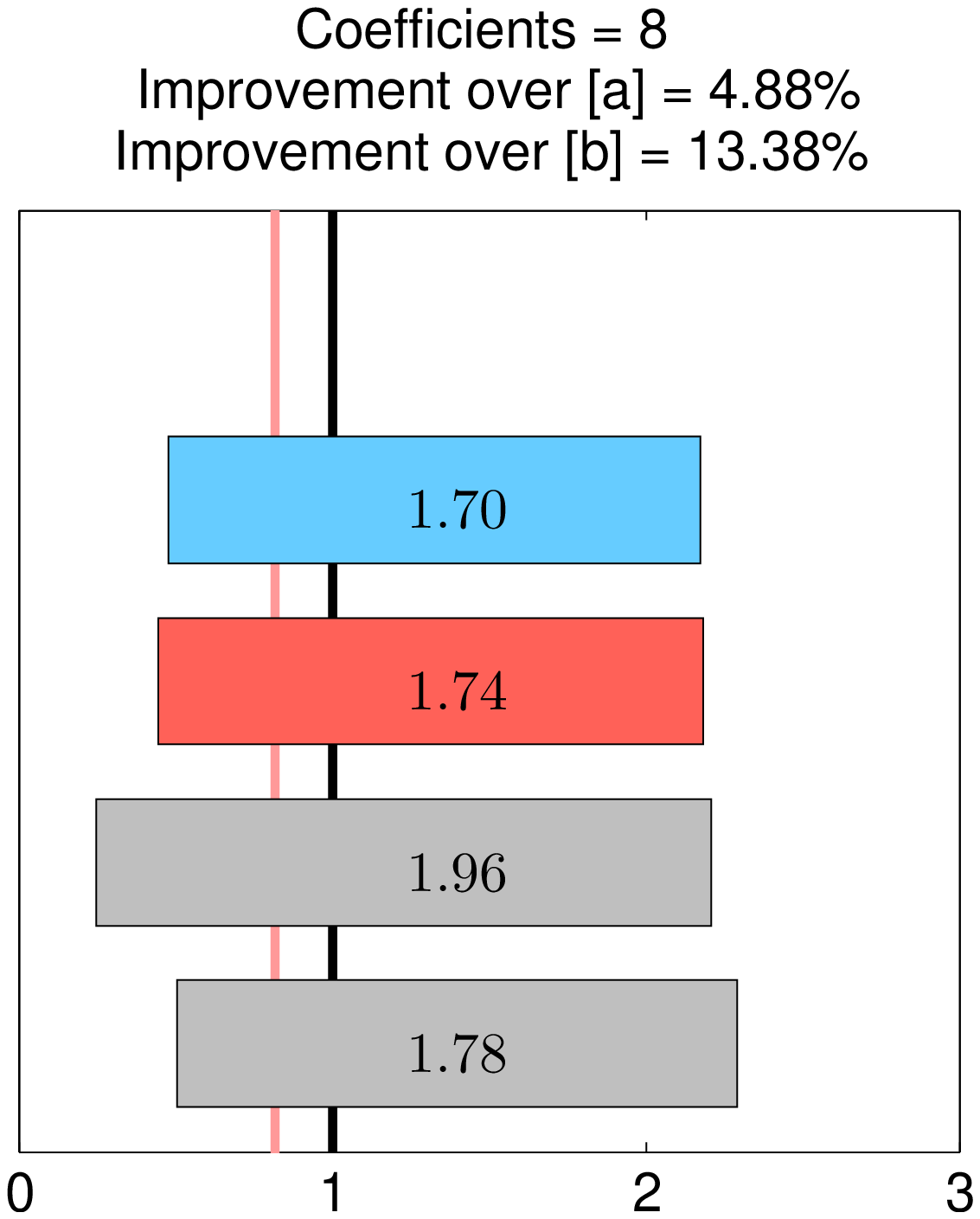} \includegraphics[width=0.22\textwidth]{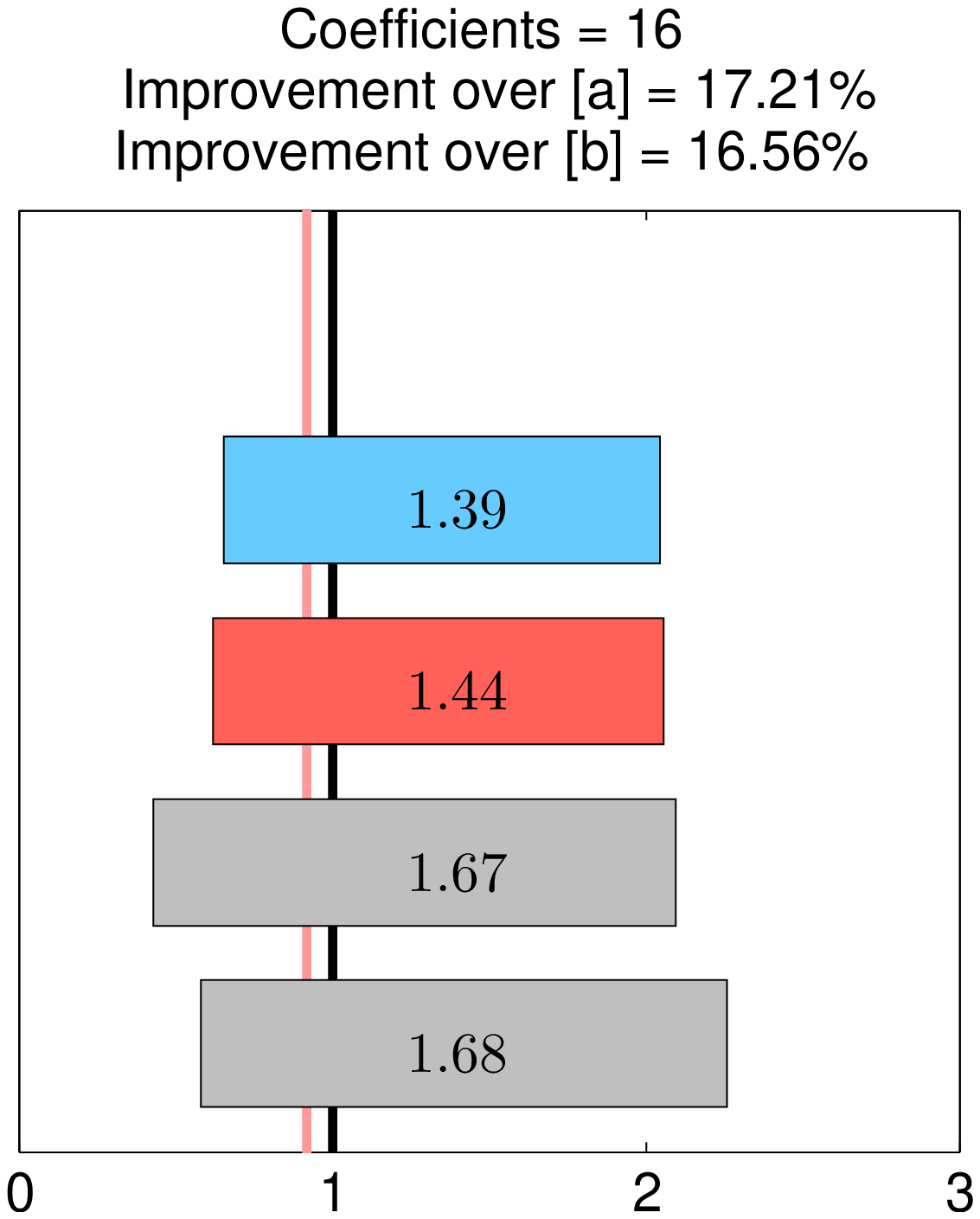} \includegraphics[width=0.22\textwidth]{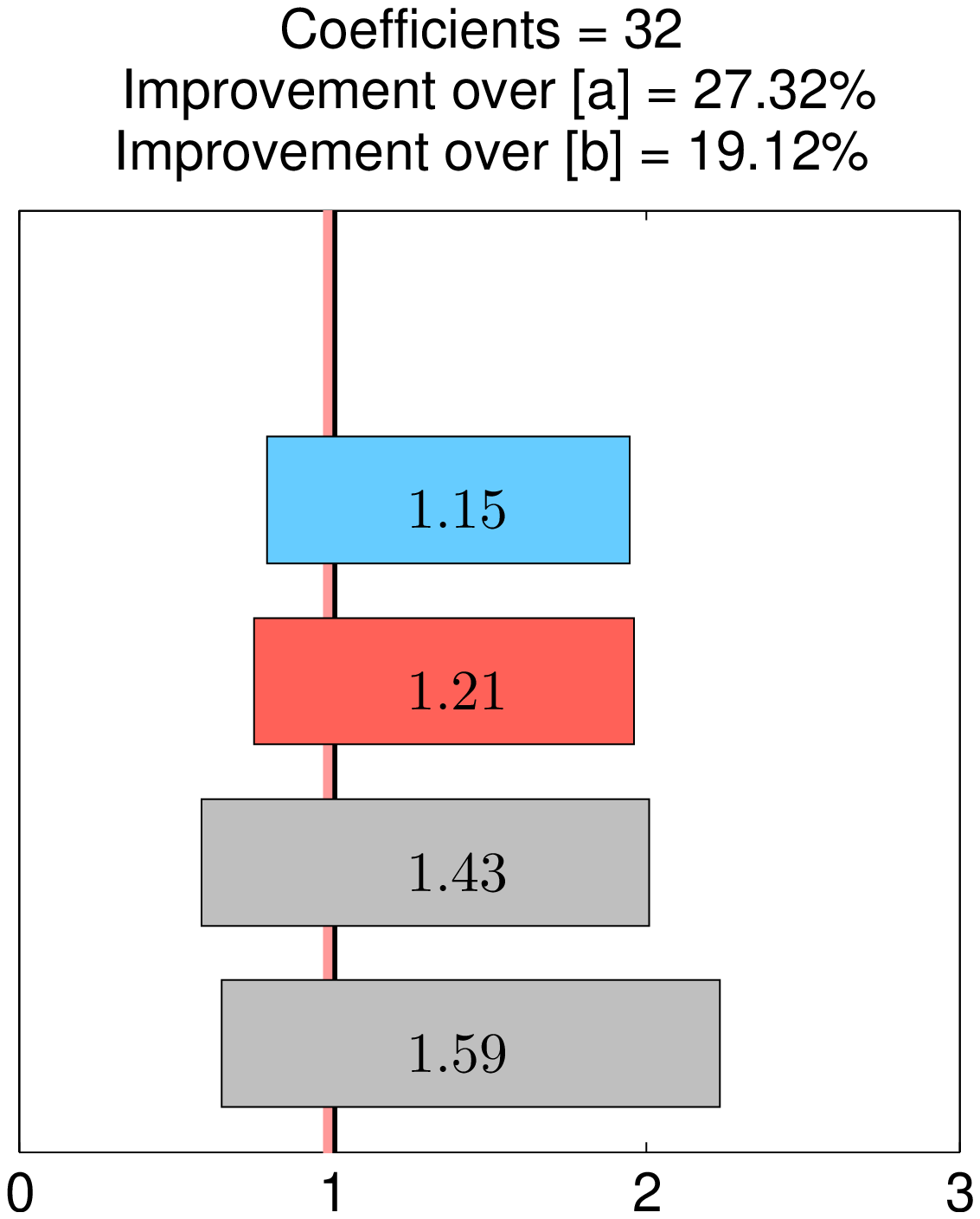} \\ \vspace{0.5cm}
\caption{Comparison of lower and upper bounds for distances for various compression approaches. Distances are shown normalized with respect to the original distance (black vertical line) on the uncompressed data. The red vertical
line indicates the bound given the PCA-based approach, which uses all the dataset to compute the appropriate basis.}
%\caption{Comparison of lower and upper bounds for determimistic approaches. \textbf{(Top row)} Dense case - \textbf{(Bottom row)} Sparse case. All bounds are shown normalized with respect to the original distance (black vertical line) on the uncompressed data. With red vertical lines, we highlight the computed bounds by the ``optimal bounds'' approach which are better estimates than the numerical solution. The average PCA-based distance over all pairwise distances is denoted with a vertical blue line.}
\label{fig:tight_sparse}
\end{figure*}
%%%%%%%%%%%%%%%%%%%%%%%%%%%%%%%%%%%%%%%%%%%%%%%%%%%%%%%%%%%%%%

The weblog dataset consists of approximately 2.2 million data values, distilled from past logs of the IBM.com search page.
We created time-series by assembling how many times important keywords were entered at the search page. We constructed $2,150$ time-series
$\x^{(i)} \in \mathbb{R}^{1024}$, corresponding to an equal number of text queries. Each time-series object captures how many times a particular text query was posed for 1024 consecutive days. We transform the data into a compressible form: for each $\x^{(i)}$, we compute its Fourier representation $\X^{(i)} \in \mathbb{C}^{N}$. 

The second dataset considers images consisting of patterns from a database of VLSI layouts. The dataset is comprised of $\sim 150,000$ images of size 512x512. To support translation-invariant matching from each image, we extract a signature $\x^{(i)} \in \mathbb{R}^{512}$. We describe this process in more
detail later. Finally, we represent the resulting signature using the high-energy wavelet coefficients as basis. Detailed information on this application will be provided later on.

We take special care to perform a fair comparison of all techniques.
For our approach, per compressed object we need to store the following: $(i)$ the values $\X(p_x^{+})$ of each high-energy coefficient, as $s$ double complex values for the Fourier case (16 bytes each) and $s$ double values for the Wavelet case (8 bytes each); $(ii)$ The positions $p_{x}^{+}$ of the high-energy coefficients, as $s$ integer values (4 bytes each) and, $(iii)$ the total remaining energy $e_x$ of the discarded coefficients, which can be represented with one double variable (8 bytes). Overall, our approach allocates space for $r = \left \lceil 2s + \frac{s}{2} + 1\right \rceil$ double values for the Fourier case and $r = \left \lceil s + \frac{s}{4} + \frac{1}{2}\right \rceil$ double values for the Wavelet case. We make sure that all approaches use the \textit{same space}. So, methods that do not require recording of the explicit position of a coefficient, in essence, use more coefficients than our technique.

%In both cases, we assume that the $\mathcal{DB}$ only maintains a low-dimensional representation for each sequence, which %allocates at most $r$ bytes of information, which translates to the high-energy coefficients
%of the particular basis used. %; $r$ is explicitly defined next. 
%This translates into the fact that only a compressed version of each vector, say $\Y^{(i)}, \forall i$, is maintained in a fast access storage means for further processing, e.g., in a fast accessible cache memory. 
%The sparsity level is a function of $r$, i.e., $s = \varphi(r)$, and is also explicitly defined next.

\subsection{Tightness of bounds and time complexity}

First, we illustrate how tight the bounds computed by both $(i)$ deterministic and $(ii)$ probabilistic approaches are. 
\vskip.1in
\noindent \textit{Deterministic approaches}: We consider the following schemes:
\begin{itemize}
\item [$(i)$] \texttt{First Coeffs.}: this scheme only exploits the \emph{first $s$} coefficients of each Fourier-transformed $\X^{(i)}$ to succinctly represent the uncompressed data $\x^{(i)}$. No further computation is performed. %In this case, the optimal bounds are computed based on the true distance, which we assume in this case known for illustration purposes. \vskip-.3in
\item [$(ii)$] \texttt{Best Coeffs.}: this scheme only exploits the \emph{best $s$} coefficients (in magnitude sense) of each Fourier-transformed $\X^{(i)}$ to succinctly represent the uncompressed data $\x^{(i)}$. Similarly to $(i)$, no further computation is performed. %Again, the optimal bounds are computed based on the true distance, which we assume in this case known for illustration purposes. \vskip-.3in
\item [$(iii)$] \texttt{PCA-based}. This technique uses the PCA-based dimensionality reduction approach. Note that this approach requires as input the complete data, and not each object separately. Given the complete dataset, one can compute its SVD to extract the most dominant subspaces that explain the most variance in the data. To achieve dimensionality reduction, one projects the time-series vectors onto the best $d$-dimensional subspace by multiplying the data points with the set of $d$ dominant left singular vectors.
\item [$(iv)$] \texttt{Optimal bounds - Numerical}: Here, we use off-the-shelf convex solvers to numerically solve problem \eqref{opt_d} through second-order optimization schemes. Numerical approaches are not exact and the minimizer lies within a predefined numerical tolerance $\epsilon$. In our experimental setup, we use the well established \texttt{CVX} library where \eqref{opt_d} is solved with tolerance $\epsilon = 10^{-8}$. \vskip-.3in
\item [$(v)$] \texttt{Optimal bounds}: our approach in which the upper and lower bounds 
on the distance are solved using the closed-form waterfilling ideas described.
\end{itemize} To provide a fair comparison, all approaches use the same amount of space per compressed object for all experiments.
%%%%%%%%%%%%%%%%%%%%%%%%%%%%%%%%%%%%%%%%%%%%%%%%%%%%%%%%%%%%%%
\begin{figure*}[!htpb]
\centering
\includegraphics[width=0.295\textwidth]{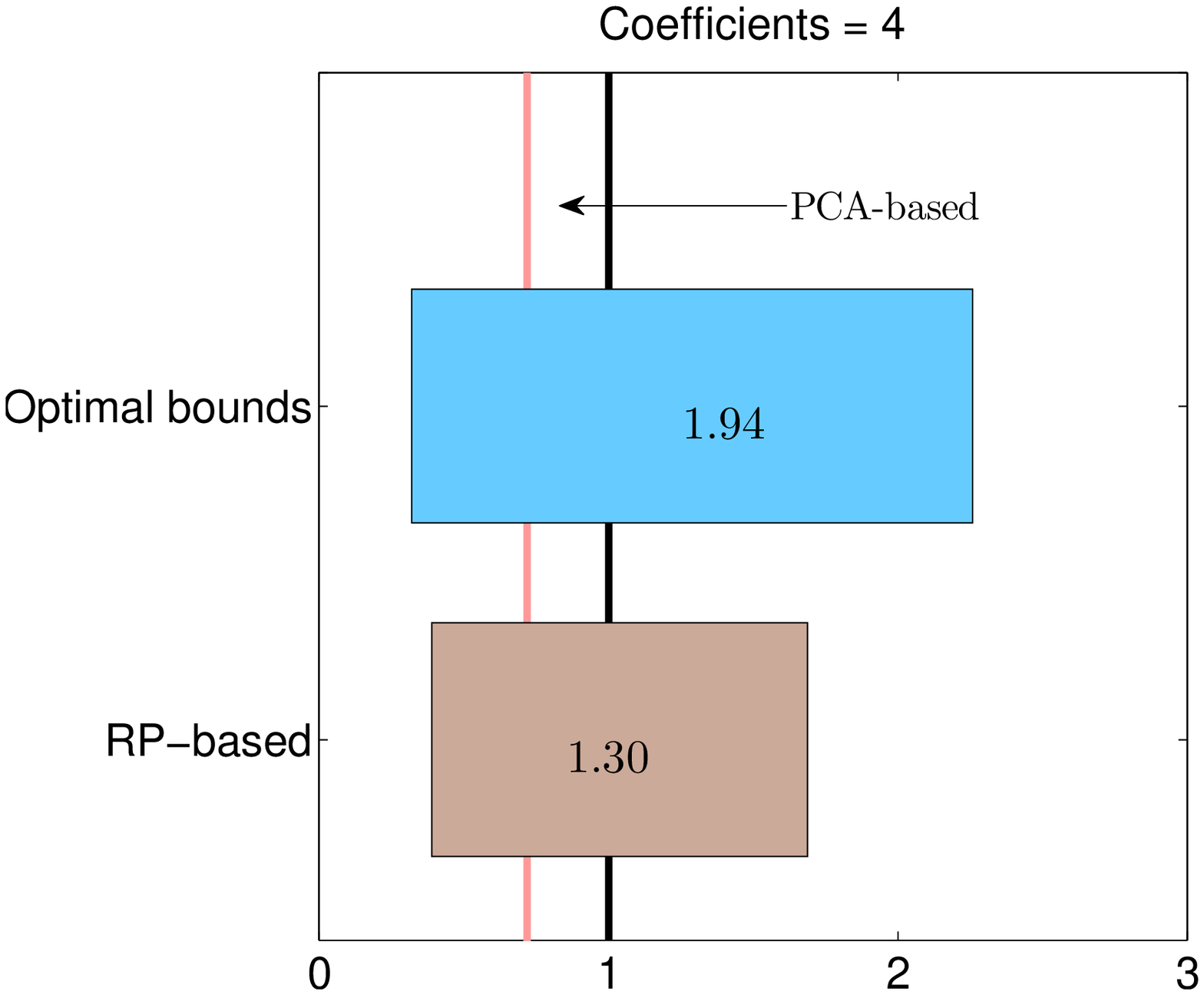} \includegraphics[width=0.22\textwidth]{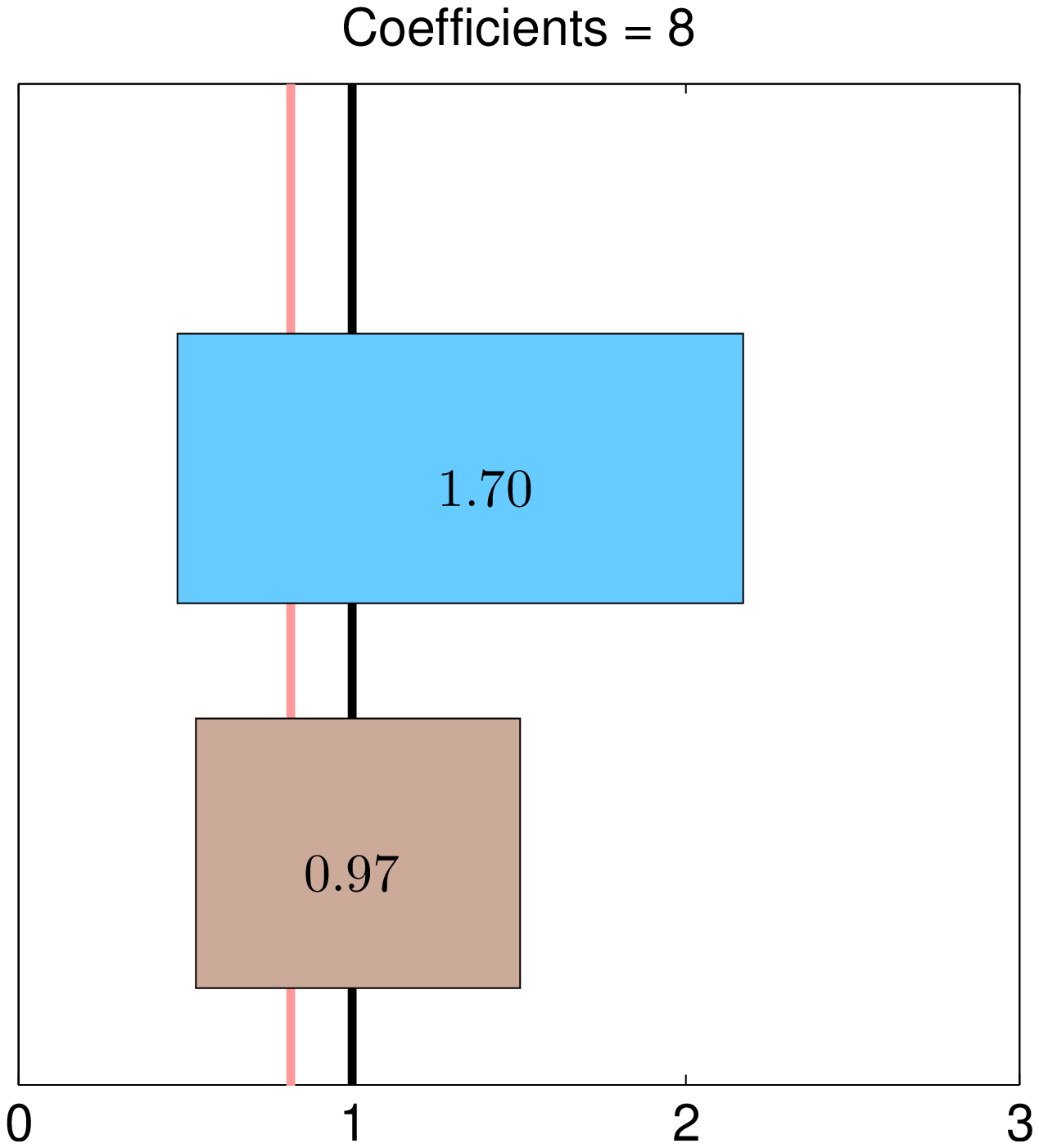} \includegraphics[width=0.22\textwidth]{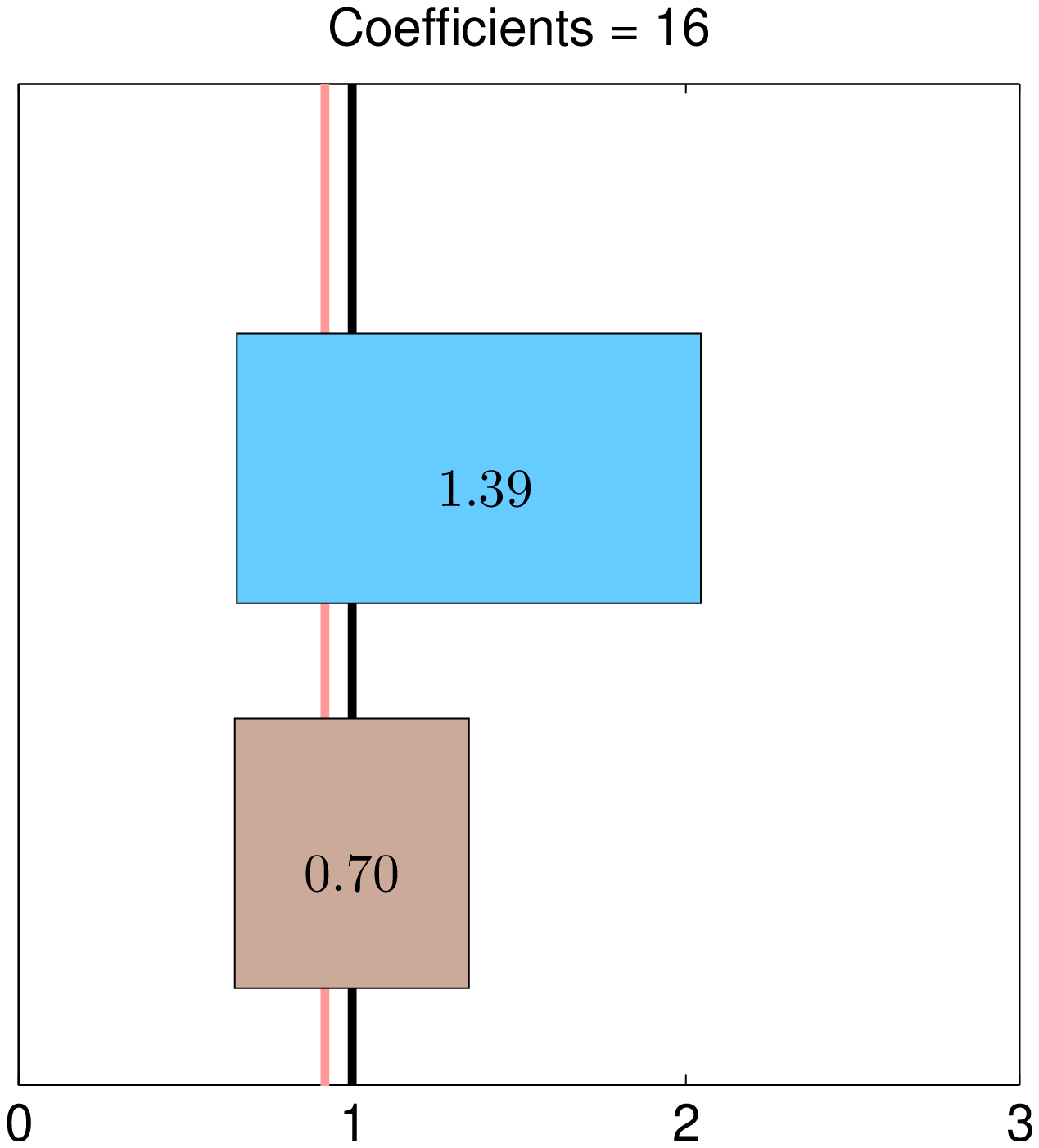} \includegraphics[width=0.22\textwidth]{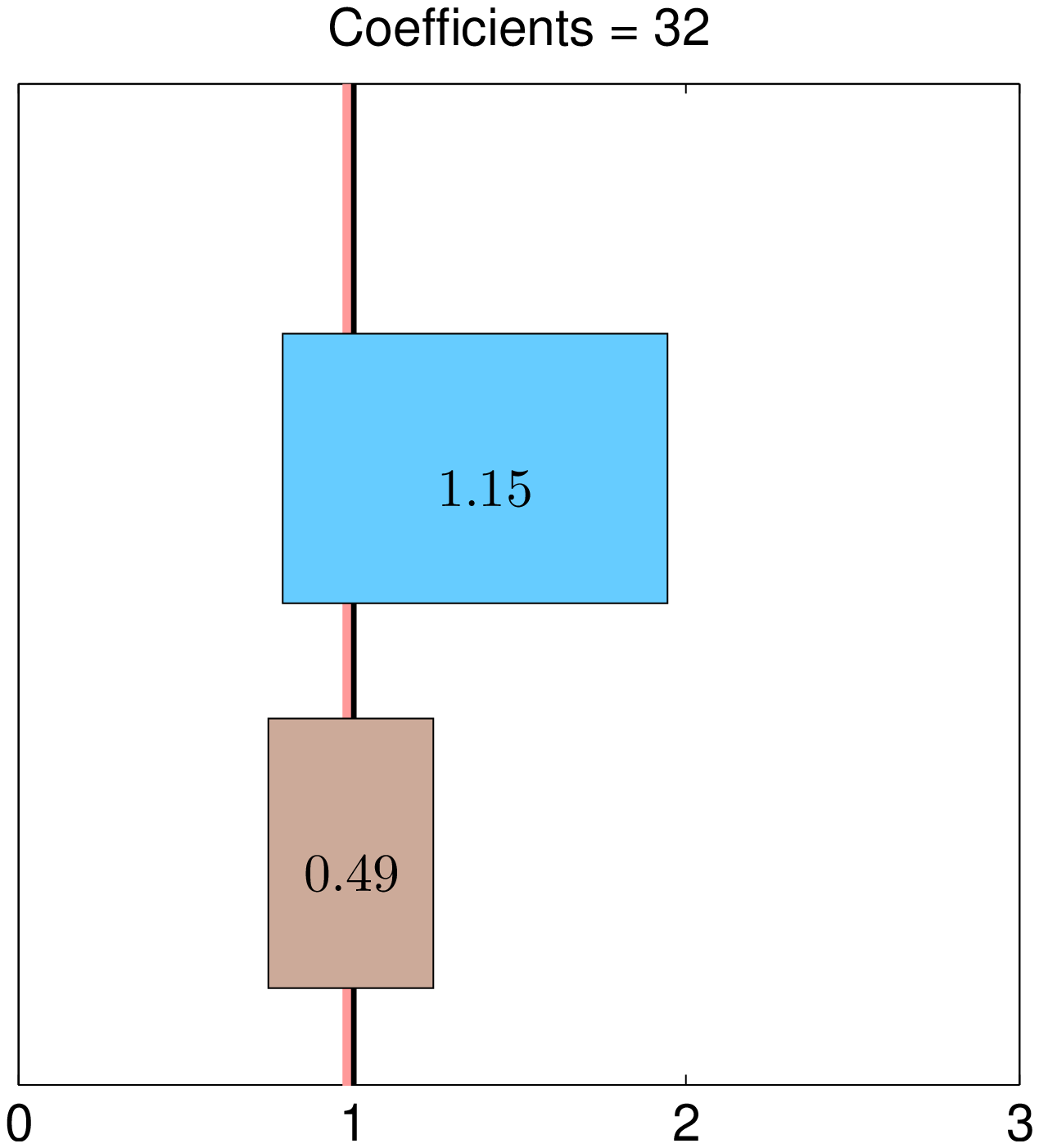} \\
\includegraphics[width=0.295\textwidth]{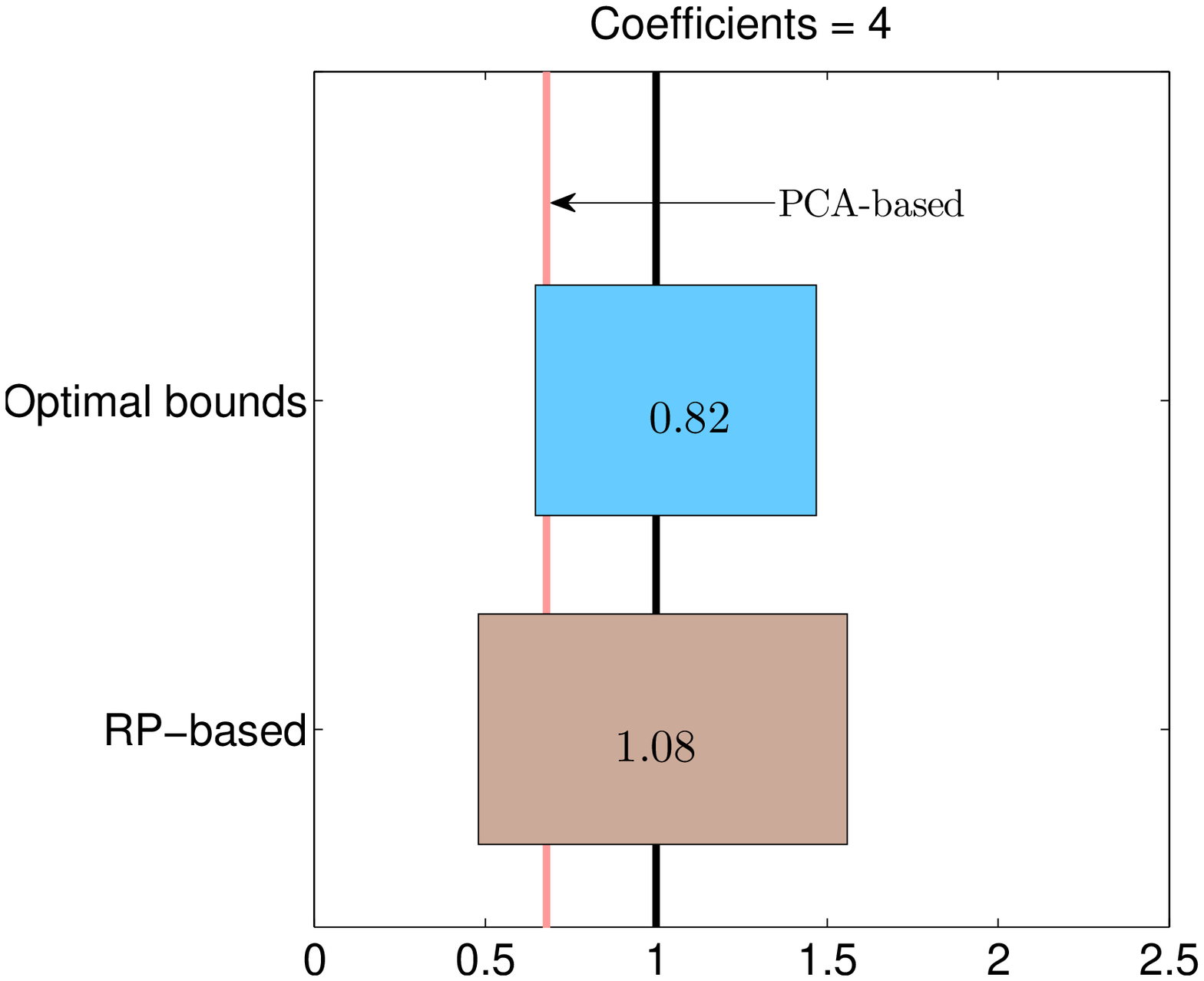} \includegraphics[width=0.222\textwidth]{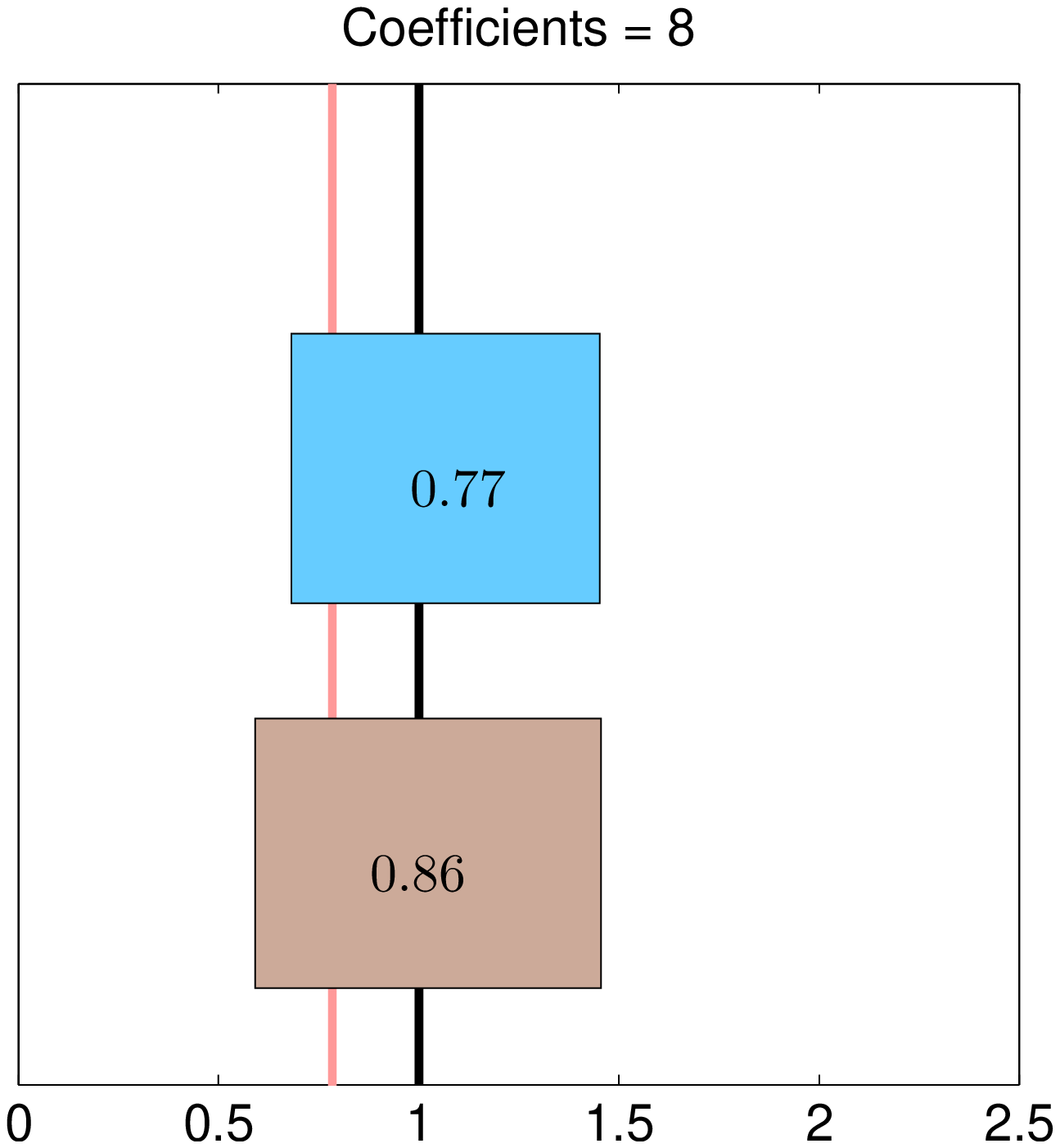} \includegraphics[width=0.222\textwidth]{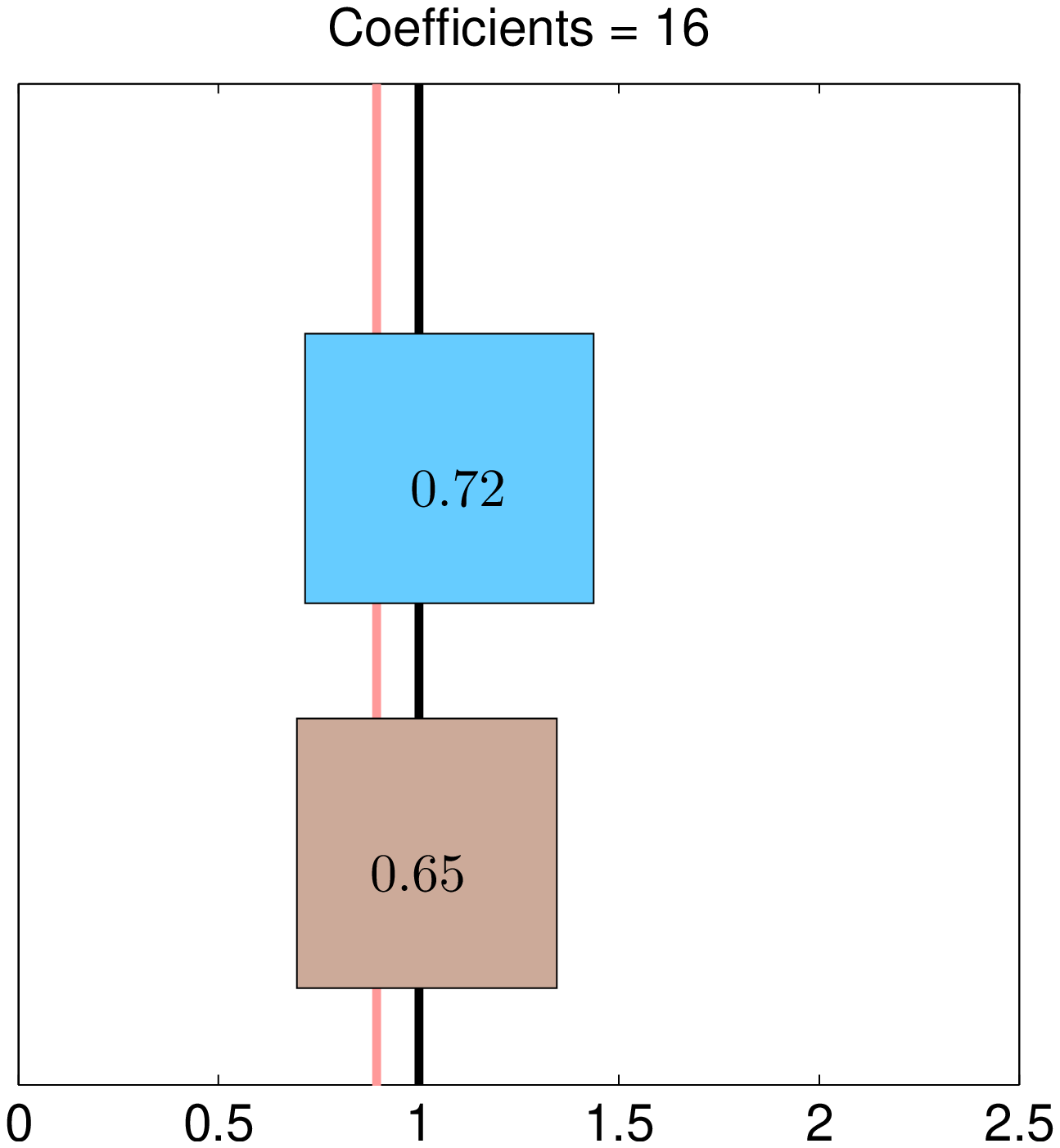} \includegraphics[width=0.222\textwidth]{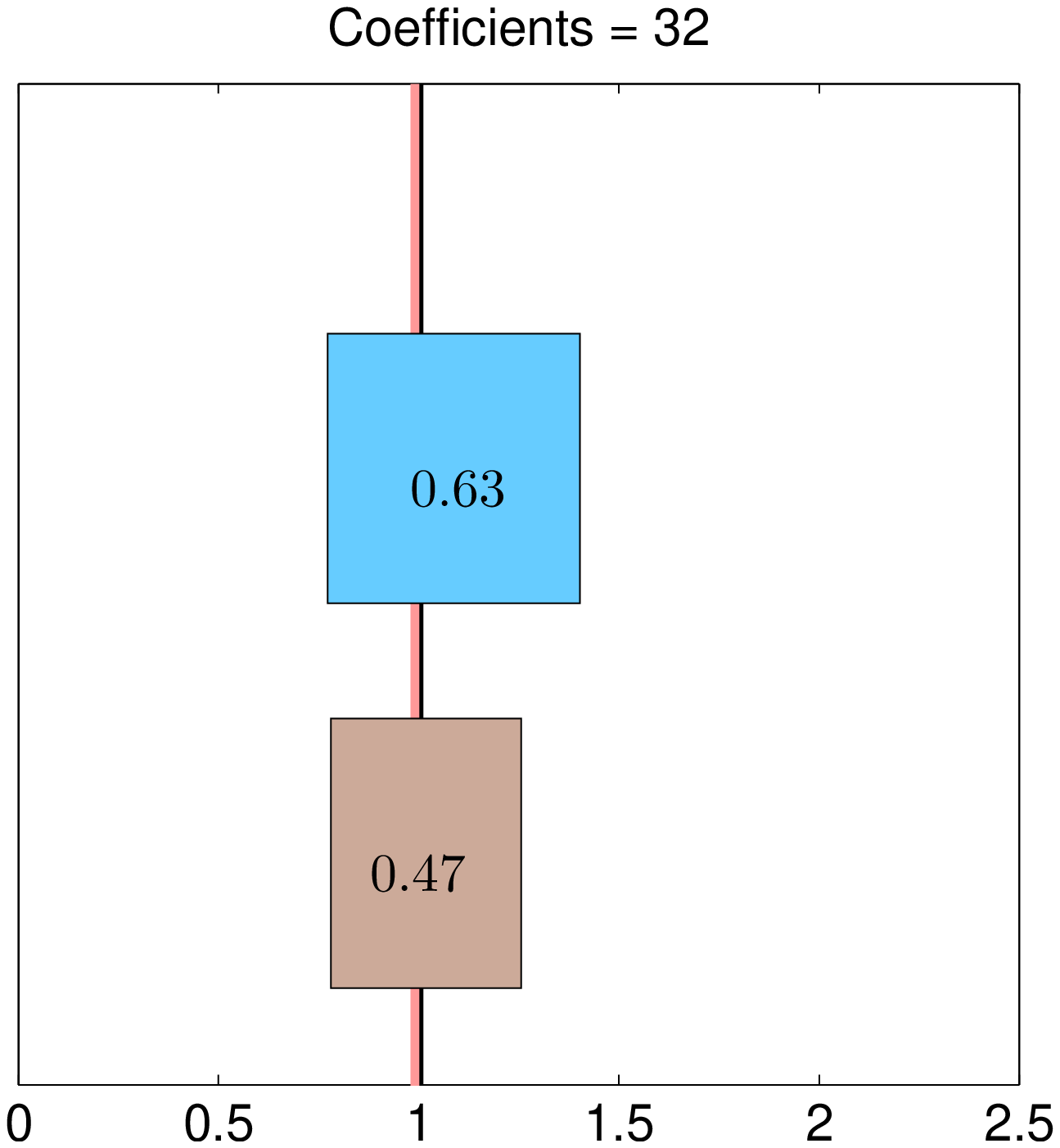}
\caption{Comparison of lower and upper bounds for the RP-based approach with the PCA-based and optimal bounds approaches. \textbf{Top row:} dense case; \textbf{Bottom row:} sparse case. All bounds are shown normalized with respect to the original distance (black vertical line) on the uncompressed data. The average PCA-based distance over all pairwise distances is denoted by a vertical red line.}
\label{fig:RPtight_sparse}
\end{figure*}
%%%%%%%%%%%%%%%%%%%%%%%%%%%%%%%%%%%%%%%%%%%%%%%%%%%%%%%%%%%%%%

\vskip.1in
\noindent \textit{Probabilistic approaches}: Here, any estimation of distances in the original data space holds only in probability and in the asymptotic sense. The performance recorded represents only an average behavior; i.e., we can always construct adversarial cases where these schemes perform poorly for a fixed projection matrix.
\begin{itemize}
\item [$(i)$] \texttt{RP-based approach}: in this case, we randomly generate \texttt{GRP}/\texttt{BRP}/\texttt{ARP} $d$-dimensional mappings $\boldsymbol{\Phi}$. As this approach is probabilistic, we perform $10^3$ Monte Carlo iterations to independently generate $\boldsymbol{\Phi}$'s and implicitly extract an approximate range of lower and upper bounds. \vskip-.3in
\end{itemize}

\medskip \noindent
\textit{Quality of approximation:}
Using the weblog time-series data, Figure \ref{fig:tight_sparse} illustrates the results for various
compression ratios (i.e. number of coefficients  used).
Using the \texttt{Optimal bounds} scheme, we can achieve up to 27\% tighter bounds. 
The \texttt{Optimal bounds - Numerical} approach provides marginally worse results and has a very high computational cost.

The comparsion with probabilistic approaches based on Random Projections is shown in Figure \ref{fig:RPtight_sparse} (top row).
%Note that such techniques return only one distance estimate, not both a lower- and an upper-bound. Some times
%it is an over-estimate, other times an under-estimate of the original distance. However, we cannot know when.
%Here we just record separately the over- and under-estimates because we know the original distance.
Among the \texttt{RP-based} approaches, the \texttt{GRP} matrices on average attain the 
best approximation of the original distance. We should highlight, though, that our approach computes nontrivial bounds \emph{for any pair of compressed sequences}, in contrast to RP-based schemes, where the guarantees hold in probability.

Furthermore, to measure the efficiency and robustness of our approach when the data is \emph{naturally} sparse in some basis, i.e., most of the entries in $N$-dimensions are zero, we synthetically ``sparsify'' the weblog data: given each uncompressed sequence $\X^{(i)}$, we assume that $\X^{(i)}$ is perfectly represented by using only $3s$ Fourier coefficients, where $s = \left\{16, 32, 64 \right\}$; i.e., we subsample the signal such that only $3s$ among $N$ coefficients are nonzero. 
Then, we keep $s$ coefficients. % we achieve $33\%$ sequence compression, for each $s$ value. 
Figure \ref{fig:RPtight_sparse} (bottom row) illustrates the performance of our approach as compared to RP-based and PCA-based approaches. As the data under consideration are sparse, $e_x$ and $e_q$ estimates are tighter, providing better upper and lower bounds than in the non-sparse case (see Figure \ref{fig:RPtight_sparse}, top row).

\medskip
\noindent 
\textit{Running time:} The time complexity of each approach under comparison is given in Figure \ref{fig:time}. 
%Even though both the Water-filling and the Numerical solutions significantly decrease the uncertainty with respect to the distance estimate, they are not equally efficient.
The graph reports the average running time (in msec) for computing the distance estimates between one pair of sequences.
It is evident that the proposed analytical solution based on double-waterfilling 
%presents a very lightweight solution for distance estimation. It 
is at least \underline{two orders of magnitude faster} than the numerical approach. More importantly, the optimal solution through waterfilling is not computationally burdening:
competing approaches require 1-2 msec for computation, whereas the waterfilling approach requires around 2.5 msec. 
The small additional time is attributed to the fact that the algorithm distributes the currently
remaining energy over two to three iterations, thus incurring only minimal overhead. 
The numerical solution runs for more than 1 sec and is considered impractical for large mining tasks.
%%%%%%%%%%%%%%%%%%%%%%%%%%%%%%%%%%%%%%%%%%%%%%%%%%%%%%%%%%%%%%
\begin{figure*}[!htpb]
\centering
\includegraphics[width=0.292\textwidth]{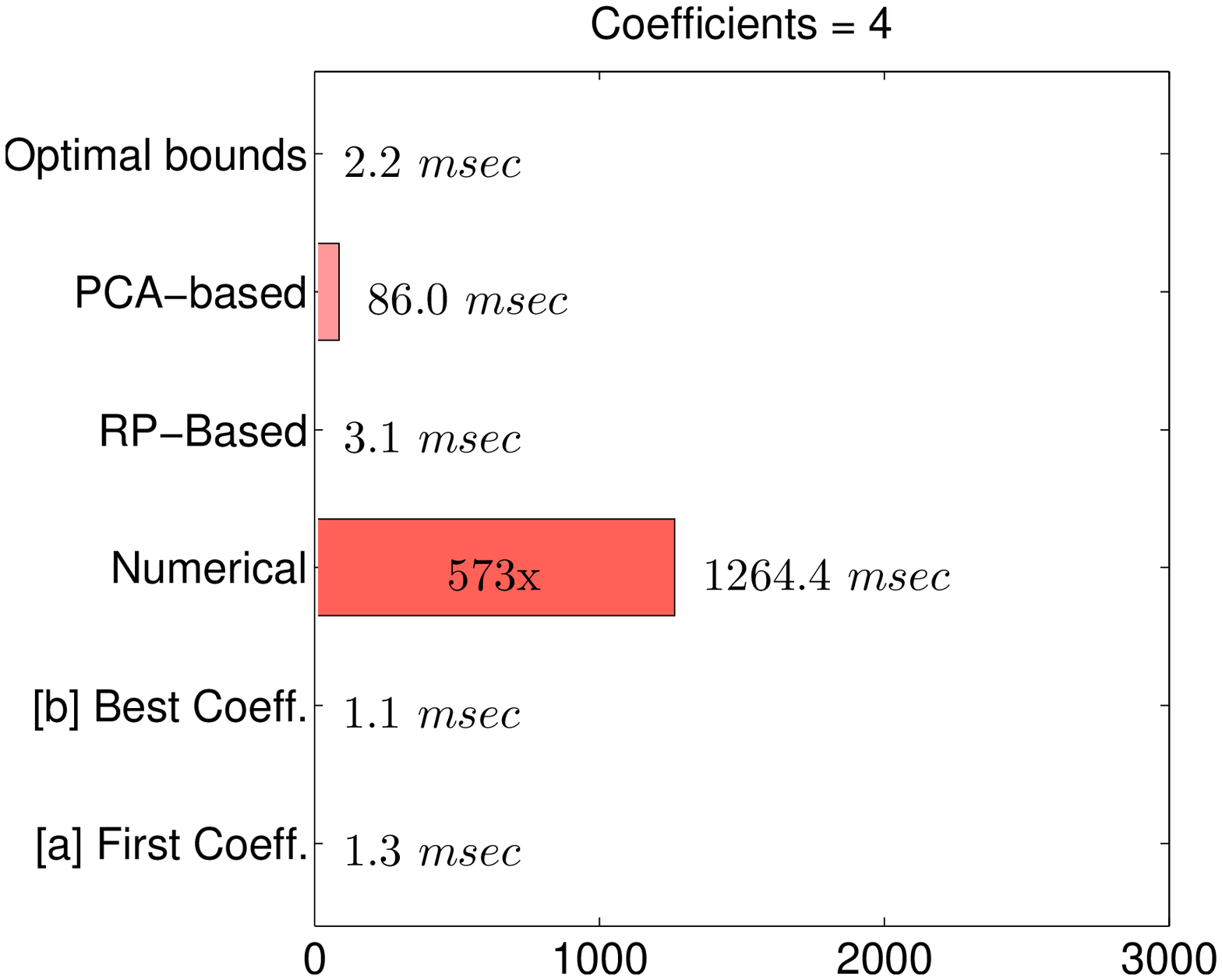} \includegraphics[width=0.222\textwidth]{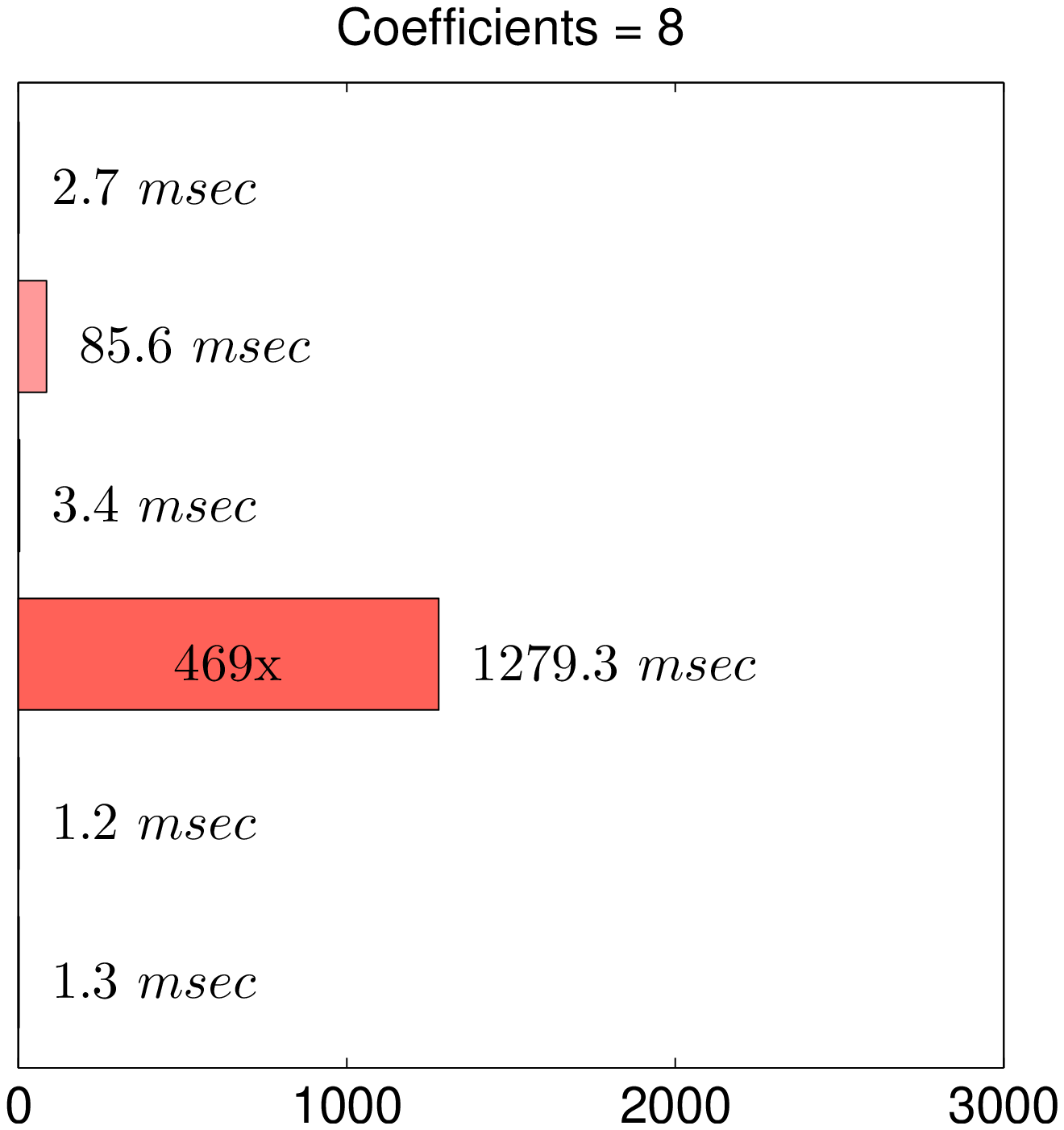} \includegraphics[width=0.222\textwidth]{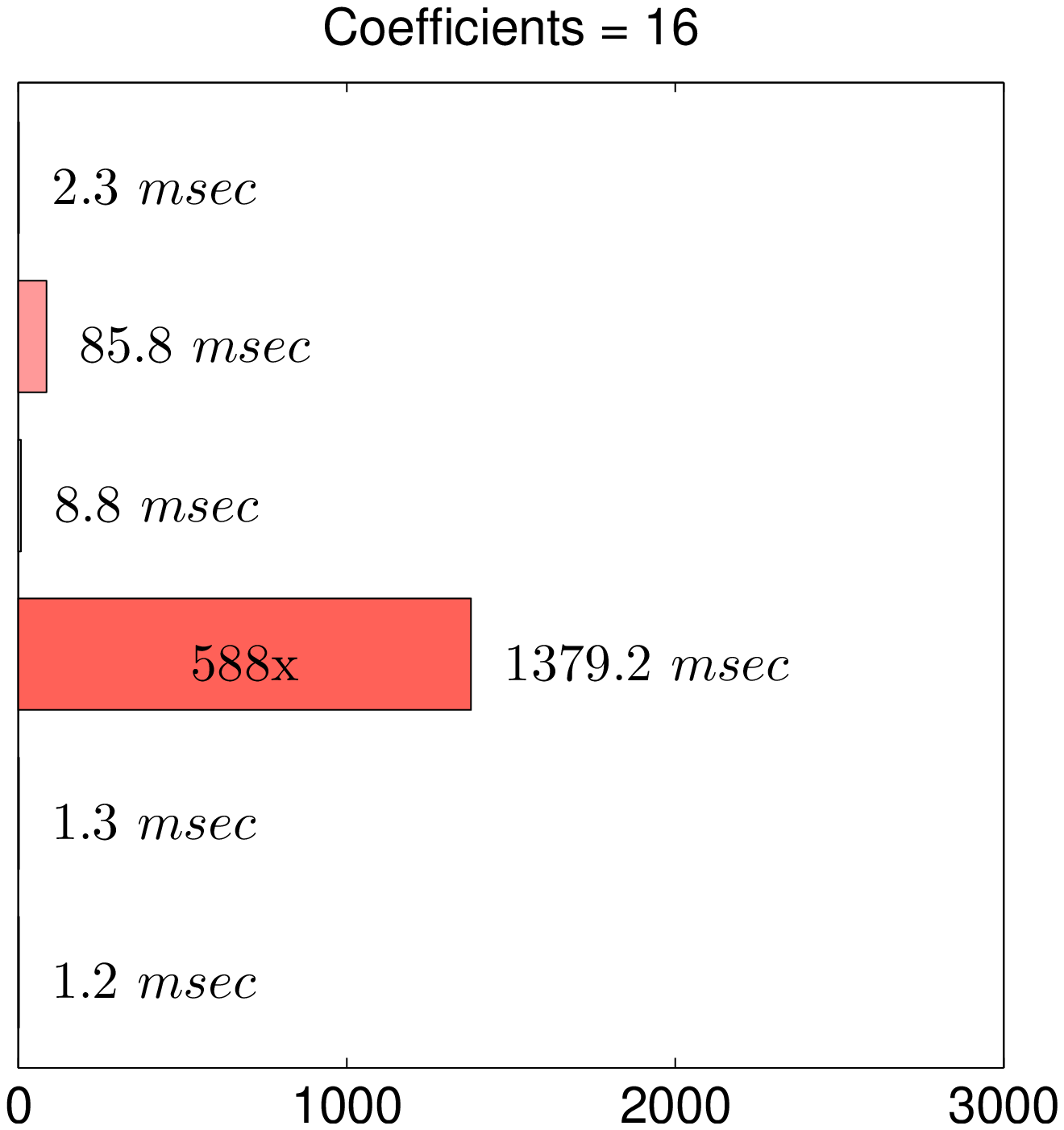} \includegraphics[width=0.222\textwidth]{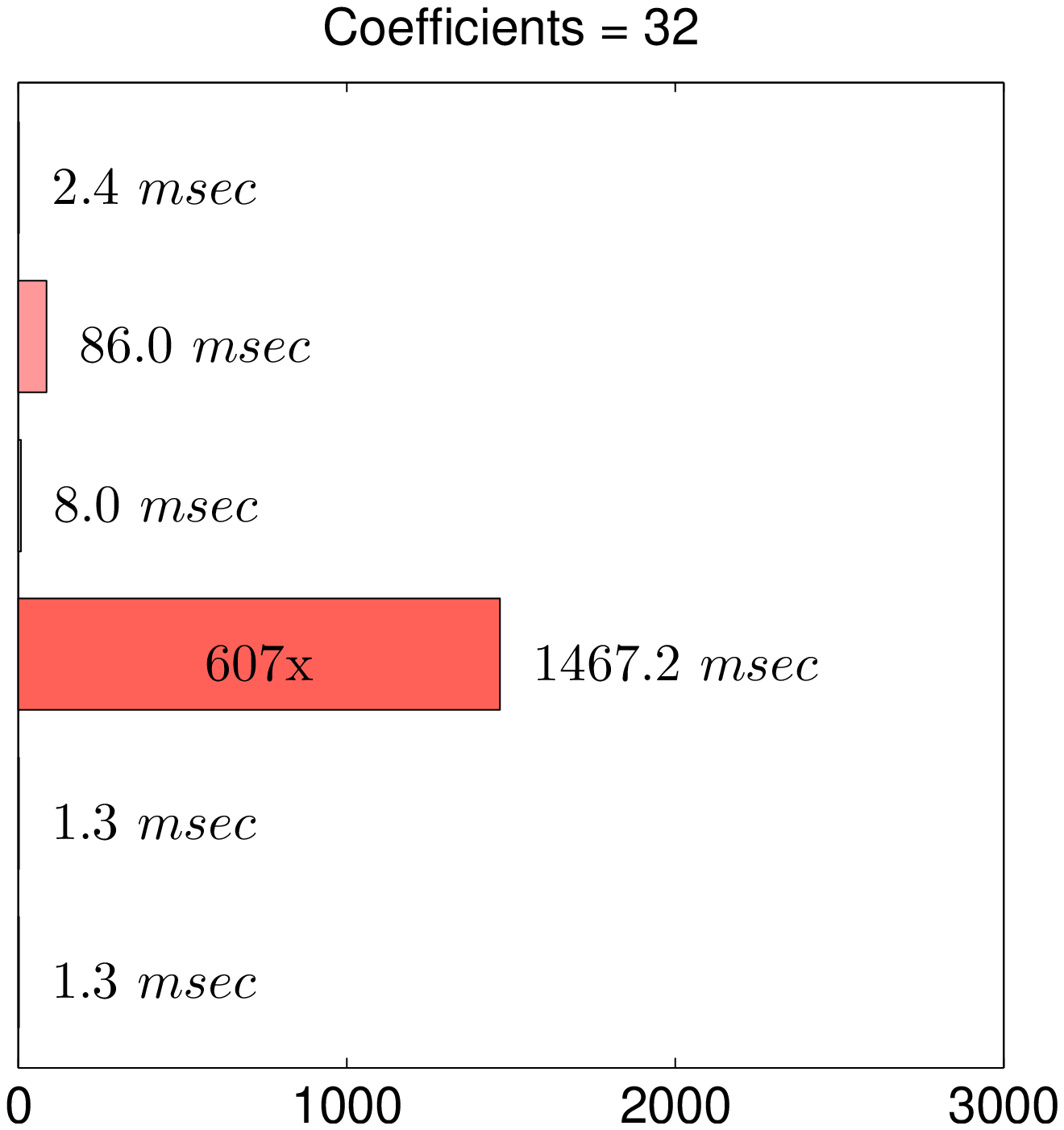}
\caption{Runtime of various techniques to compute a distance estimate for one pair of objects.}
\label{fig:time}
\end{figure*}
%%%%%%%%%%%%%%%%%%%%%%%%%%%%%%%%%%%%%%%%%%%%%%%%%%%%%%%%%%%%%%

\subsection{Mining in the compressed domain}
We evaluate the quality of mining operations when operating 
directly on the compressed data. We compare with techniques
based on PCA and Random-Projections.

\medskip \noindent
\textit{Which distance proxy?:}
While techniques based on PCA and RP provide only a single distance estimate
in the compressed domain, our approach provides both a lower and an upper bound.
Earlier, we explained that one can use one of three potential proxies
for the distance: the upper bound $u_b$, the lower bound $\ell_b$, or the average of the two.
So, first, we evaluate which of the three metrics provides better distance estimation using
a $k$-NN task on the weblog dataset. Figure \ref{fig:metric} shows
how much large a percentage of the common $k$-NN objects is returned in the compressed domain, 
versus those we would have gotten on the uncompressed data.
The experiment is conducted under increasing number of coefficients. One can observe
that the average of the lower and upper bounds shows overall superior performance, 
and this is the distance proxy we use for the remaining of the experiments.
%%%%%%%%%%%%%%%%%%%%%%%%%%%%%%%%%%%%%%%%%%%%%%%%%%%%%%%%%%%%%%
\begin{figure*}[!htp]
\centering
\includegraphics[width=0.25\textwidth]{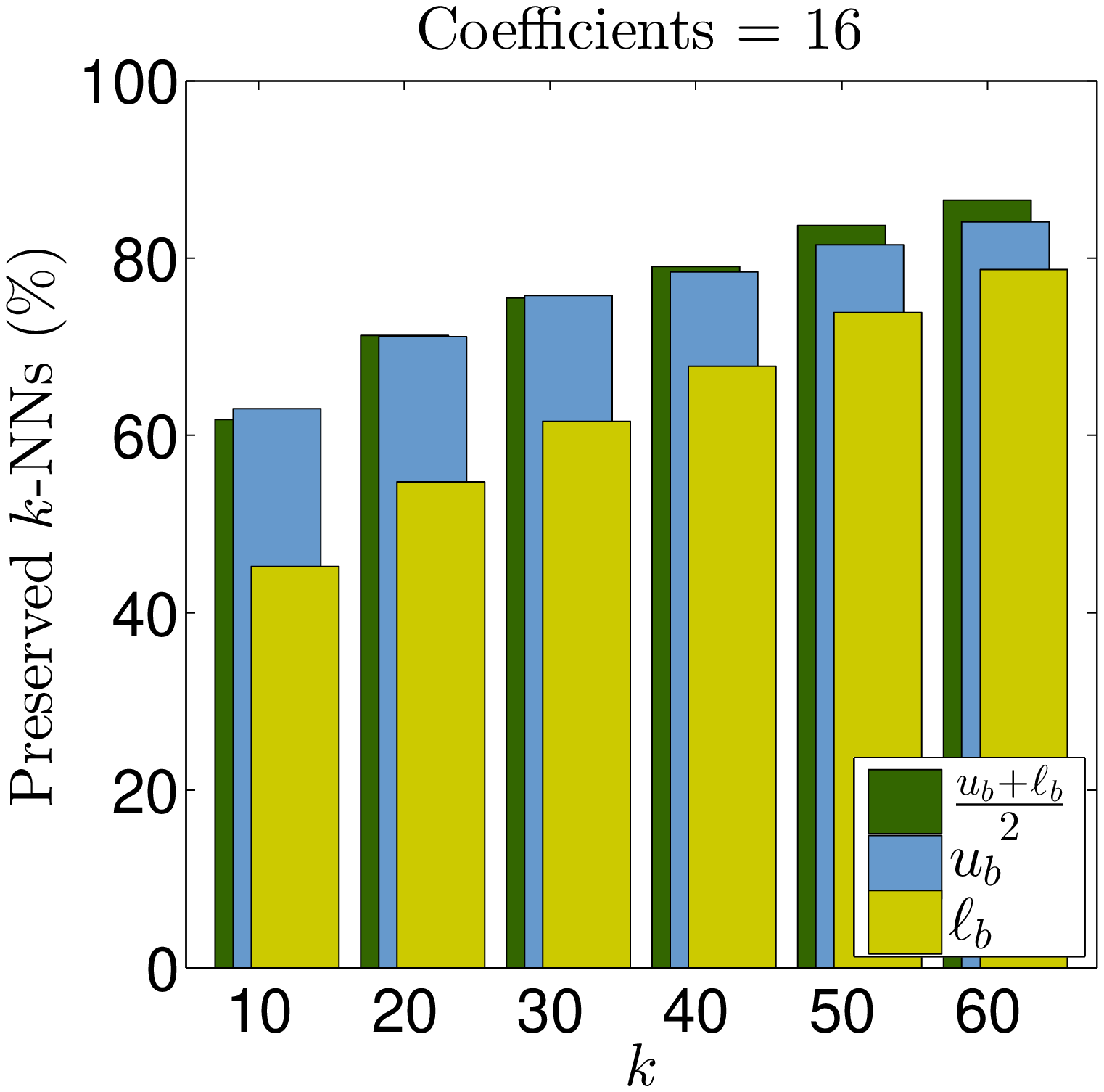} \includegraphics[width=0.25\textwidth]{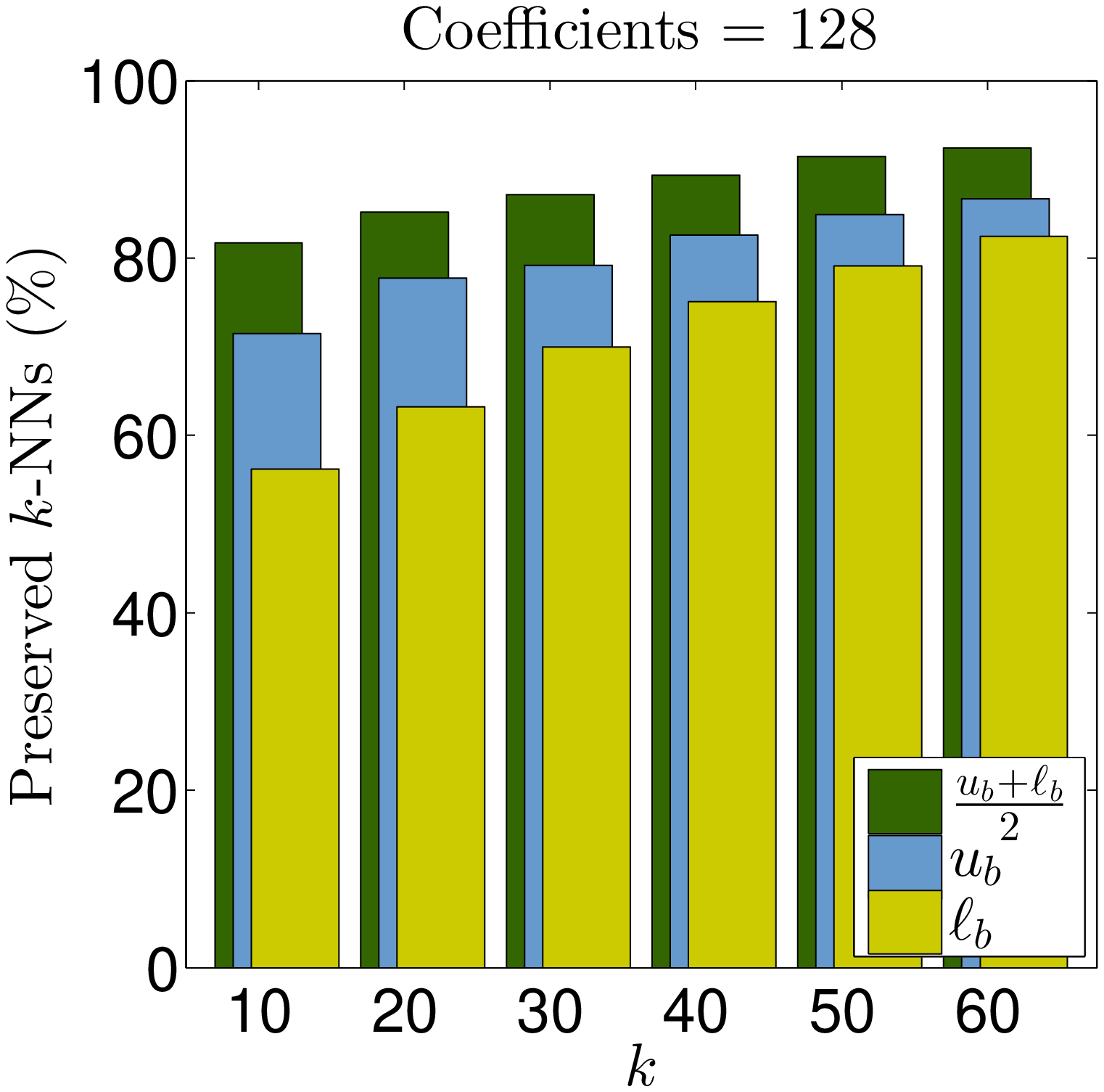} \includegraphics[width=0.25\textwidth]{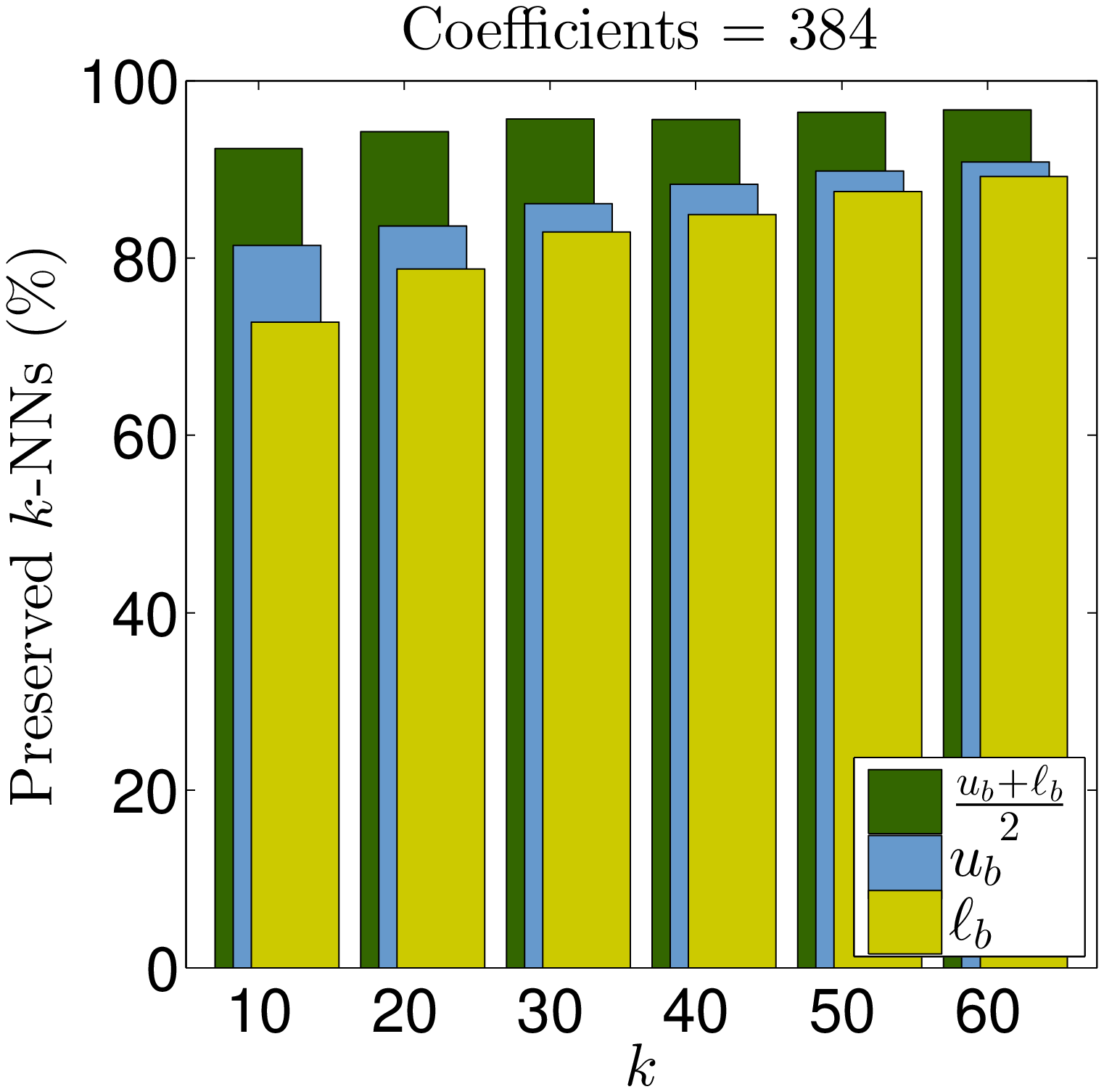}
\caption{Comparison of distance-estimation metrics for the $k$-NN task using the weblog data. The bars depict mean values of 100 Monte Carlo iterations.}
\label{fig:metric}
\end{figure*}
%%%%%%%%%%%%%%%%%%%%%%%%%%%%%%%%%%%%%%%%%%%%%%%%%%%%%%%%%%%%%%

%%%%%%%%%%%%%%%%%%%%%%%%%%%%%%%%%%%%%%%%%%%%%%%%%%%%%%%%%%%%%%
\begin{figure*}[!htp]
\centering
\includegraphics[width=0.325\textwidth]{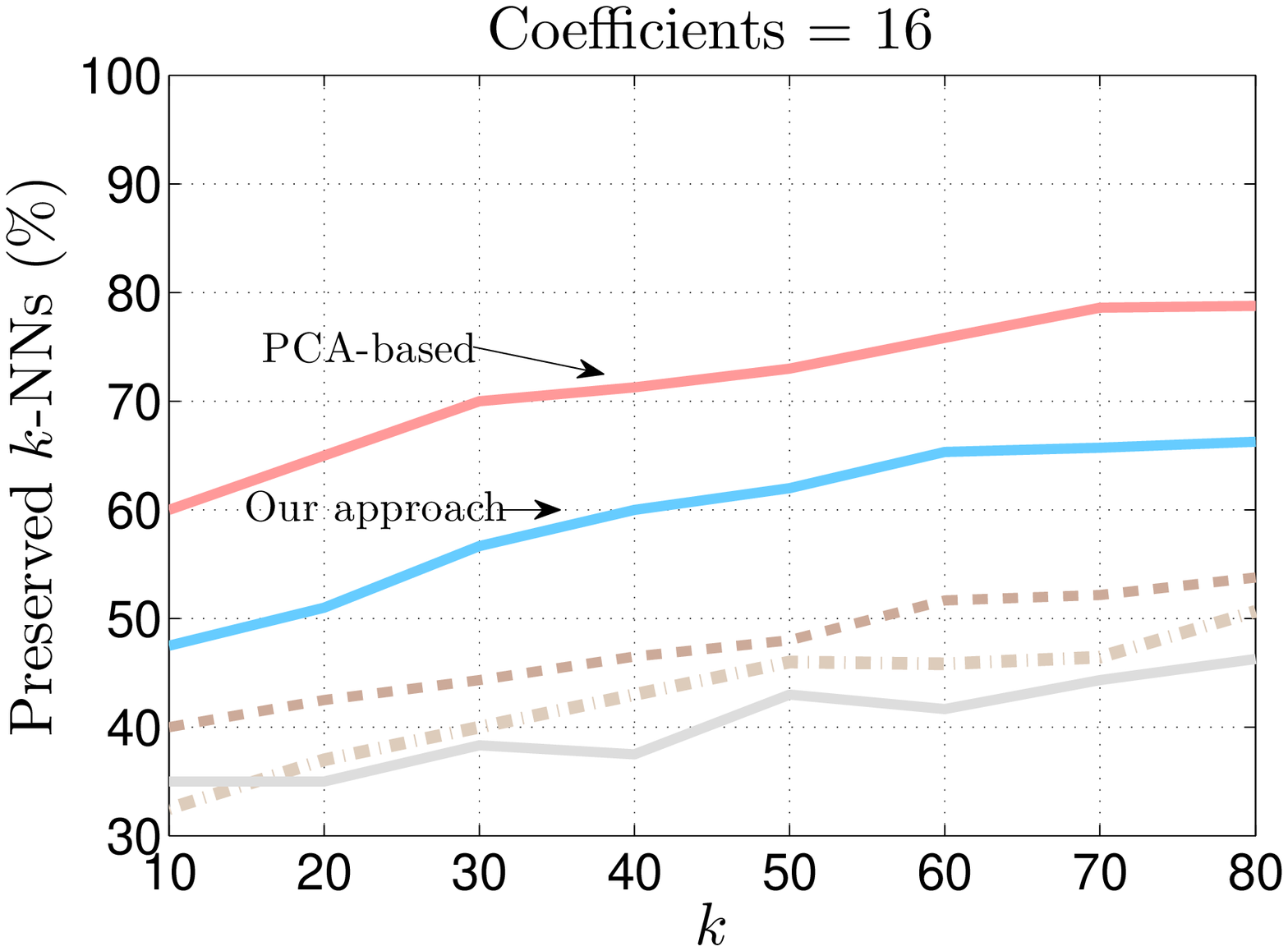} \includegraphics[width=0.325\textwidth]{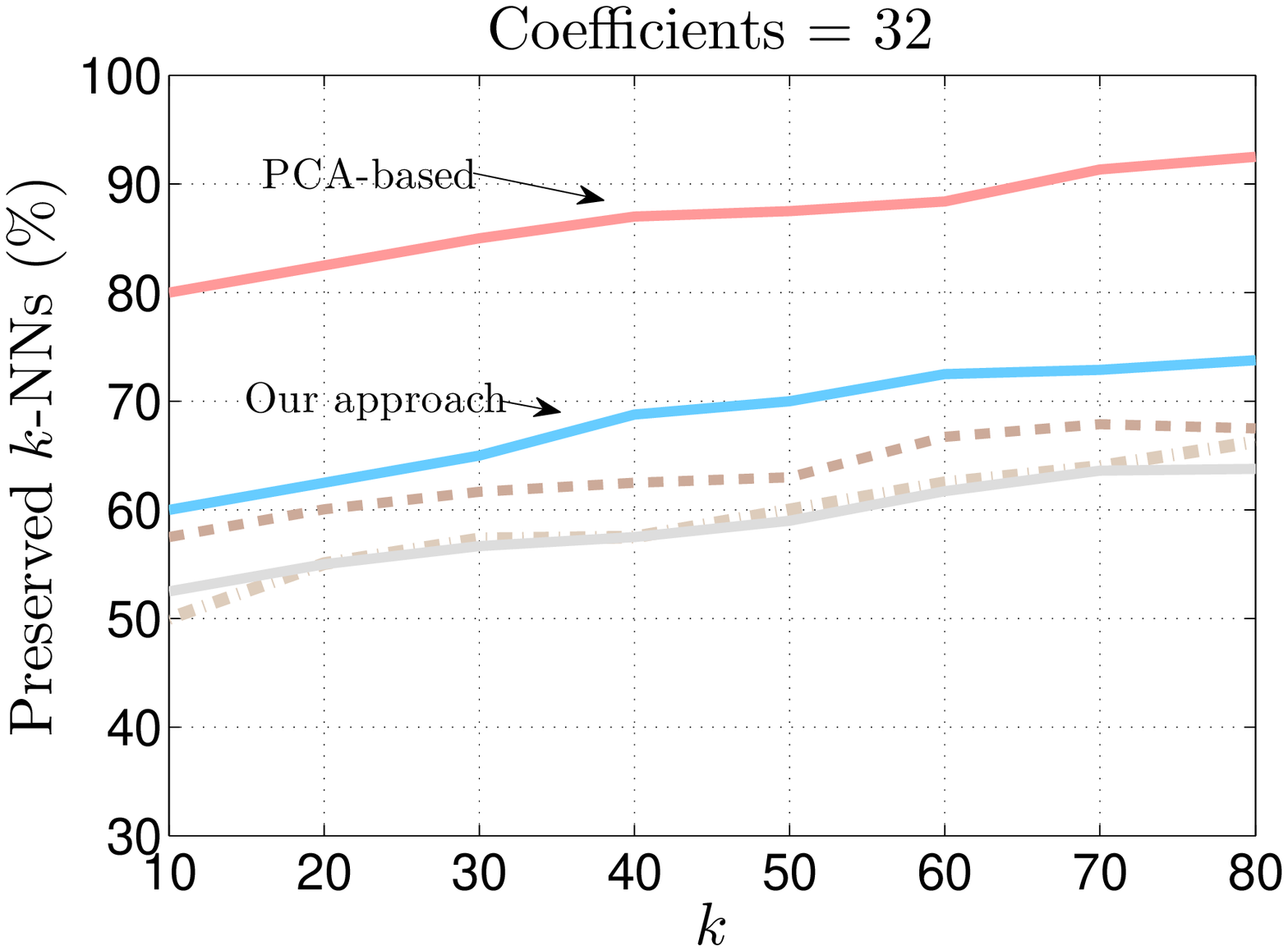} \includegraphics[width=0.325\textwidth]{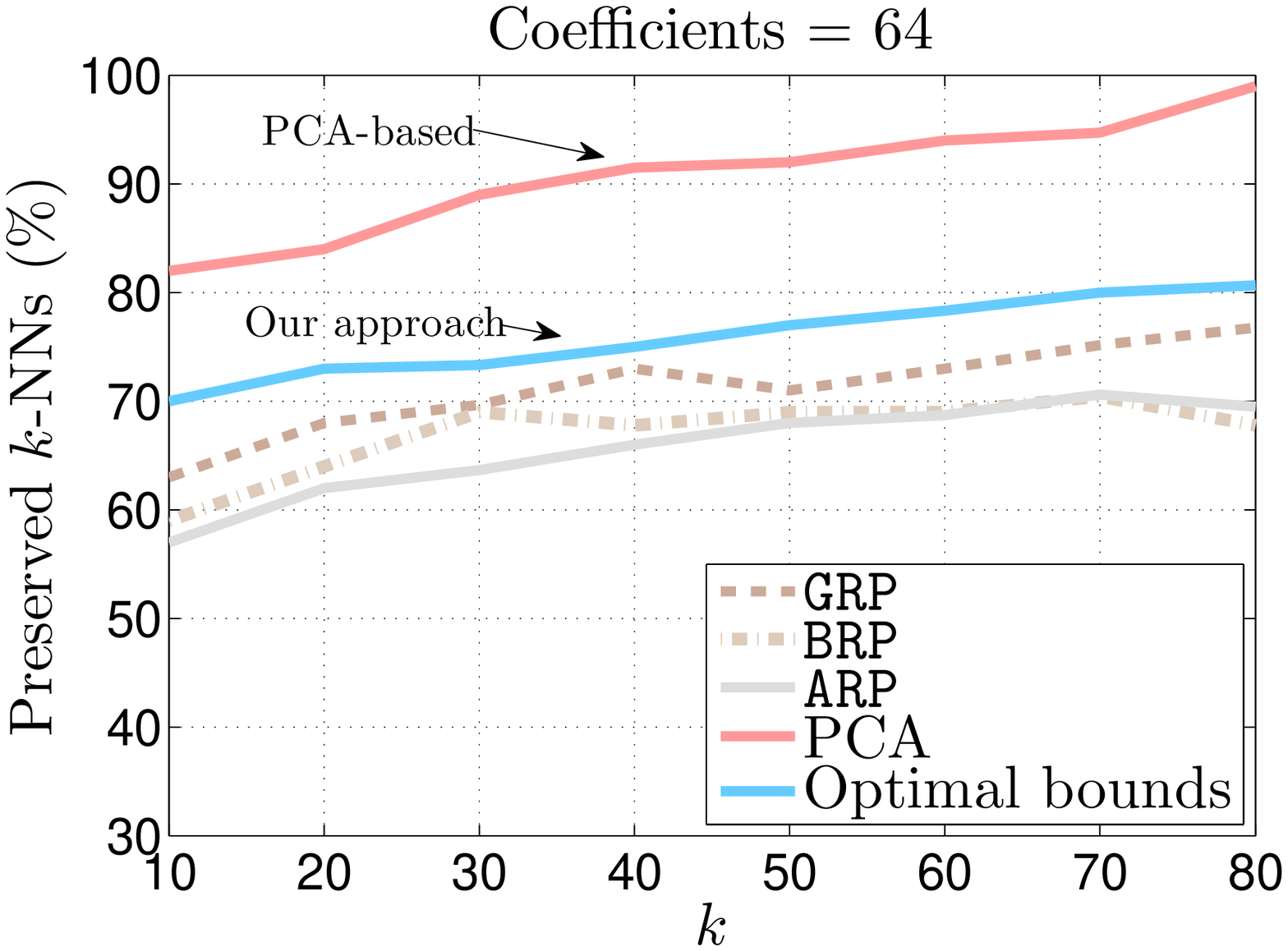} %\includegraphics[width=0.25\textwidth]{}
\caption{Comparison of the algorithms under consideration for the $k$-NN task, where $\X^{(i)}$/$\x^{(i)}$ is originally dense. The curves depict mean values of 100 Monte Carlo iterations.}% Top row: bar-shaped accuracy comparison for some representative cases. Bottom row: expansive illustration of accuracy performance over the whole range of $k$ values.}
\label{fig:comp}
\end{figure*}
%%%%%%%%%%%%%%%%%%%%%%%%%%%%%%%%%%%%%%%%%%%%%%%%%%%%%%%%%%%%%%

\medskip
\noindent \textit{$k$-NN performance:} 
As mentioned, we perform comparisons under {\it fair} settings.
Each object under our methodology is represented using $r = \left \lceil 2s + \frac{s}{2} + 1\right \rceil$ double variables
using $s$ coefficients in the Fourier basis. To compress each object using RP-based or PCA-based $k$-NN, we project each sequence onto $d$ dimensions such that the resulting low-dimensional projection point does not exceed the memory size of $r$ double values. The random matrices $\boldsymbol{\Phi}$ in the RP-based $k$-NN are {\it universally} near-isometric; i.e., with high probability, the same $\boldsymbol{\Phi}$ matrix serves as a good linear map for {\it any} input signal, the creation of $\boldsymbol{\Phi}$'s is performed once offline for each case; thus, we assume that this operation requires $\mathcal{O}(1)$ time and space complexity.

Figure \ref{fig:comp} displays the results, averaged over $100$ queries. Naturally, the PCA-based $k$-NN approach 
returns the best results because it uses all the data to construct the appropriate basis
on which to project the data. However, it requires the computation of a Singular Value or Eigenvalue Decomposition, a $\mathcal{O}(dN)$ (using Krylov power methods \cite{cullum2002lanczos}) and $\mathcal{O}(d^2 N)$ time complexity operation in the most favorable and average scenario, respectively. %As already mentioned though, PCA-based approach can {\it arbitrarily distort} the geometry of the signals after the projection of $\mathcal{X}$ onto the $d$-dimensional space: one can construct cases where two points $\x^{(i)}, x^{(j)}$ can be mapped to a single point, a scenario which is guaranteed not to happen in the RP-based methods.

%%%%%%%%%%%%%%%%%%%%%%%%%%%%%%%%%%%%%%%%%%%%%%%%%%%%%%%%%%%%%%
\begin{figure*}[!htpb]
\centering
\includegraphics[width=0.325\textwidth]{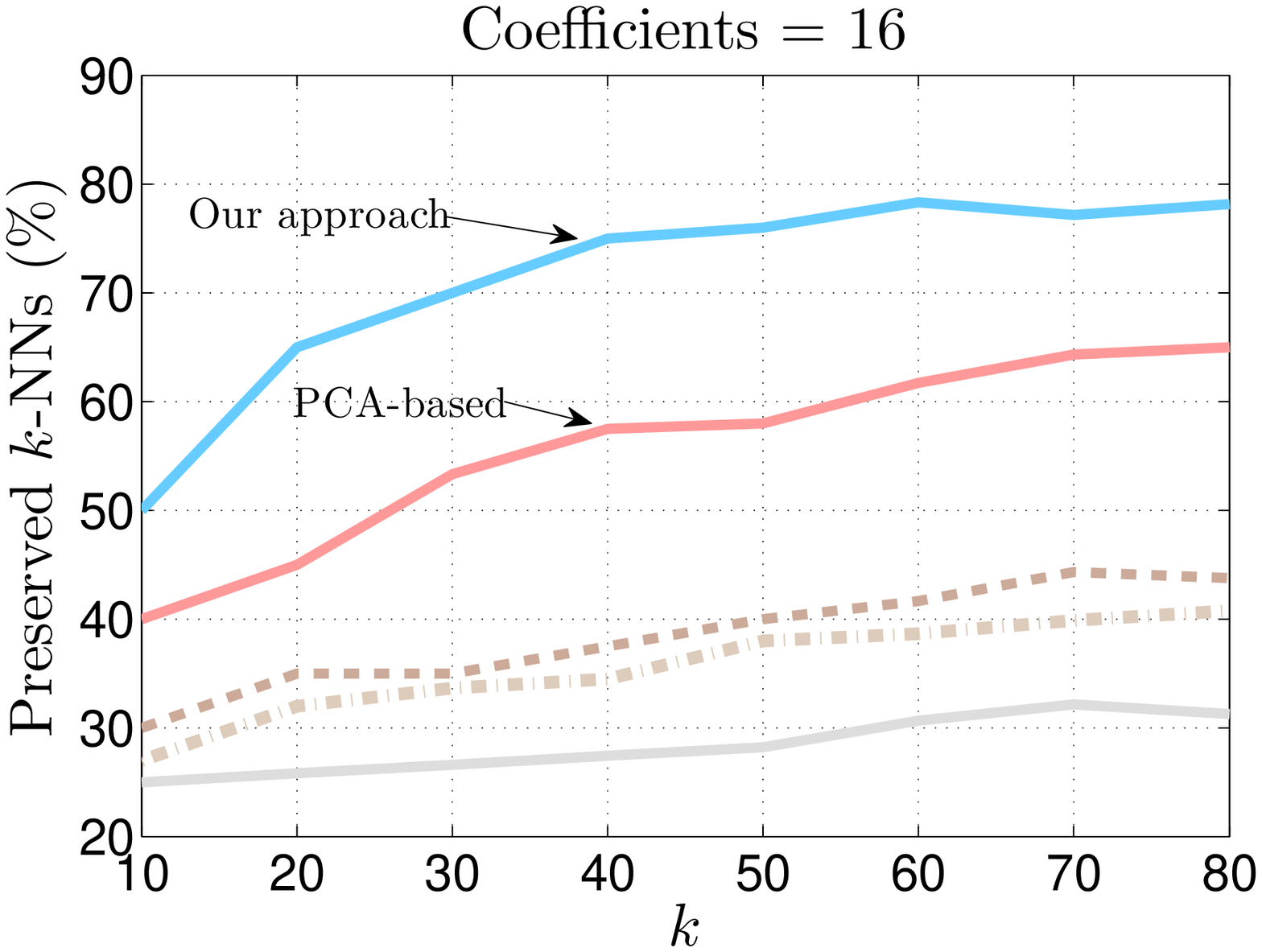} \includegraphics[width=0.325\textwidth]{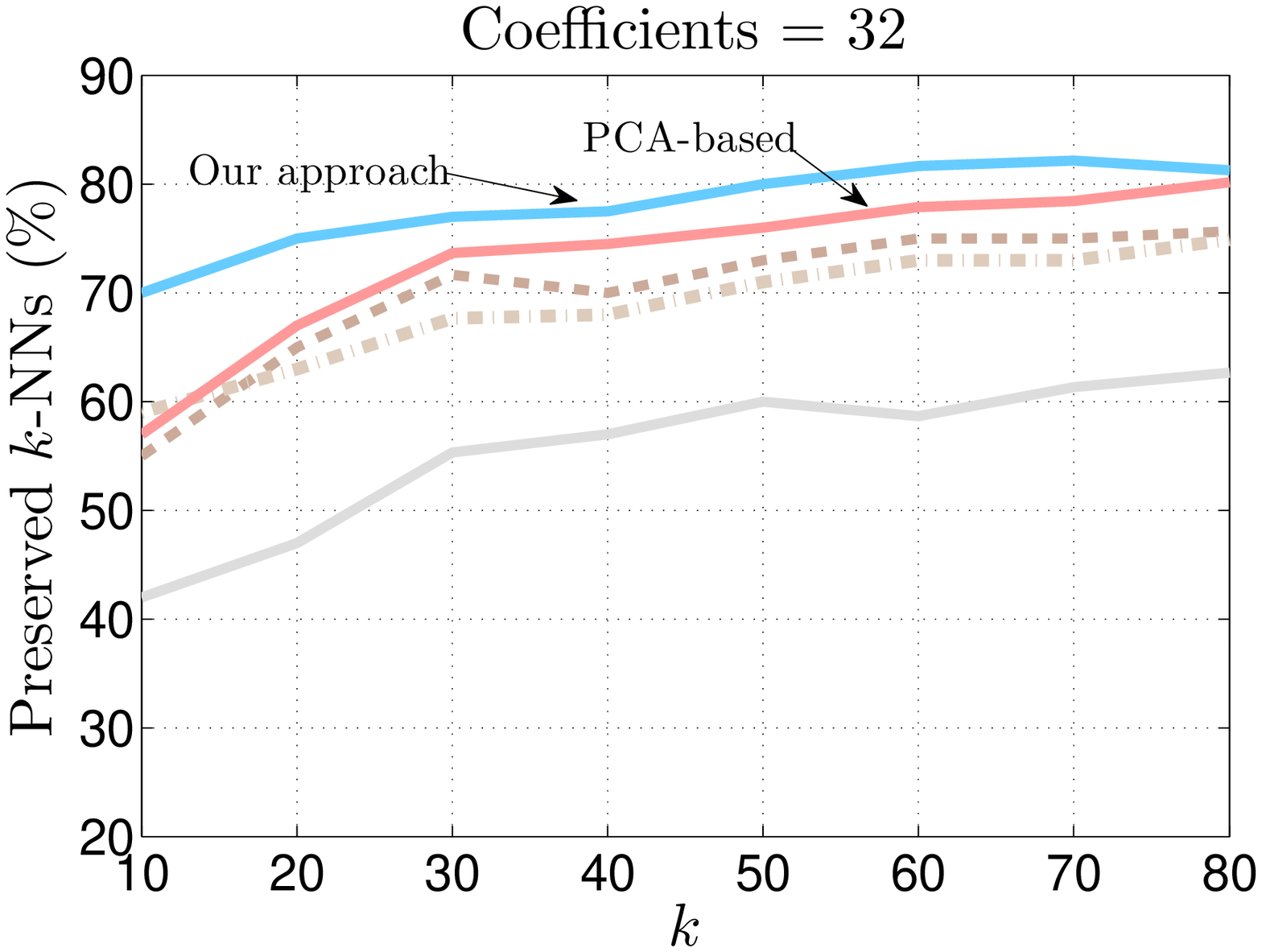} \includegraphics[width=0.325\textwidth]{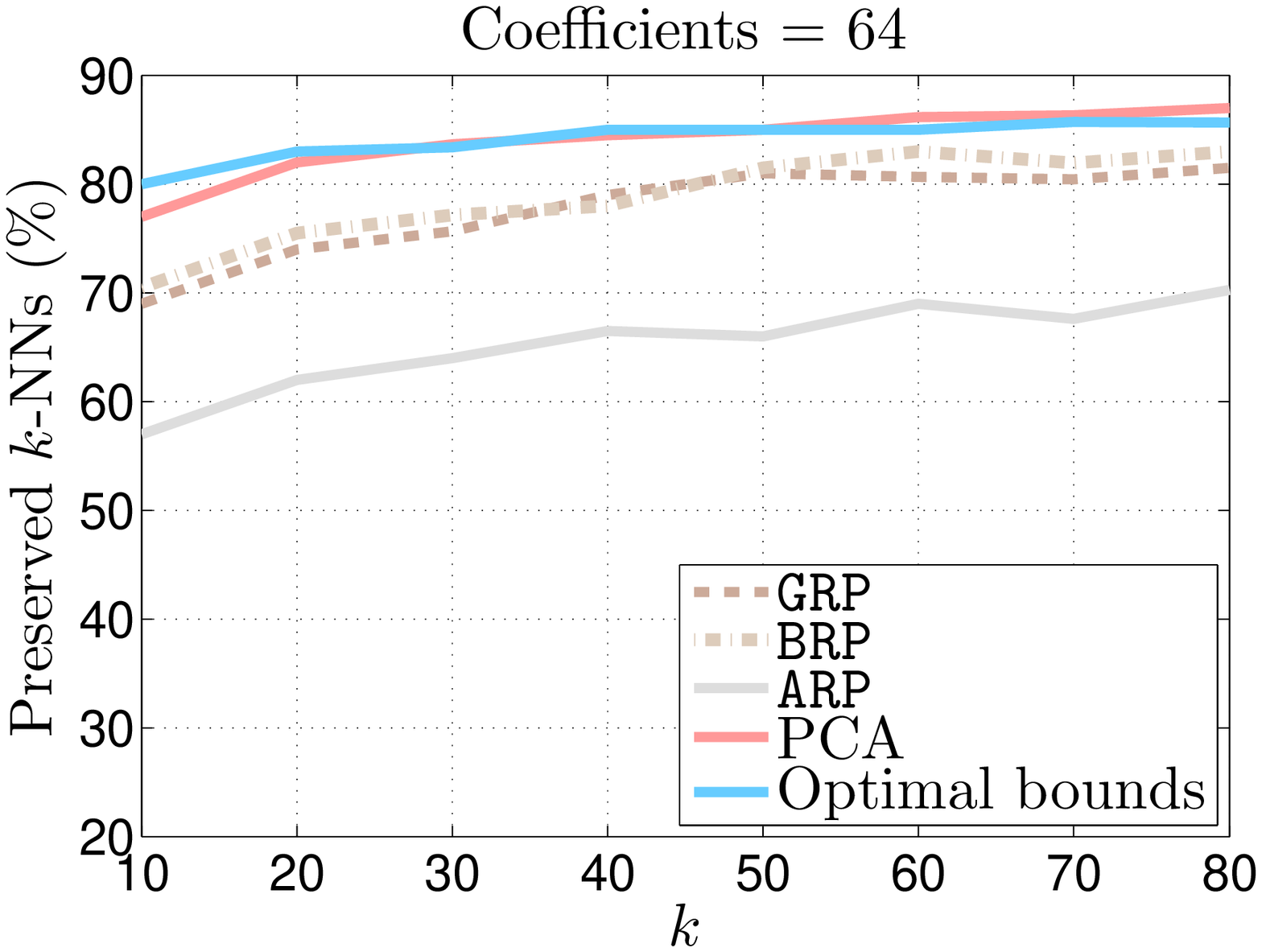} %\includegraphics[width=0.25\textwidth]{}
\caption{Comparison of the algorithms under consideration for the $k$-NN task, where each weblog sequence is sparsified to contain only $3s$ coefficients, where $s = \lbrace 16, 32, 64 \rbrace$. The curves depict mean values of 100 Monte Carlo iterations.} %Top row: bar-shaped accuracy comparison for some representative cases. Bottom row: expansive illustration of accuracy performance over the whole range of $k$ values.}
\label{fig:comp2}
\end{figure*}
%%%%%%%%%%%%%%%%%%%%%%%%%%%%%%%%%%%%%%%%%%%%%%%%%%%%%%%%%%%%%%

Sacrificing accuracy for low complexity, the RP-based approaches constitute inexpensive solutions: constructing matrix $\boldsymbol{\Phi}$ is easy in practice, while binary-based matrix ensembles, such as \texttt{BRP} or \texttt{ARP} matrices, constitute low space-complexity alternatives. However, RP-based schemes are probabilistic in nature; they might ``break down'' for a fixed $\boldsymbol{\Phi}$: one can construct adversarial inputs where their performance degrades significantly. %moreover, to achieve a desired $\epsilon$ distortion in \eqref{eq:JL}, $N$ should be sufficiently large which might not be applicable in practice. 
%\textbf{say sth more here. It is not clear.}

Our methodology presents a balanced approach with low time and space complexity, while retaining high accuracy of results. Our approach exhibits better performance over all compression rates, compared to the RP-based approaches, see in Figure \ref{fig:comp}. 

\hide{
%%%%%%%%%%%%%%%%%%%%%%%%%%%%%%%%%%%%%%%%%%%%%%%%%%%%%%%%%%%%%%
\begin{figure*}[!htpb]
\centering
\includegraphics[width=0.3\textwidth]{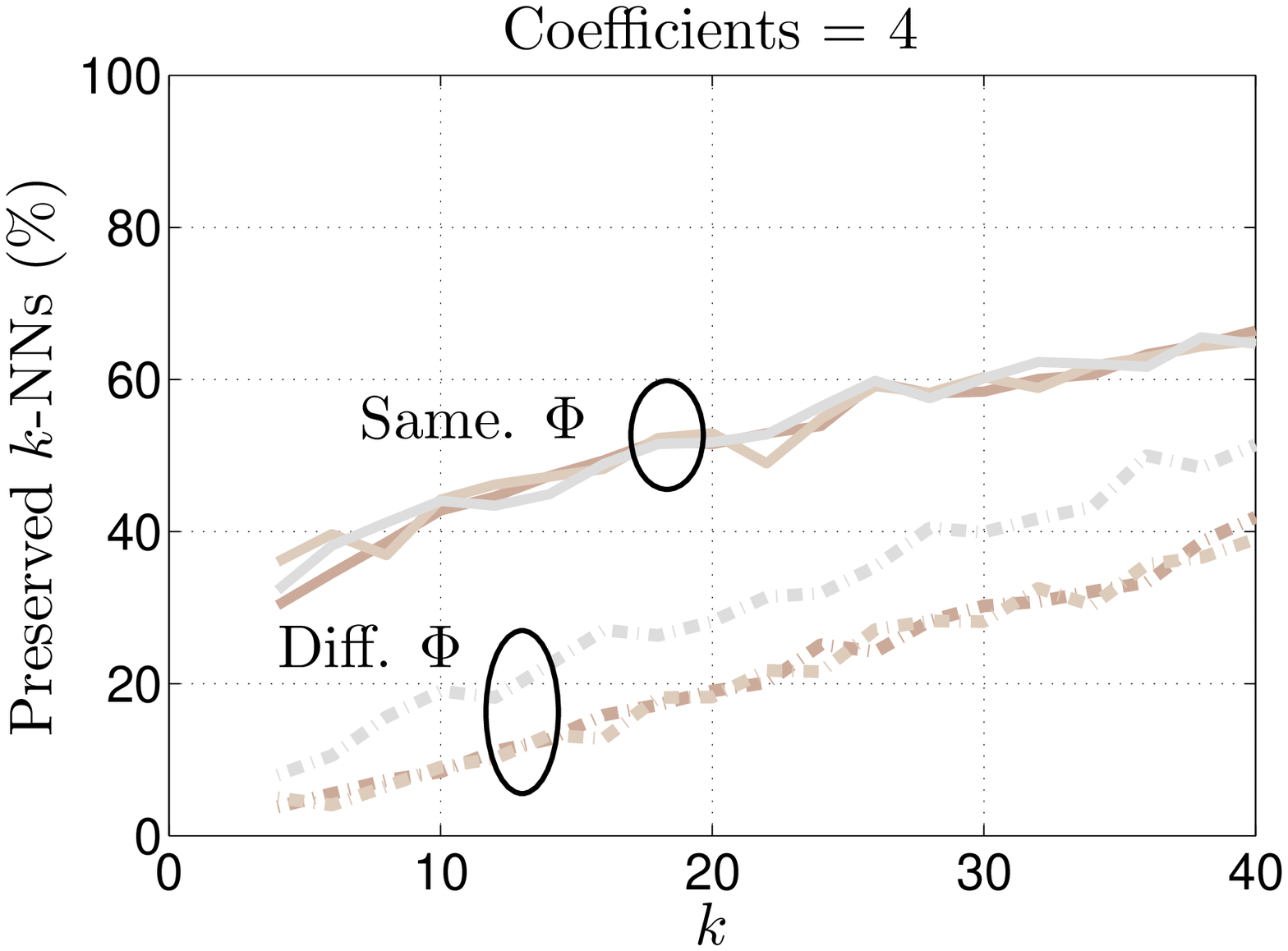} \includegraphics[width=0.3\textwidth]{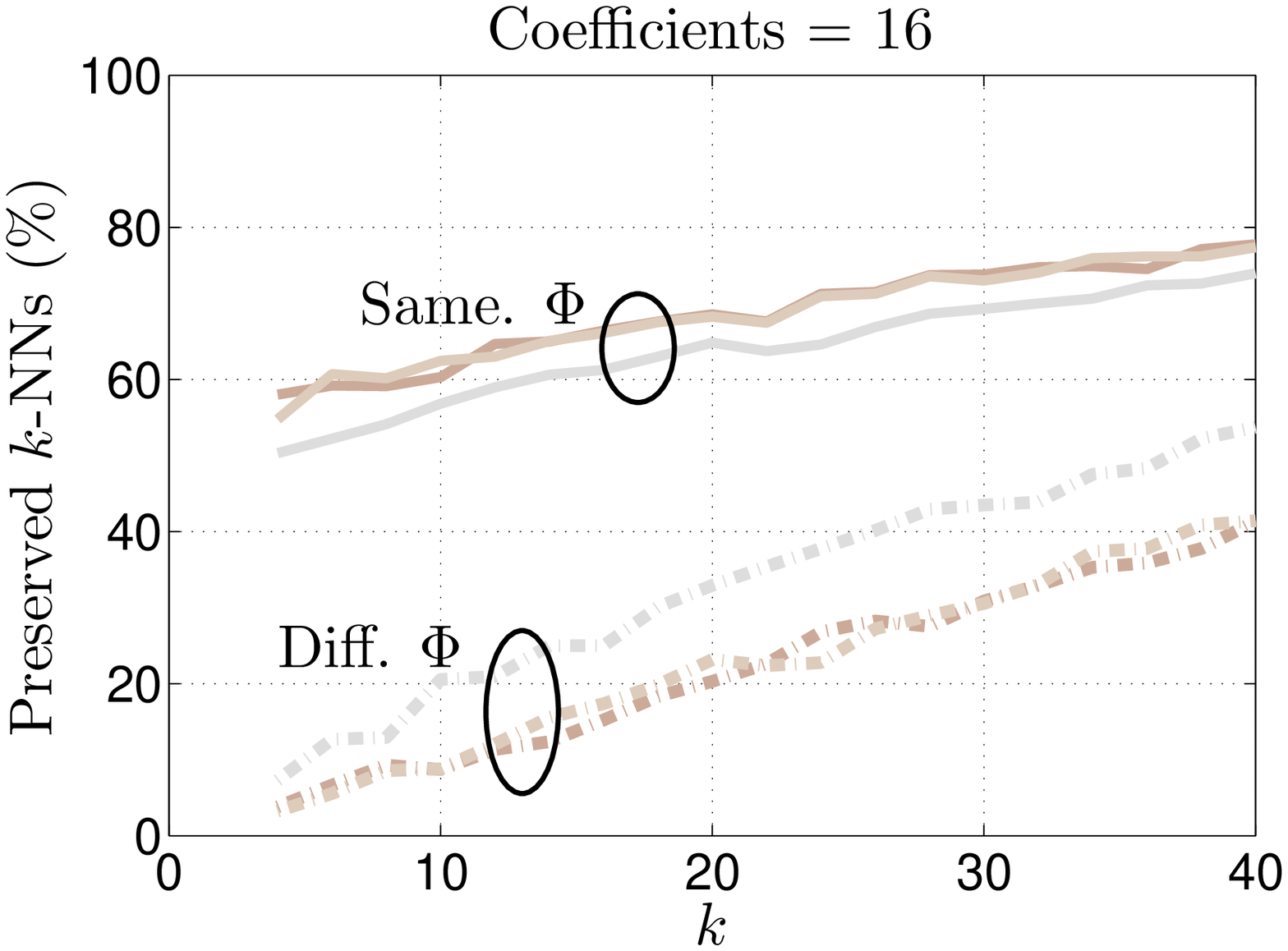} \includegraphics[width=0.3\textwidth]{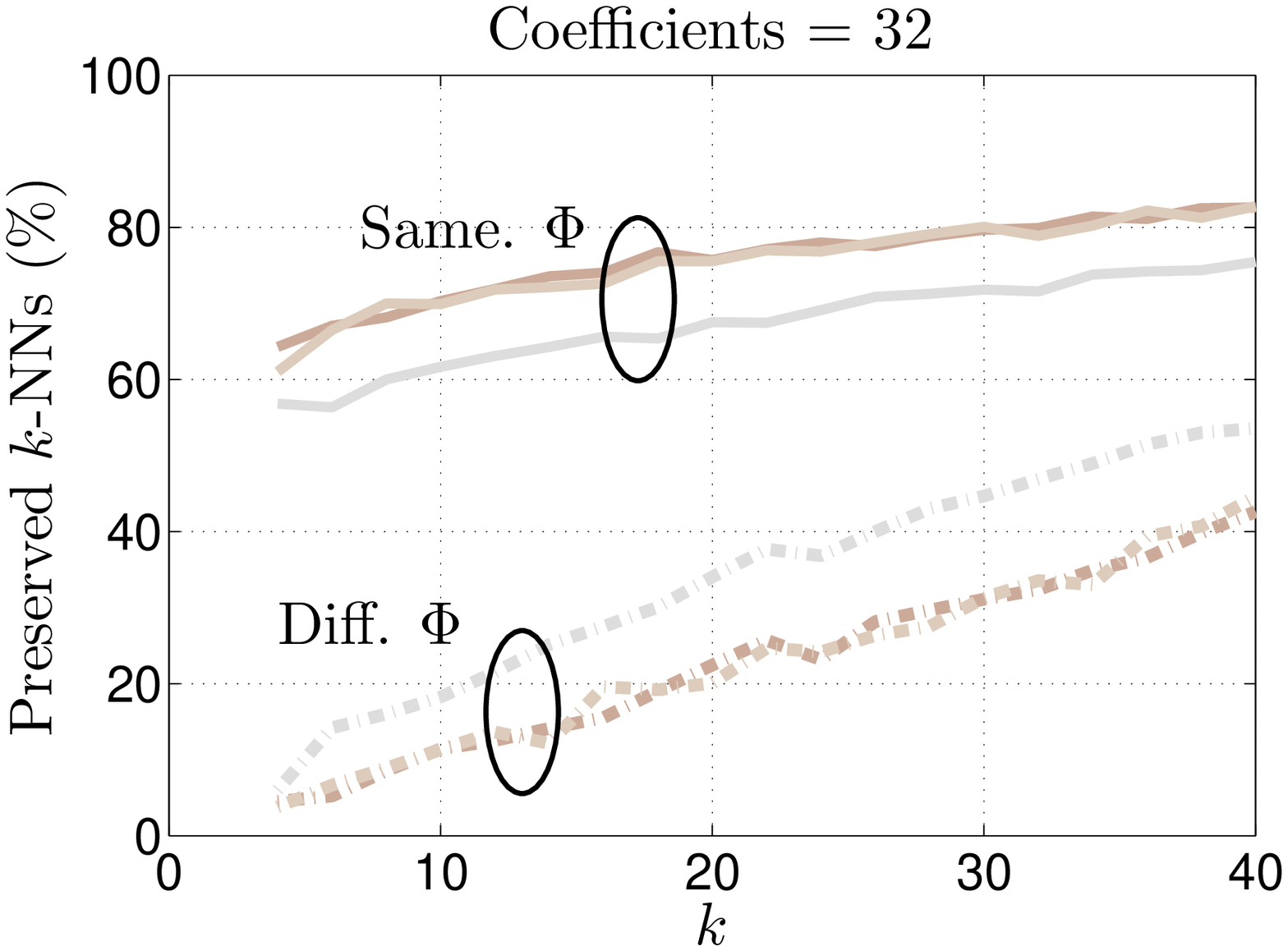} %\includegraphics[width=0.25\textwidth]{}
\caption{Performance degradation in RP-based $k$-NN when different $\boldsymbol{\Phi}$ is used for $\mathcal{X}$ and $q$ compression. We observe that for all different random matrix ensembles there is a significant decrease in $k$-NN preservation accuracy. The curves depict mean values of 100 Monte Carlo iterations. }
\label{fig:privacy}
\end{figure*}
%%%%%%%%%%%%%%%%%%%%%%%%%%%%%%%%%%%%%%%%%%%%%%%%%%%%%%%%%%%%%%

\noindent \textit{Privacy preservation:} RP-based approaches operate when both compressed data $\Y^{(i)}, \forall i,$ and queries $\Q$ are generated using the same dimensionality-reducing $\boldsymbol{\Phi}$. However, such process may be impractical or even infeasible: $\mathcal{DB}$ systems may only require a compressed query $\Y_\Q$ as an input, without making the data-dependent compression matrix $\boldsymbol{\Phi}$ publicly available. On the other hand, compressing each uncompressed input query $q$ \emph{on-the-fly} may be absurd for real-time applications: contemporary database systems become computationally decentralized, such that servers and mainframe data solutions can serve more users simultaneously. 

Figure \ref{fig:privacy} highlights this subtle point for the RP-based $k$-NN approaches. We note that our approach does not suffer for this restriction: we construct both compressed $\Y^{(i)}$ records as well as $\Y_\Q$ in a deterministic and strict way such that no $\boldsymbol{\Phi}$ needs to be constructed, stored or transformed for independent compression of $\mathcal{X}$ and $q$.
}

\medskip \noindent 
\textit{When sparsity is present:} 
%In Figure \ref{fig:comp}, only a small subset of coefficients is kept for compression: most of the energy of each signal is discarded. In our case, the amount of discarded energy is kept in $e_x$ and $e_q$. However, we note that such additional information might not necessarily result into much improved solutions as compared to RP-based approaches, as shown in Figure \ref{fig:comp}. However, % i.e., the retrieved space where the uncompressed data truly ``live'' in, such that we obtain more meaningful solutions spaces. 
%
%Recent works in Compressive Sensing have shown that when data
%exhibit very high redundancy (i.e. sparsity) then data can be compressed and accurately reconstructed
%using very few coefficients \cite{}. We test whether this is true for our methodology as well.
%We sparsify the weblog data by resampling them at a much higher rate. Therefore the information content
%is the same, but the resulting time-series will be longer due to the resampling process.
Using a similar methodology as in the previous section, we test the $k$-NN performance of the various approaches
when sparsity of the signals is present. We create sparse representations of the weblog data by subsampling their Fourier representation. 
Figure \ref{fig:comp2} illustrates the quality of $k$-NN using such a sparsified dataset. Per $\boldsymbol{\Phi} \in \mathbb{C}^{d \times N}$ matrix in the RP-based and the PCA-based $k$-NN, we project each weblog signal onto a $d$-dimensional (complex valued) space, where $d = \lceil s + \frac{s}{4} + \frac{1}{2}\rceil$. %The PCA-based scheme is \emph{oblivious} to sparsity, so in this case we again deal with the sparsified signal in the time domain, where $d = \lceil 2s + \frac{s}{2} + 1\rceil$.}

Figure \ref{fig:comp2} reveals a notable behavior of our model: When $\X^{(i)}$ is sparse, each $\X^{(i)}$ can be more accurately represented (and thus compressed) using fewer coefficients. 
Thus, on average, the energy discarded $e_x$ is also limited. Alternatively put, the constraint $\sum_{l \in p_{x}^{-}} |\X^{(i)}_l|^2 \leq e_x$ in \eqref{opt_d} \emph{highly restricts the candidate space that the uncompressed signal $\x^{(i)}$ resides in}, resulting in tighter upper and lower bounds. On the other hand, when compressing \emph{dense} $\X^{(i)}$'s into $s$ coefficients, where $s \ll N$, $e_x$ provides a large amount of uncertainty as to the reconstruction of $\x^{(i)}$.
This leads to less tight distance bounds and thus degraded performance. %We also refer the reader to Figure~\ref{fig:speffect}, which illustrates how compression rate $s$ affects the performance of our approach for the $k$-NN case.

In summary, under high data-sparsity, our approach provides superior results in revealing 
the true $k$-NNs in the uncompressed domain. Our approach even outperforms PCA-based techniques, and more importantly, our method has a very low computational cost. %projection based methods perform better as compared to the dense case on average due to the sparse structure of the underlying signal. 

\medskip \noindent 
\textit{Clustering quality:}
We assess the quality of $k$-Means clustering operations in the compressed domain using the original weblog dataset.
The quality is evaluated in terms of how similar the clusters are before and after compression
when $k$-Means is initialized using the same seed points. So, we use the same centroid points $C^{(t)}$ as in the uncompressed domain and then compress each $C^{(t)}$ accordingly, using the dimensionality reduction strategy 
dictated by each algorithm. %Here, we consider the celebrated $k$-Means algorithm as our baseline procedure: %In a nutshell, we would like to manipulate the data $\X^{(i)}$ only in the low-dimensional subspaces such that the resulting clustering in the compressed domain corresponds to the actual (or is sufficiently close to) clustering of $\X^{(i)}$ in their uncompressed form. 

%%%%%%%%%%%%%%%%%%%%%%%%%%%%%%%%%%%%%%%%%%%%%%%%%%%%%%%%%%%%%%
\begin{figure*}[!htpb]
\centering
\includegraphics[width=0.33\textwidth]{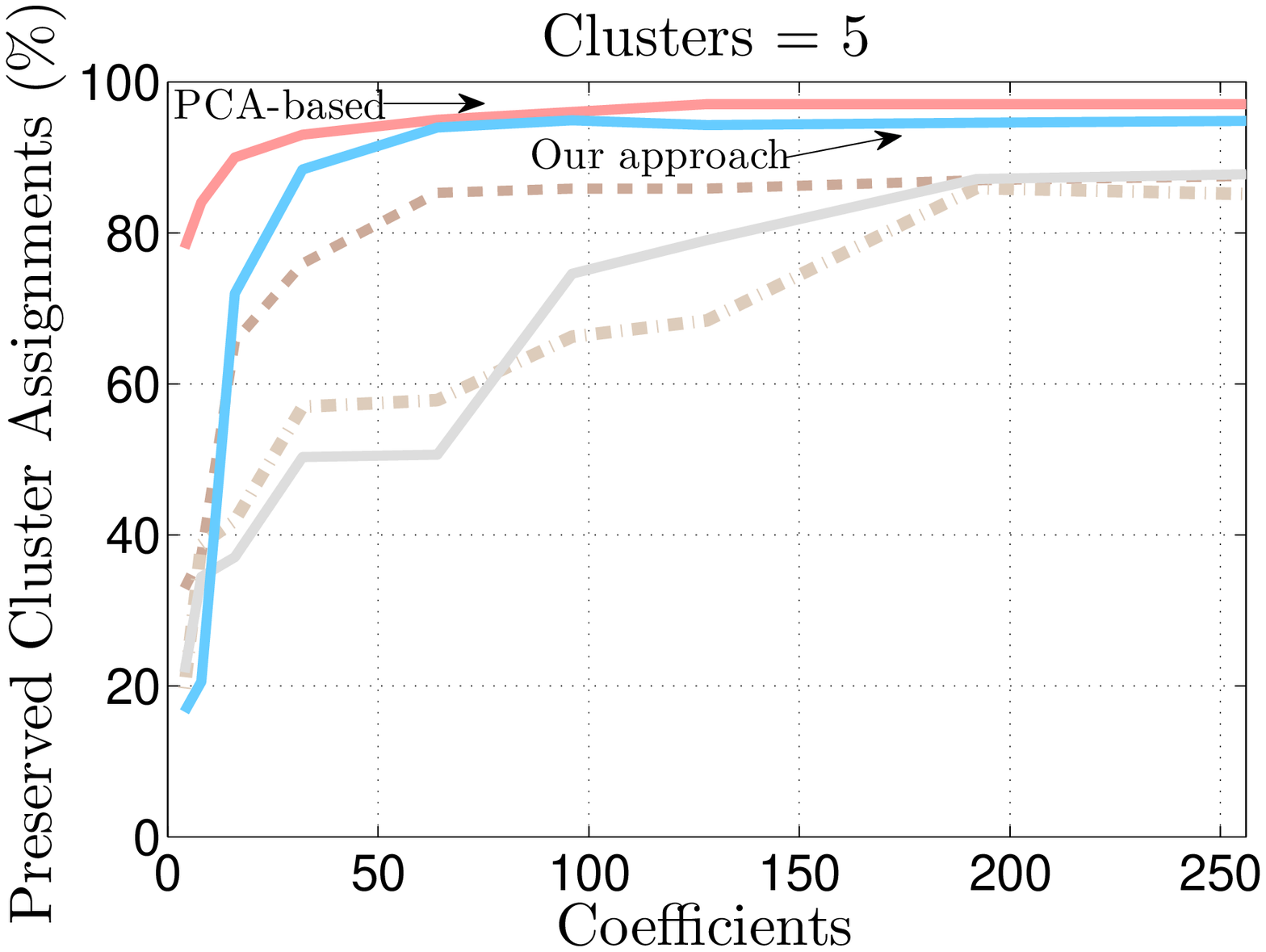} \includegraphics[width=0.33\textwidth]{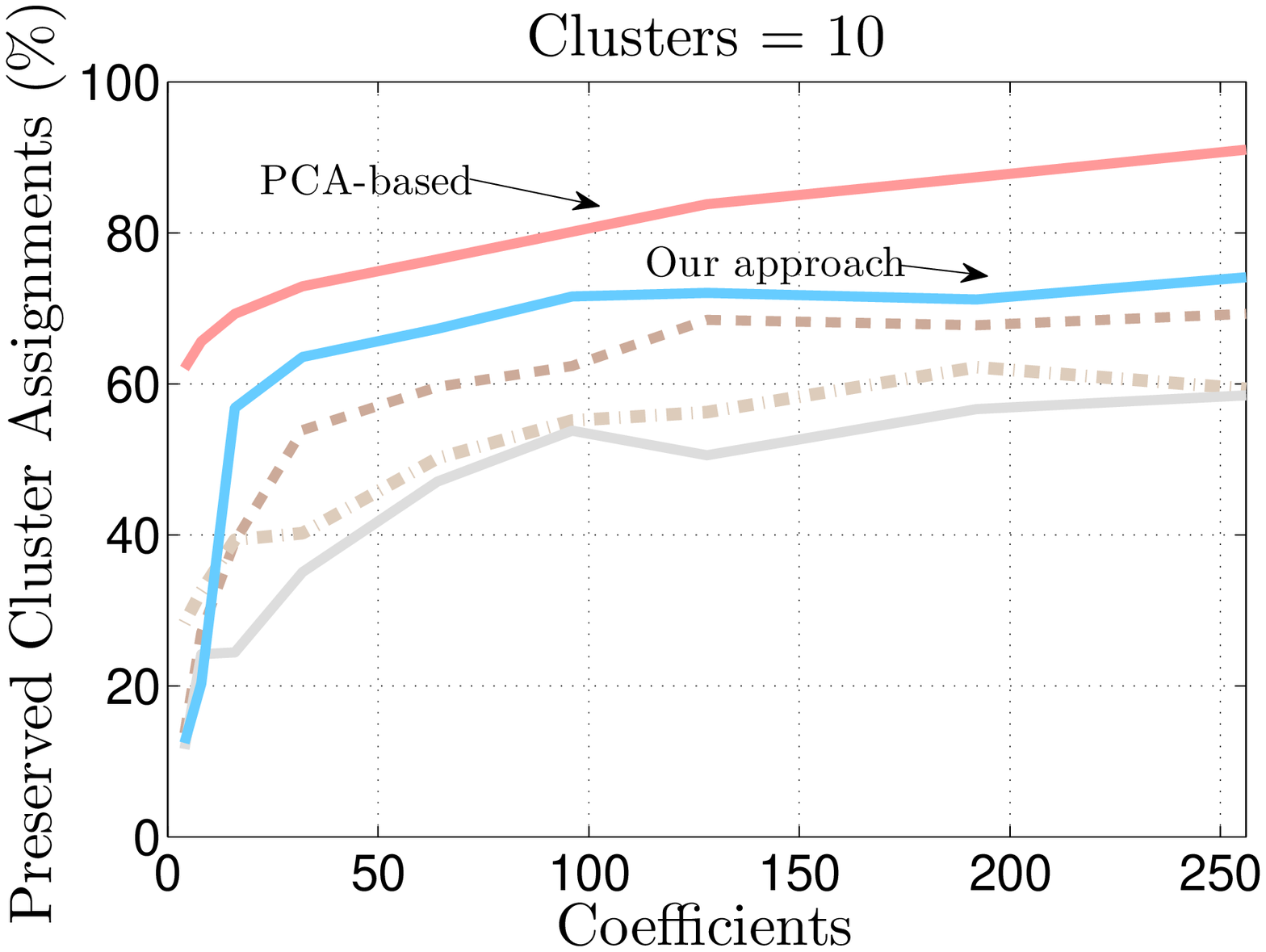} \includegraphics[width=0.33\textwidth]{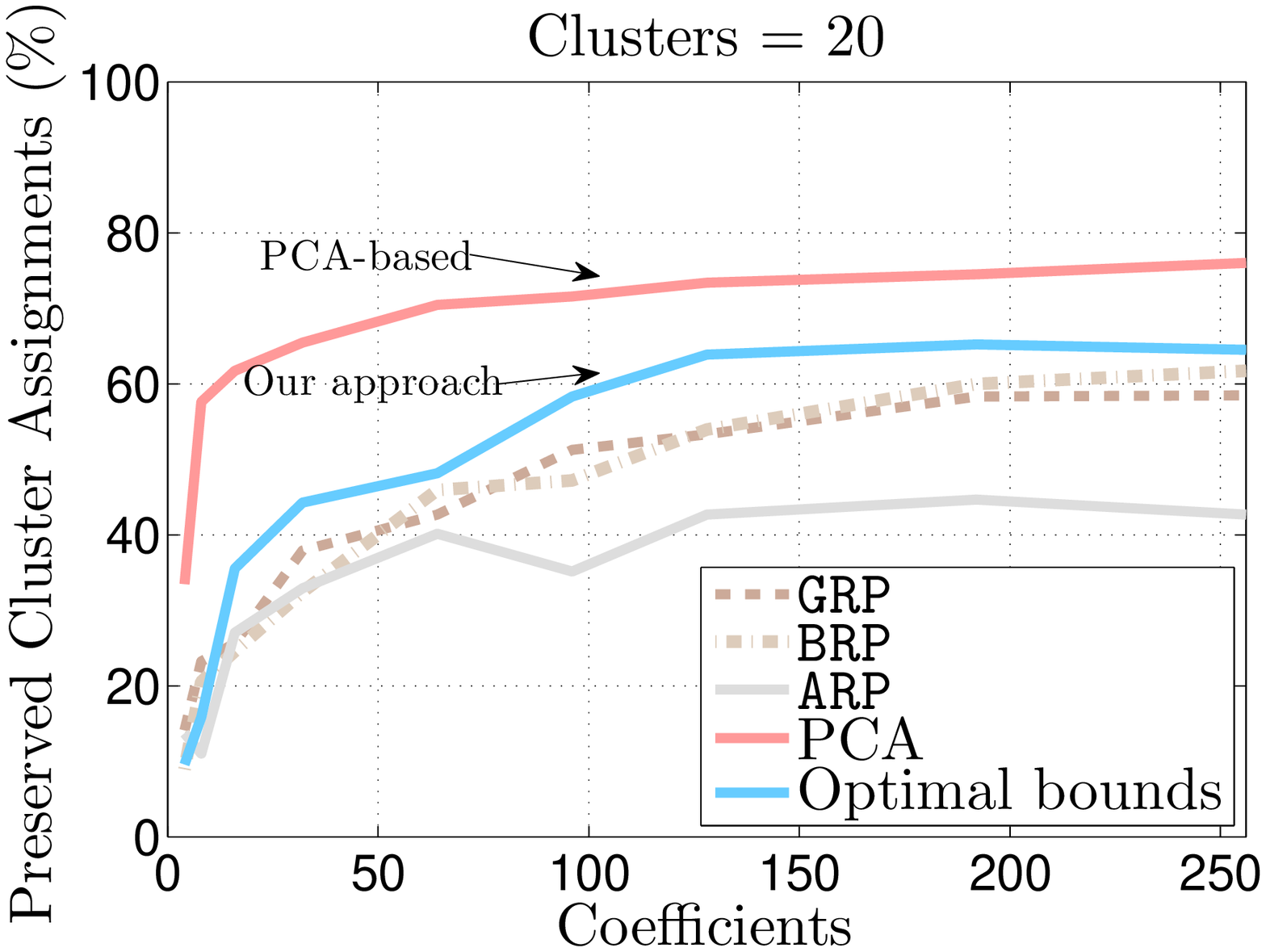}
\caption{Assessing the quality of $k$-Means clustering in the compressed domain.} 
\label{fig:kmeans}
\end{figure*}
%%%%%%%%%%%%%%%%%%%%%%%%%%%%%%%%%%%%%%%%%%%%%%%%%%%%%%%%%%%%%%

The quality results of the algorithms under comparison are depicted in Figure \ref{fig:kmeans}. 
We perform $k$-Means for different compression ratios (coefficients) and for different numbers of clusters $k$. 
The PCA-based approach returns the best performance, but introduces very large computational demands due to SVD/Eigenvalue computation. 
The performance of our methodology lies in-between PCA and Random-Projection techniques.

\subsection{Using a different basis (wavelets)}
In the preceding sections we used Fourier decomposition as our compression method. Now we use wavelets \cite{mallatwavelet}
to show the generality of our technique. We also use a very large image dataset consisting of VLSI patterns obtained
by the semiconductor department of our organization. 

In the remainder of this subsection we $(i)$ provide an overview of the tasks and challenges related to the specific domain and, $(ii)$ show the performance of $k$-NN search operations on this large real-world dataset.

\medskip
\noindent \textit{Pattern detection on VLSIs:} 
During the production of VLSI circuits (i.e., CPUs), a series of parameters and design protocols should be satisfied to ensure that the resulting product will not fail during the manufacturing process. 
For example, there are various \emph{layouts configurations}, that have been known
to cause short-circuits and jeopardize the reliability of a circuit.
The absence of such artifacts guarantees that the circuit will connect as desired and ensures a margin of safety. 
A nonexhaustive list of parameters includes the width of metal wires, minimum distances between two adjacent objects, etc. We refer the reader to Figure \ref{fig:drc} for some illustrations.
Design-rule checking (DRC) is the process of checking the satisfiability of these rules. 

As a result of a testing process, a collection of VLSI designs are annotated
as faulty or functional. Now, each newly-produced circuit layout is classified as 
potentially-faulty  based on its similarity to an already annotated circuit. Novel designs,
never seen before, need to be tested further. Therefore, the testing process can be relegated
to a $k$-NN search operation. The whole process needs to be both expedient and accurate, so that design
and production are properly streamlined.

%%%%%%%%%%%%%%%%%%%%%%%%%%%%%%%%%%%%%%%%%%%%%%%%%%%%%%%%%%%%%%
\begin{figure*}[!htpb]
\centering
\includegraphics[width=0.35\textwidth]{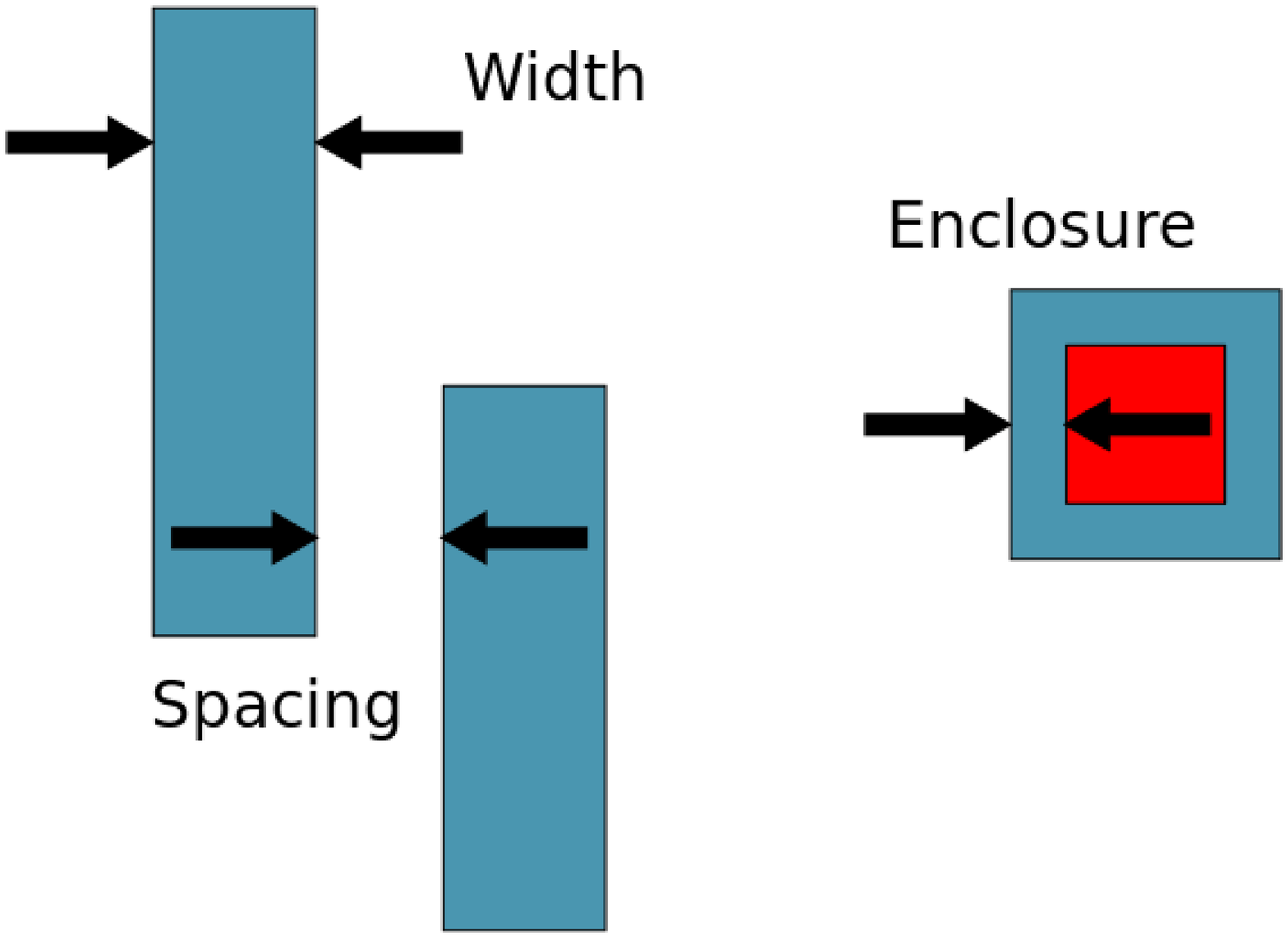} \hspace{1cm} 
\includegraphics[width=0.4\textwidth]{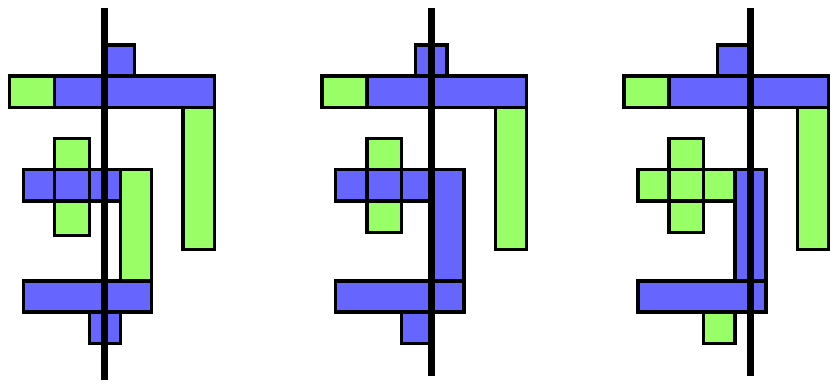}
\caption{\textbf{Left:} Subset of design parameters to be satisfied during the manufacturing \cite{wikiDRC}. Width and spacing are single layer rules, where the VLSI layout is seen as a 2D object. A width rule specifies the minimum width of objects; a spacing rule specifies the minimum distance between two adjacent objects. Enclosing deals with multi-layer rules is not considered here. \textbf{Right:} In the left-to-right scanning strategy, moving a vertical scan line horizontally across the layout, we can maintain the sum of polygons observed. As the scan line advances, new objects are added and old ones are removed. 
}  
\label{fig:drc}
\end{figure*}
%%%%%%%%%%%%%%%%%%%%%%%%%%%%%%%%%%%%%%%%%%%%%%%%%%%%%%%%%%%%%%
%%%%%%%%%%%%%%%%%%%%%%%%%%%%%%%%%%%%%%%%%%%%%%%%%%%%%%%%%%%%%%
\begin{figure*}[!htpb]
\centering
\includegraphics[width=0.29\textwidth]{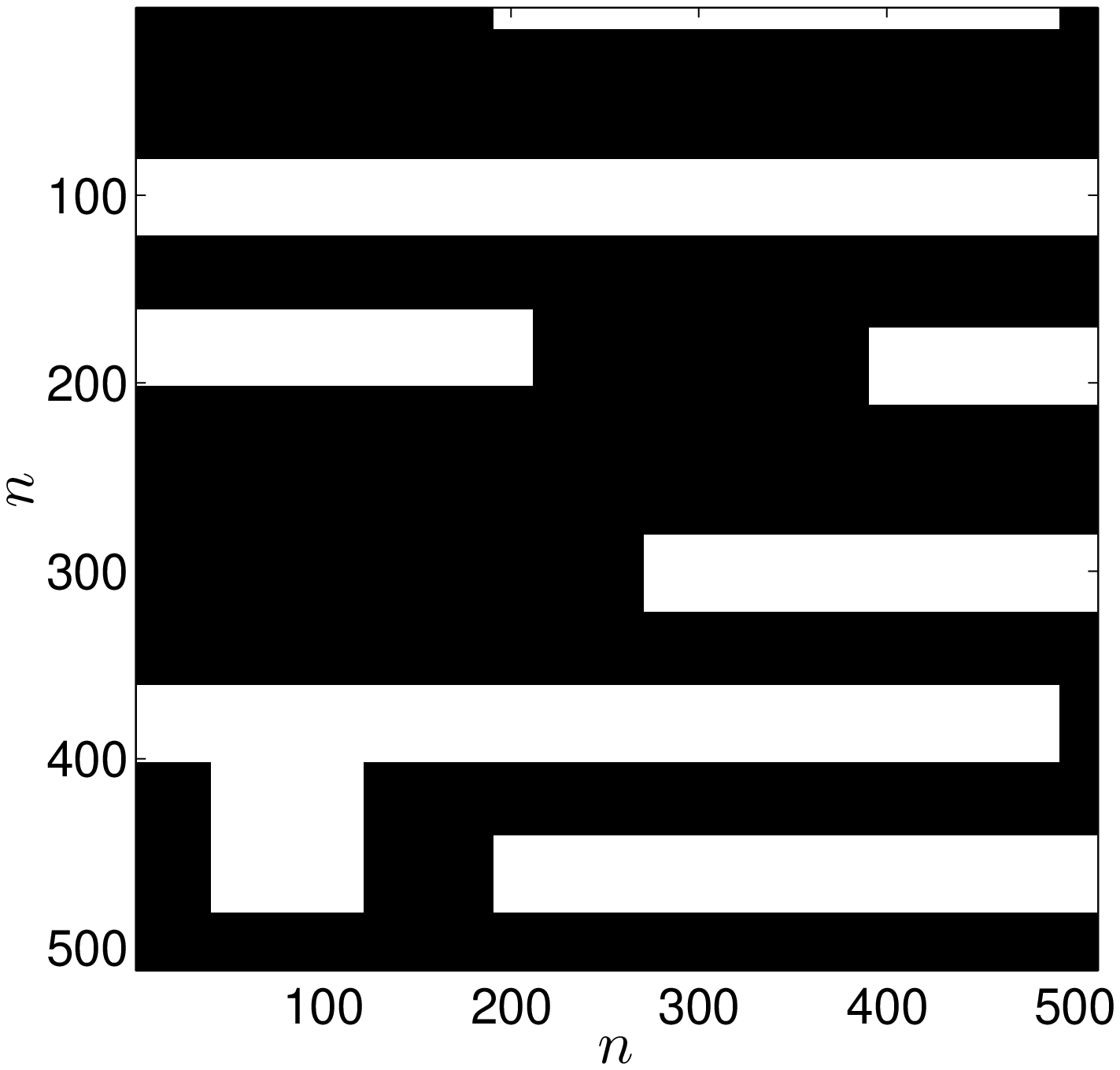} \includegraphics[width=0.34\textwidth]{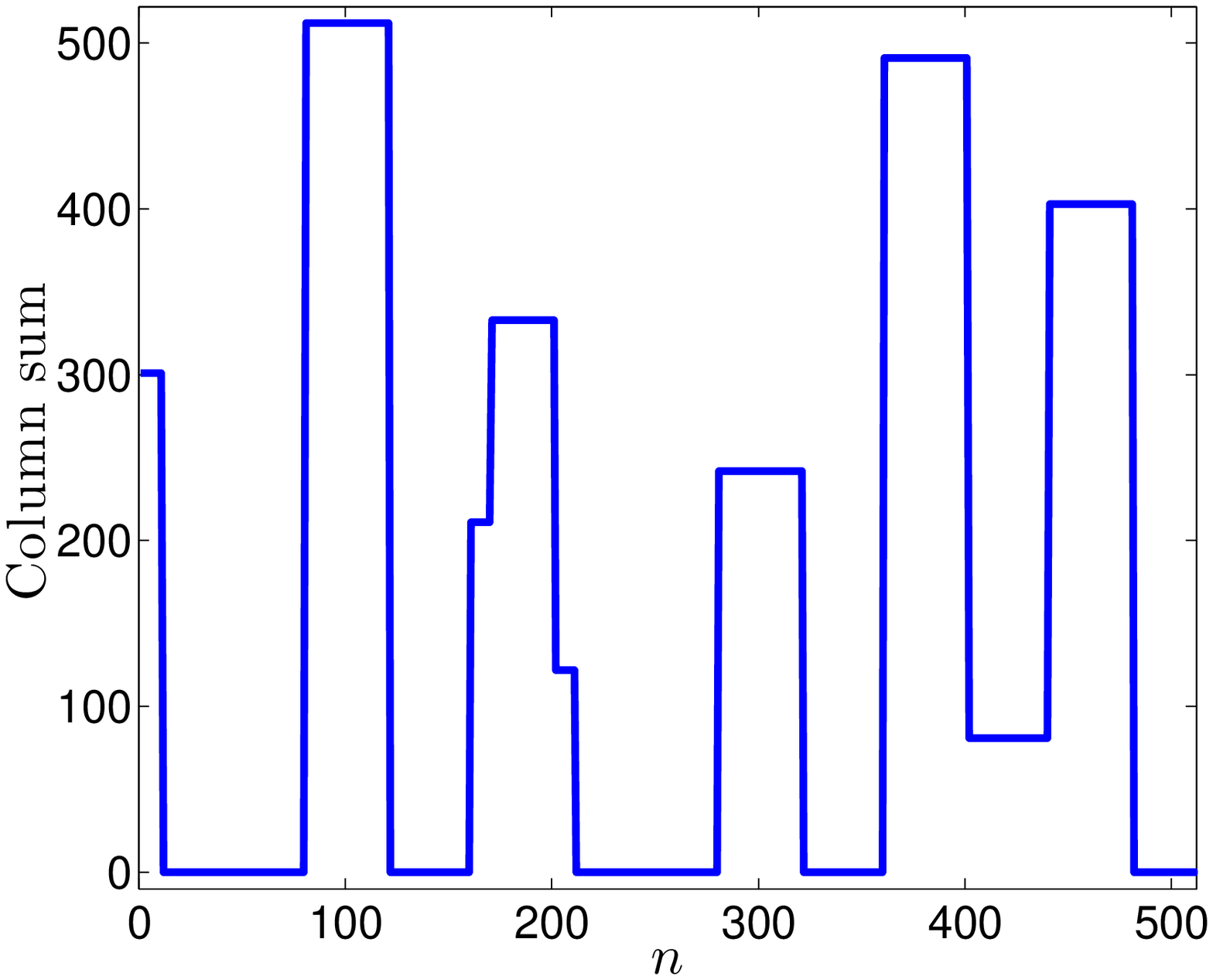} \includegraphics[width=0.345\textwidth]{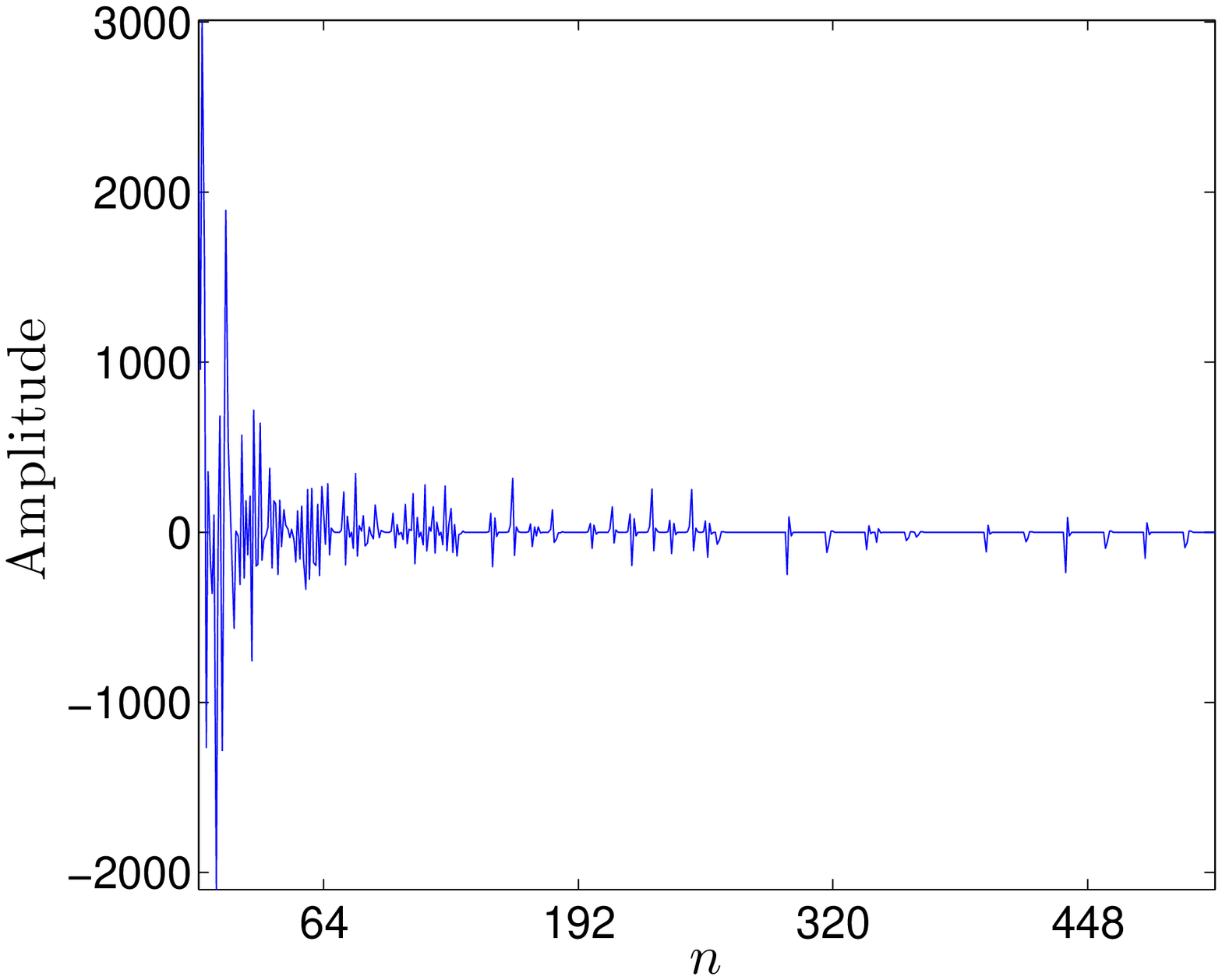}
\caption{\textbf{Left panel:} Original 2D layout example. White areas indicate the presence of objects, e.g., metal wires. \textbf{Center panel:} Column-sum representation of polygons in $\mathbb{R}^{512}$. \textbf{Right panel:} Amplitude of wavelet transformation on the column-sum representation: the majority of the energy can be captured in only a few coefficients. }  
\label{fig:layout_example}
\end{figure*}
%%%%%%%%%%%%%%%%%%%%%%%%%%%%%%%%%%%%%%%%%%%%%%%%%%%%%%%%%%%%%%

Based on an algebra of polygons \cite{crawford1975design}, a strategy to compare different layouts is by projecting their binary, 2D description onto one of the axes. This generates a descriptive vector per layout as the summary scanned from left to right or from top to bottom, see Figure \ref{fig:drc}. Thus, each VLSI layout can be approximately (but not uniquely) represented by a \emph{signature} as the sum along rows or columns. Note that not only
this format can be stored and processed more efficiently, but it also allows a translation-invariant matching of the shape.
One can use both row-wise and column-wise signatures, but for the purposes of our experiments,
we only use the column-wise signature.

\medskip
\noindent \textit{VLSI dataset:}
Our original dataset consists of approximately $150,000$ binary VLSI images, each of dimension 512x512.
We convert each image into a signature $\ell^{(i)} \in \mathbb{R}^{512}, \forall i,$ as the column-sum of each image. 
Figure \ref{fig:layout_example} depicts an instance of a layout image and its resulting signature. 

Afterwards, we compress each signature using the wavelet transformation \cite{mallatwavelet}:
$$\mathcal{L}^{(i)} = \texttt{WVL}\left(\ell^{(i)}\right) \in \mathbb{R}^{512}, \forall i,$$ where $\texttt{WVL}\left(\cdot\right)$ represents the wavelet linear operator with minimum scale $\texttt{J}_{\text{min}} = 2$. Observe in the right part of  Figure \ref{fig:layout_example} that $\mathcal{L}^{(i)}$ is highly compressible: the energies of wavelet components decay rapidly to zero according to a power-law decay model of the form:
\begin{align}
\left|\left(\mathcal{L}^{(i)}\right)_j\right| \leq R \cdot j^{1/p}, ~R > 0, \forall j,
\end{align} for some $r$ and $p$. This suggests that each signature is highly compressible
with minimal loss in accuracy.

%%%%%%%%%%%%%%%%%%%%%%%%%%%%%%%%%%%%%%%%%%%%%%%%%%%%%%%%%%%%%%
\begin{figure*}[!htpb]
\centering
\includegraphics[width=0.33\textwidth]{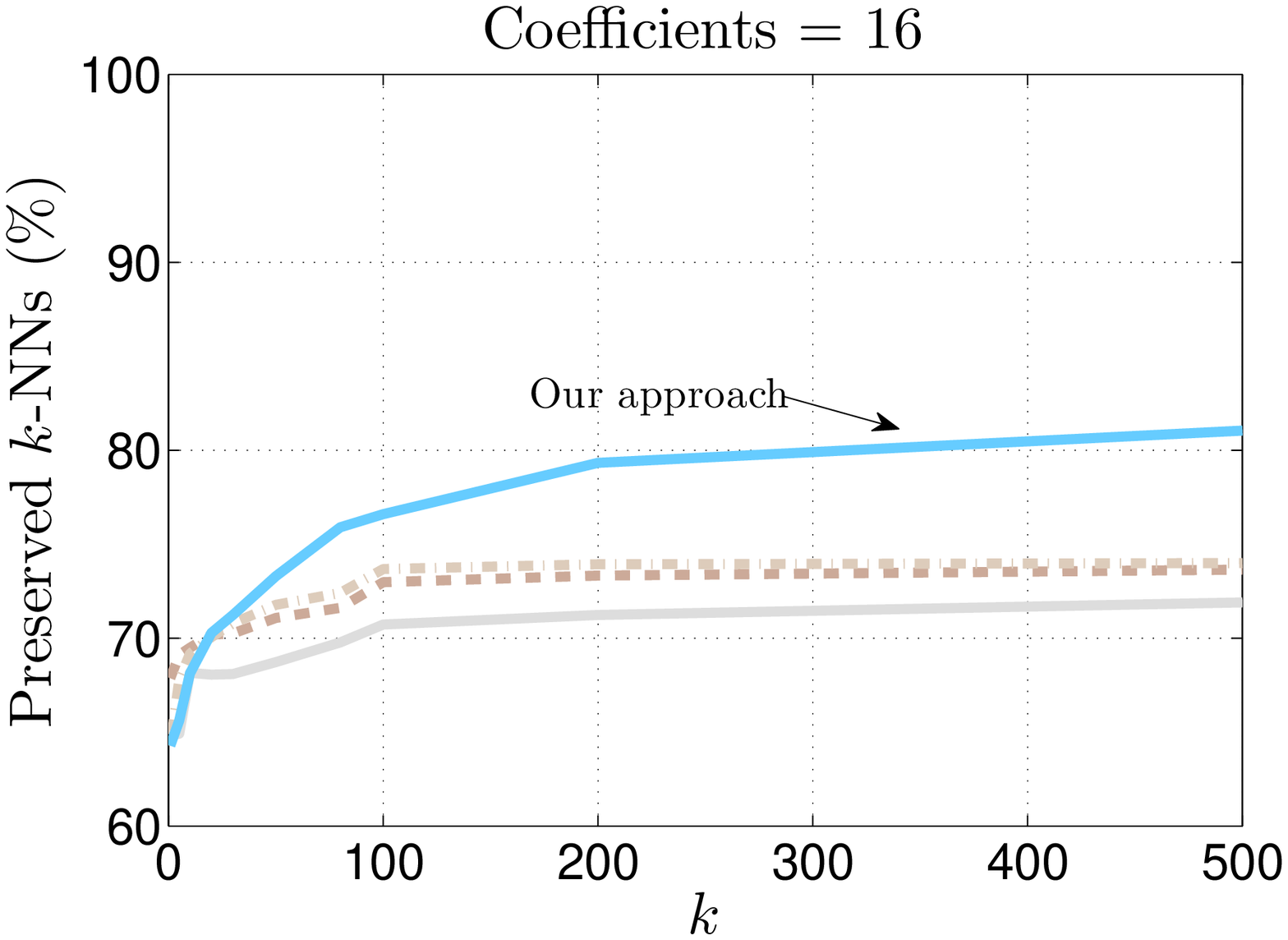} \includegraphics[width=0.33\textwidth]{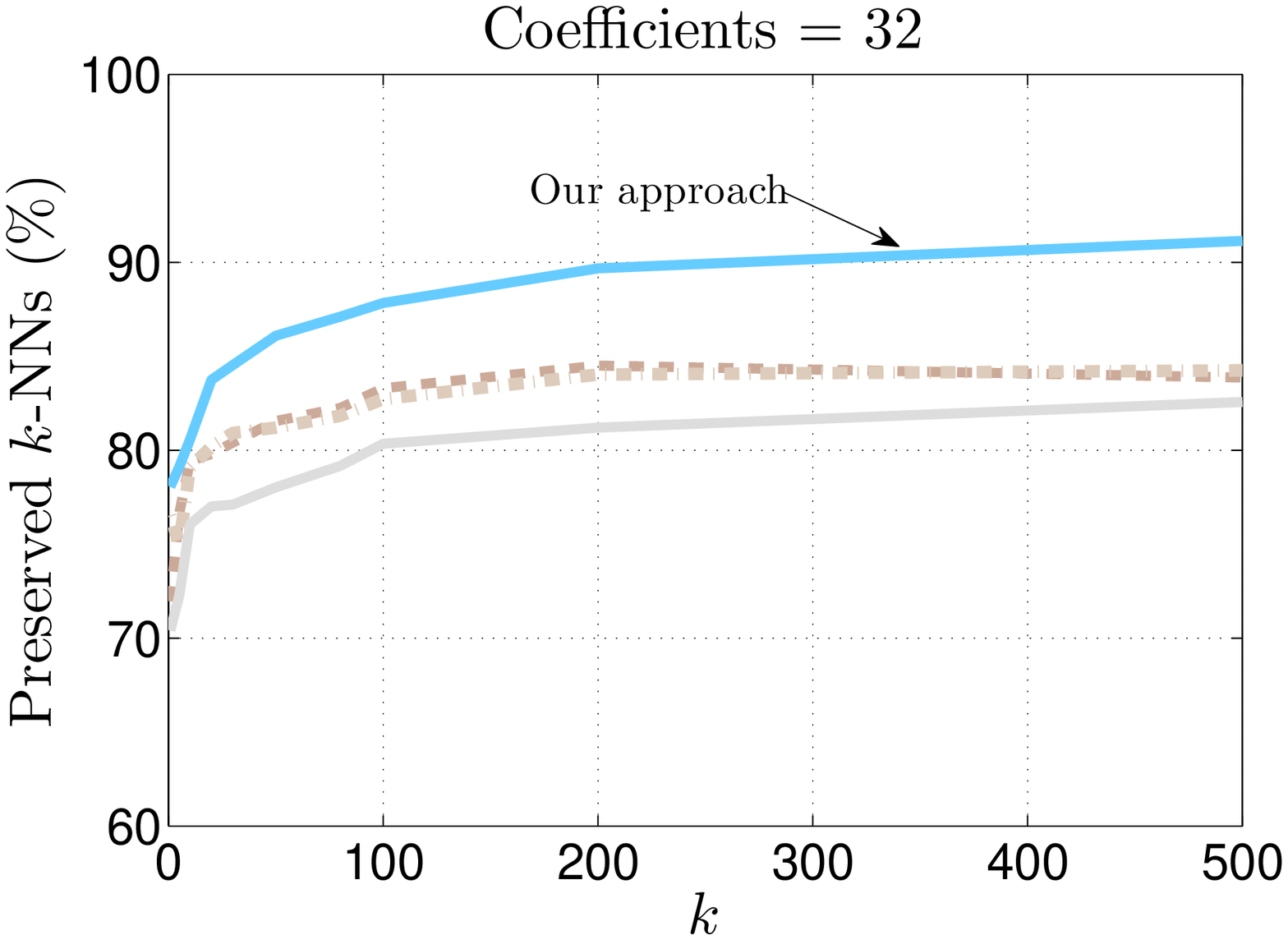} \includegraphics[width=0.32\textwidth]{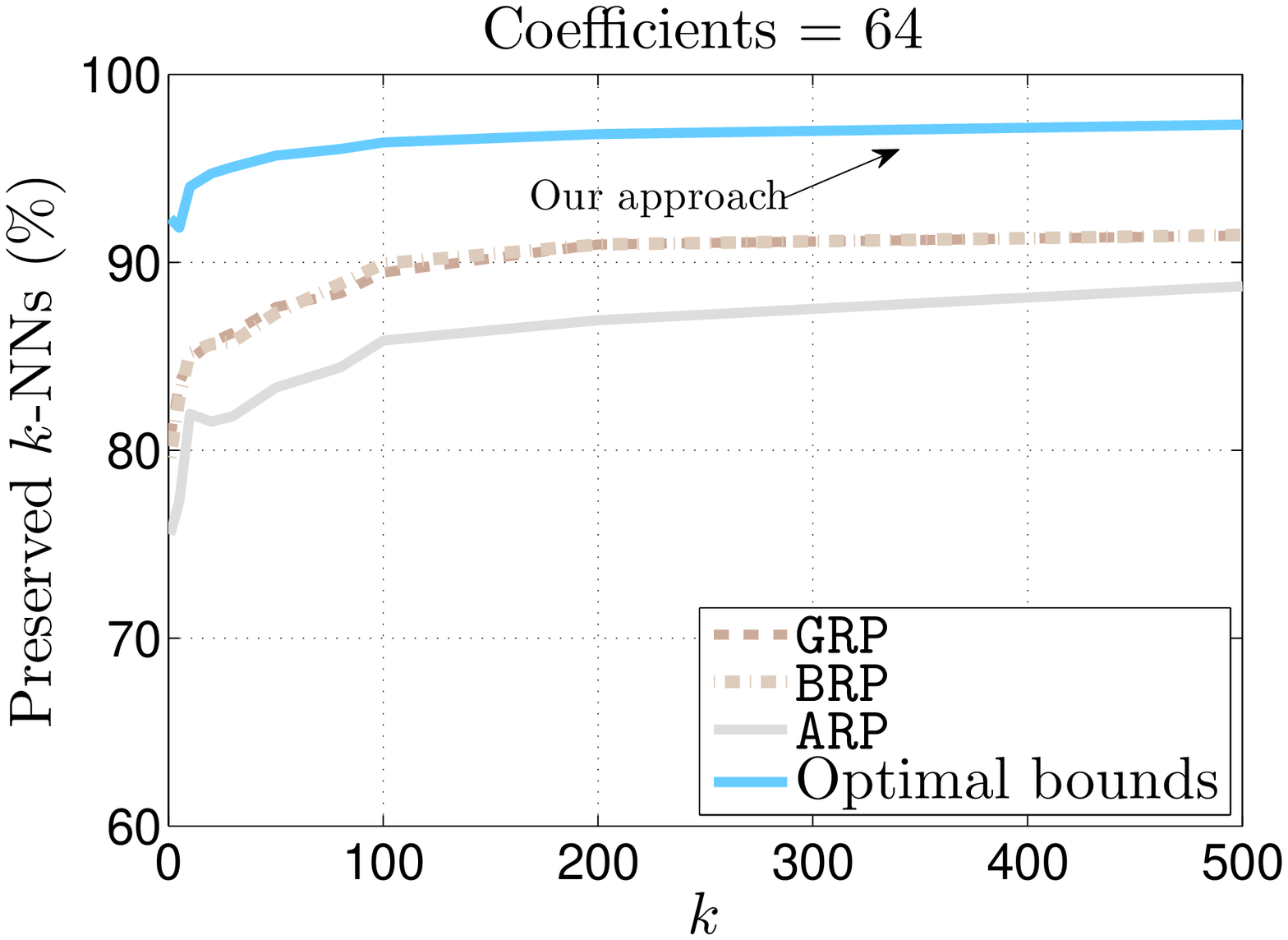}
\caption{$k$-NN preservation performance as function of $k$. Here, the cardinality of $\mathcal{DB}_r$ is $|V| = 149,245$ and the byte size of each sequence is $d$ bytes.}
\label{fig:VLSI_results}
\end{figure*}
%%%%%%%%%%%%%%%%%%%%%%%%%%%%%%%%%%%%%%%%%%%%%%%%%%%%%%%%%%%%%%

%%%%%%%%%%%%%%%%%%%%%%%%%%%%%%%%%%%%%%%%%%%%%%%%%%%%%%%%%%%%%%
\begin{figure*}[!htpb]
\centering
\includegraphics[width=.9\textwidth]{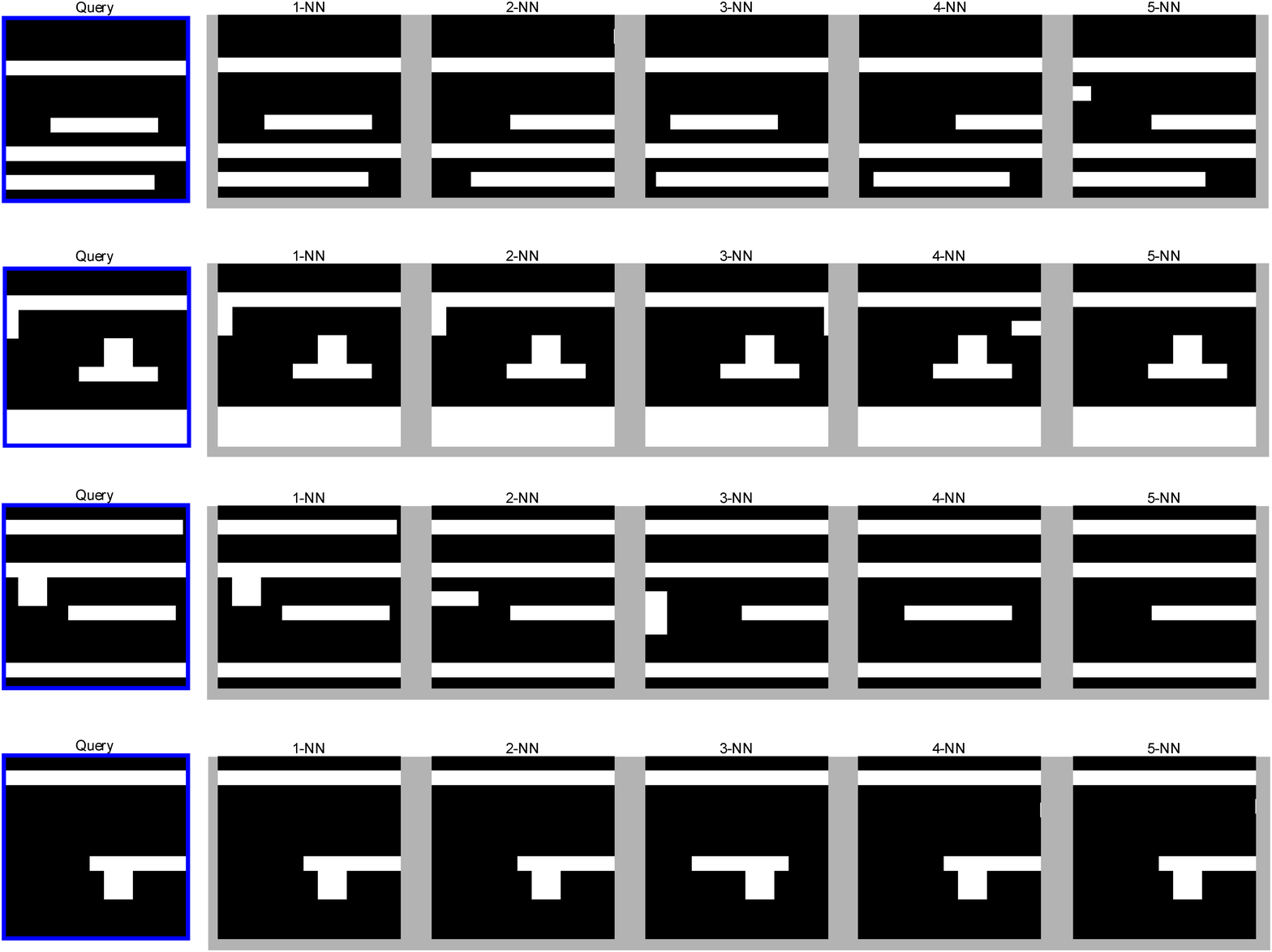}
\caption{Illustrative examples of the \texttt{Optimal Bounds} approach performance for the $k$-NN problem. 
The leftmost images are the query images, and on the right we depict the $k = 5$ nearest neighbors as computed in the compressed domain. Observe that owing to the signature extracted from each image, we can also detect translation-invariant matches.}
\label{fig:VLSI_example1}
\end{figure*}
%%%%%%%%%%%%%%%%%%%%%%%%%%%%%%%%%%%%%%%%%%%%%%%%%%%%%%%%%%%%%%

%%%%%%%%%%%%%%%%%%%%%%%%%%%%%%%%%%%%%%%%%%%%%%%%%%%%%%%%%%%%%%
\begin{figure*}[!htpb]
\centering
\includegraphics[width=0.33\textwidth]{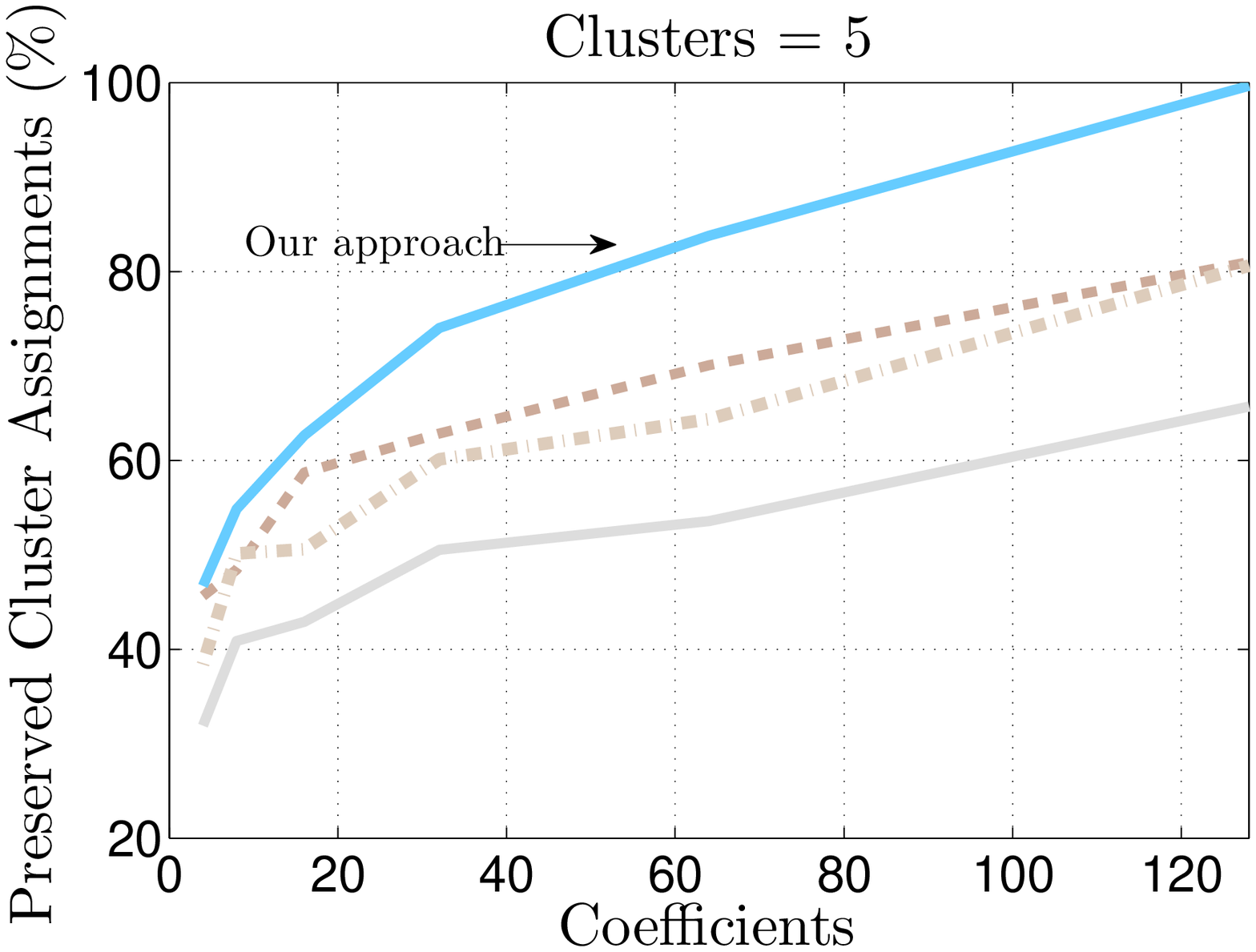} \includegraphics[width=0.33\textwidth]{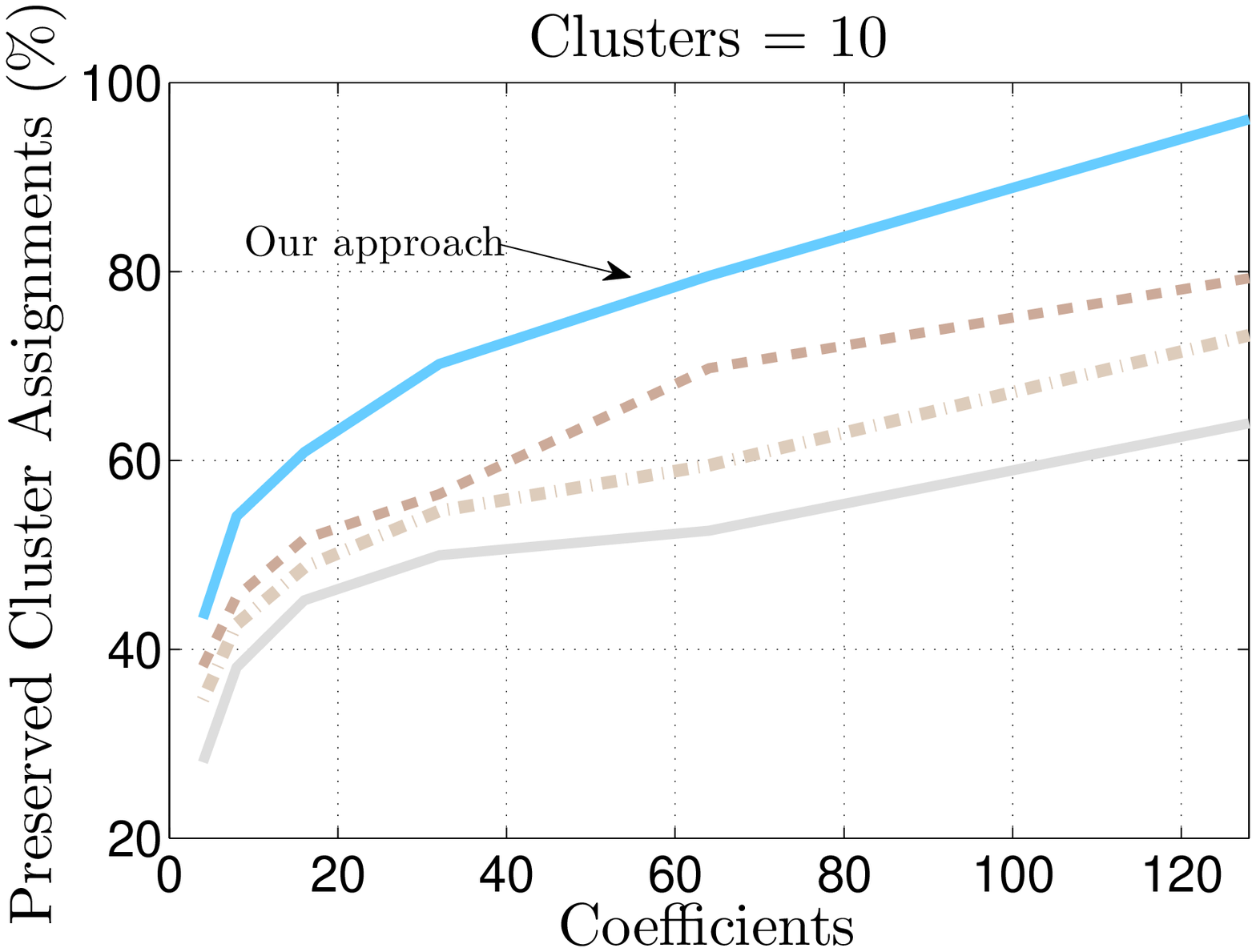} \includegraphics[width=0.33\textwidth]{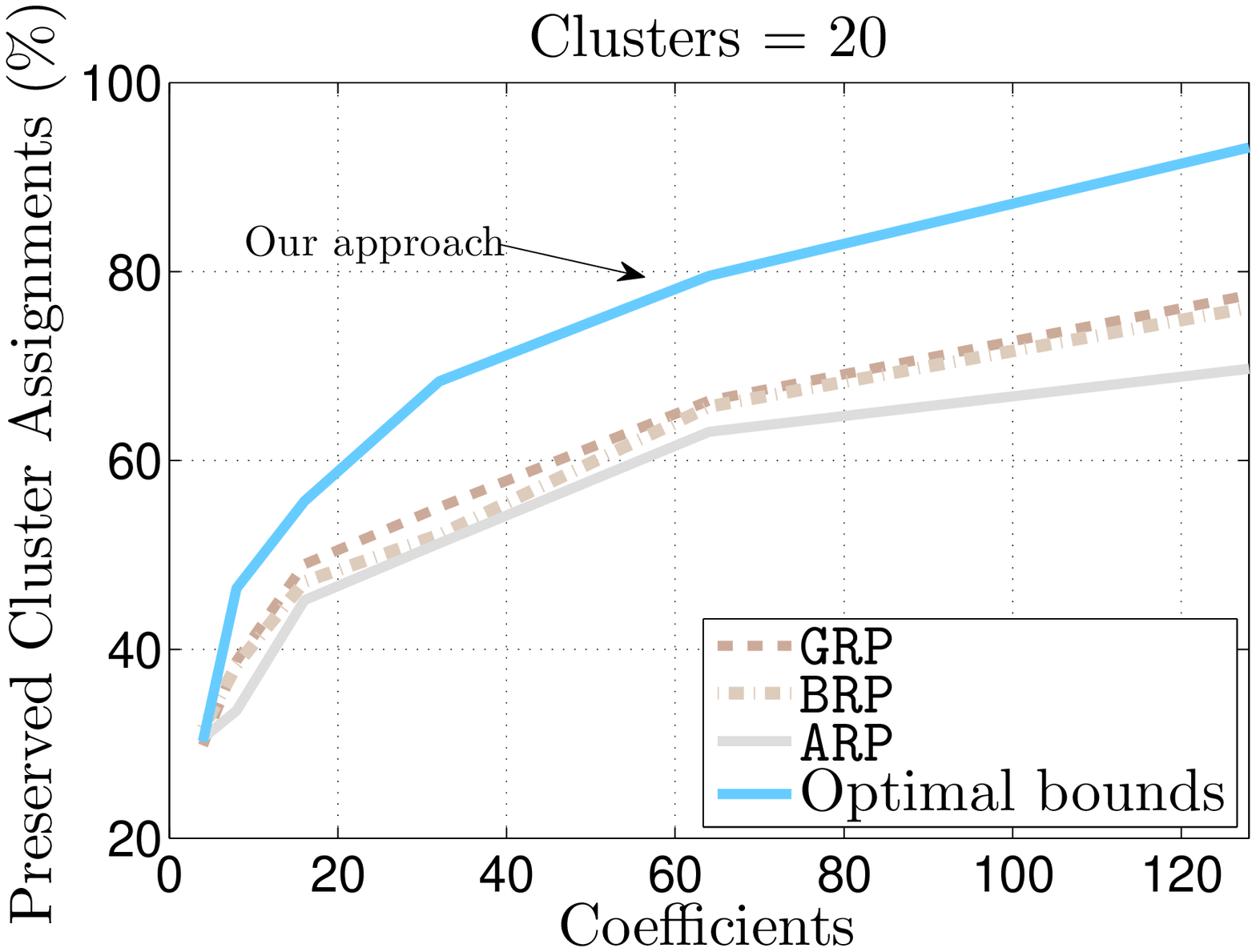}
\caption{Clustering preservation performance as function of the number of coefficients $s$. RP-based approaches project onto a $d$-dimensional subspace, where $d = \lceil 2s + \frac{s}{2} + 1 \rceil$. Here, the cardinality of the dataset is $|V| = 50,000$ and the byte size of each sequence is $d$ bytes.}
\label{fig:VLSI_kMeans_results}
\end{figure*}
%%%%%%%%%%%%%%%%%%%%%%%%%%%%%%%%%%%%%%%%%%%%%%%%%%%%%%%%%%%%%%

\medskip
\noindent \textit{$k$-NN results on VLSI layouts:}
We evaluate the quality of the results of $k$-NN operations in the compressed
domain as compared to the $k$-NN results when operating on the uncompressed image data.
We limit our evaluation to comparing our methodology to Random-Projection techniques,
since only these approaches are scalable for large datasets.
%The original dataset was $\sim 600$Mbytes. Keeping $s = 8$ wavelet coefficients requires $\sim 13.2$Mbytes of space.

Figure \ref{fig:VLSI_results} illustrates the performance of the following approaches: $(i)$ RP-based $k$-NN for three different random matrix ensembles (Gaussian, Bernoulli and Achlioptas' based) and $(ii)$ our \texttt{Optimal Bounds} approach. We observe that our method can improve the relative efficiency of matching by up to $20\%$, compared with the best random-projection approach. Finally, Figure \ref{fig:VLSI_example1} provides some representative examples of the $k = 5$ nearest neighbors for four randomly selected queries. Using the layout signature derived, we can discover very flexible translation-invariant matches.

\medskip \noindent
\textit{Clustering quality on VLSI layouts:}
We assess the quality of $k$-Means clustering operations in the compressed domain.
As before, the quality is evaluated as to how similar the clusters are before and after compression
when $k$-Means is initialized using the same seed points. So, we use the same centroid points $C^{(t)}$ as in the uncompressed domain and then compress each $C^{(t)}$ accordingly, using the dimensionality reduction strategy 
dictated by each algorithm. Again, we consider the $k$-Means algorithm as our baseline procedure. %In a nutshell, we would like to manipulate the data $\X^{(i)}$ only in the low-dimensional subspaces such that the resulting clustering in the compressed domain corresponds to the actual (or is sufficiently close to) clustering of $\X^{(i)}$ in their uncompressed form. 

Figure \ref{fig:VLSI_kMeans_results} depicts the results for three clustering levels $k$: $5$, $10$ and, $20$ clusters. We perform $k$-Means for different compression ratios (coefficients) in the range $s = \lbrace 4, 8, 16, 32, 64, 128 \rbrace$. 
We evaluate how strong the distortion in clustering assignment is when operating in the compressed domain compared witg the
clustering on the original data.  For all cases, our approach provides cluster output that aligns better with the original clustering.
For this dataset we observe a consistent $5-10\%$ improvement in the cluster quality returned.
These trends are captured in Fig. \ref{fig:VLSI_kMeans_results}.

%Compared to PCA our methodology does not provide better results when most of the coefficients have some residual energy.
%However, when the data are inherently sparse our approach may provide can provide

%In this experiment, we use a subset of images with cardinality $ |V| = 50,000$ to perform the clustering. 
%We highlight that the PCA-based approach has prohibitive cost and is not included in this setting.

\medskip
In summary, the above experiments have provided strong evidence that our methodology can offer better mining quality in the compressed domain than random projection techniques, both in the traditional and in the compressed-sensing sense (i.e., high data-sparsity).
Finally, because our approach is also very fast to compute (e.g. compared with PCA), we believe that it will
provide an important building block for transitioning many mining operations into the compressed data space.

%%%%%%%%%%%%%%%%%%%%%%%%%%%%%%%%%%%%%%%%%%%%%%%%%%%%%%%%%%%%%%%
% INDEXING
%%%%%%%%%%%%%%%%%%%%%%%%%%%%%%%%%%%%%%%%%%%%%%%%%%%%%%%%%%%%%%%

%%%%%%%%%%%%%%%%%%%%%%%%%%%%%%%%%%%%%%%%%%%%%%%%%%%%%%%%%%%%%%
\begin{figure*}[!ht]
\centering
\includegraphics[width=0.33\textwidth]{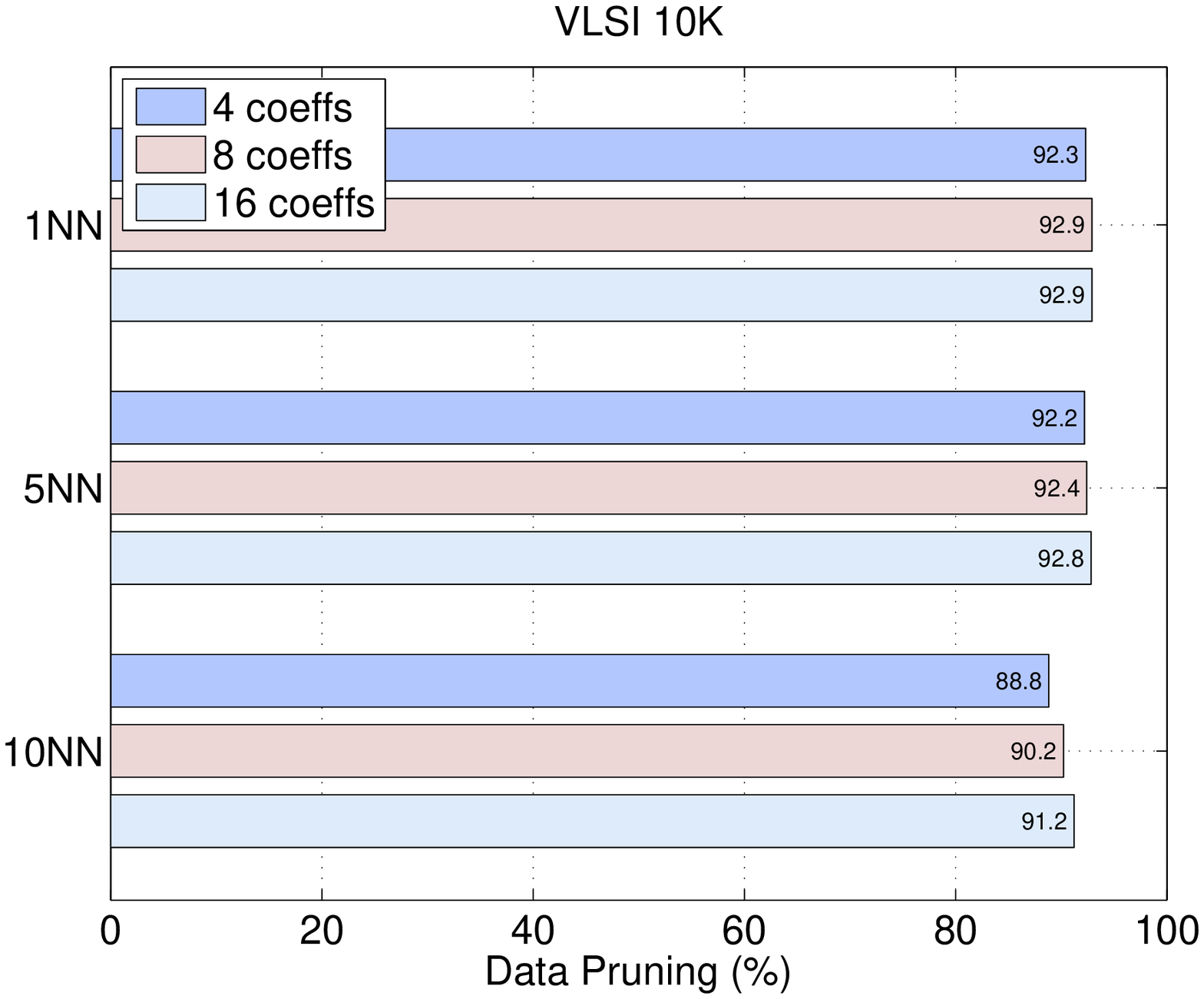} 
\includegraphics[width=0.33\textwidth]{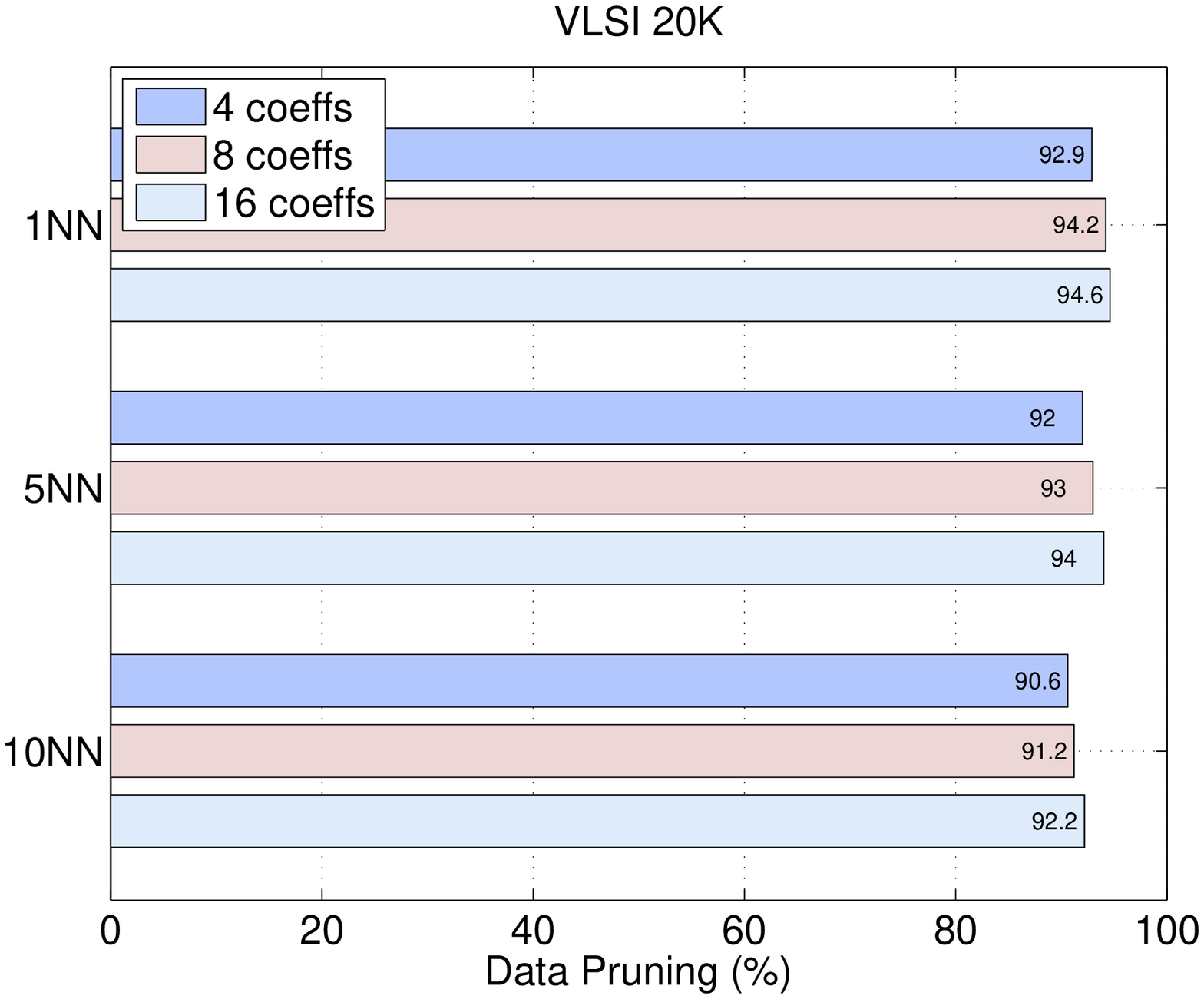}
 \includegraphics[width=0.33\textwidth]{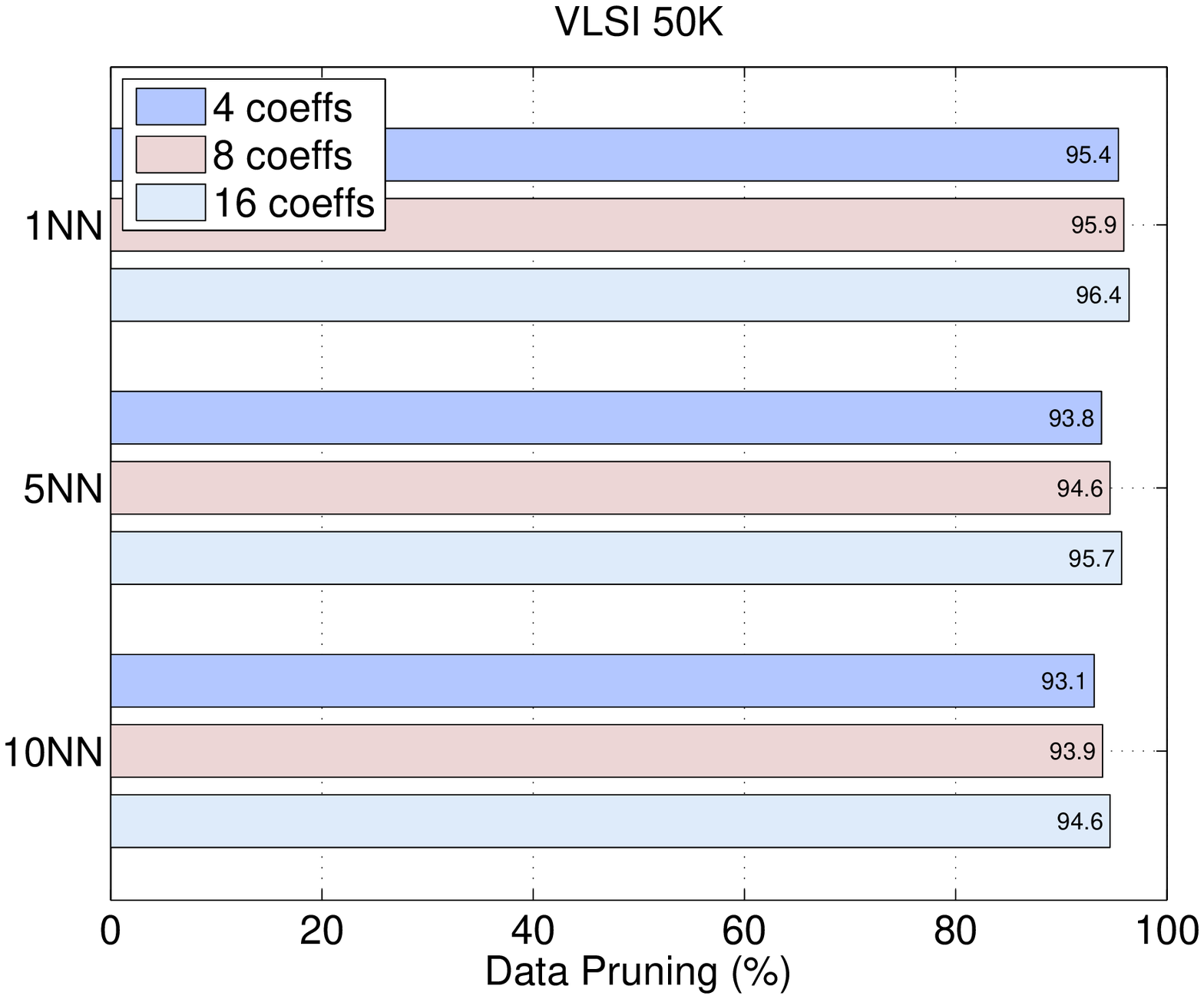} \\
\includegraphics[width=0.33\textwidth]{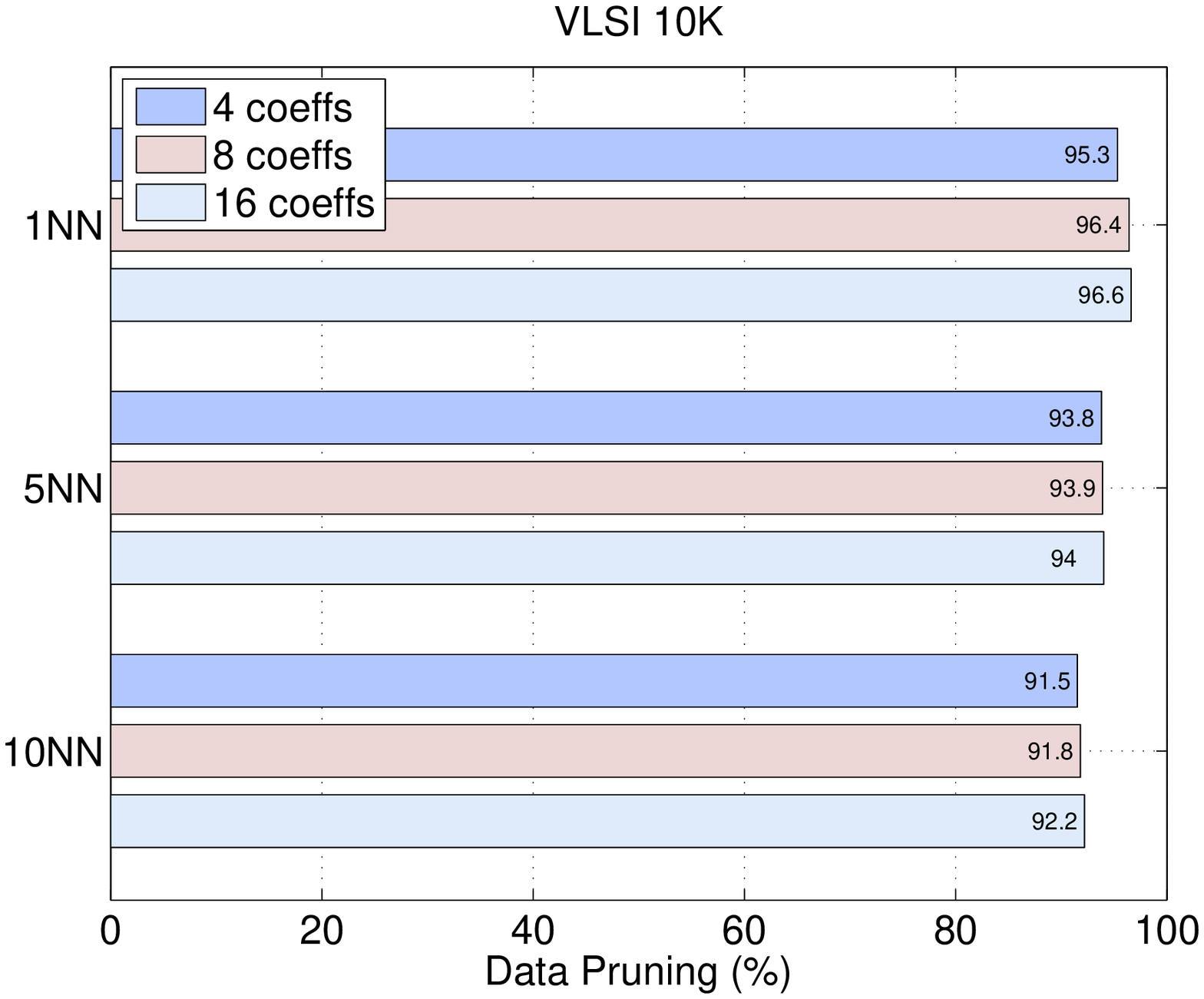} 
\includegraphics[width=0.33\textwidth]{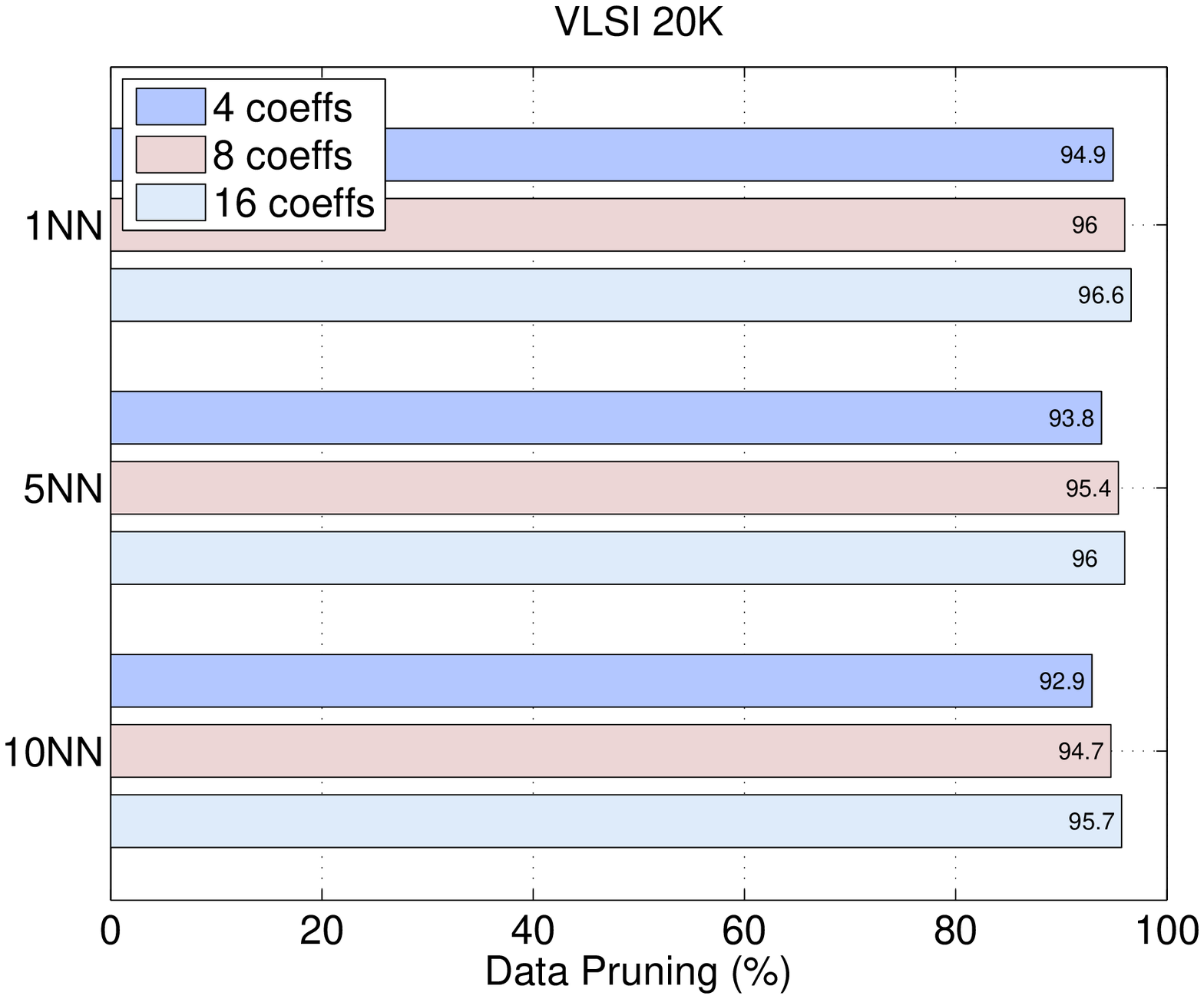}
 \includegraphics[width=0.33\textwidth]{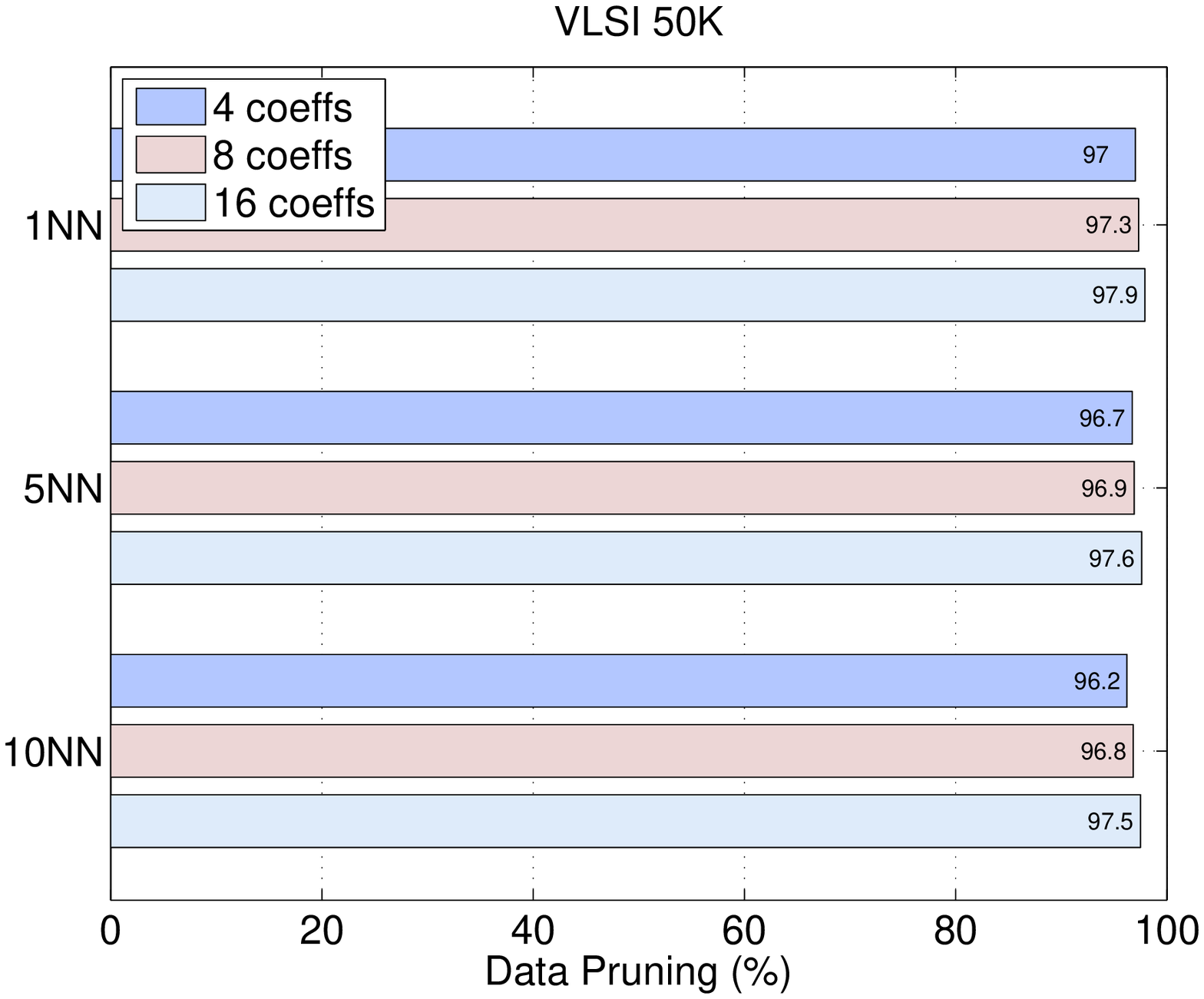} 
\caption{Pruning power of a VP-tree index when using the proposed object representation and double-waterfilling distance estimation. Top row: conservative pruning. Bottom row: aggressive pruning.}
\label{fig:index}
\end{figure*}
%%%%%%%%%%%%%%%%%%%%%%%%%%%%%%%%%%%%%%%%%%%%%%%%%%%%%%%%%%%%%%

\subsection{Indexing}
In this final section we discuss how the proposed representation and distance estimation scheme can be leveraged for indexing. The preceding experiments have suggested that:
\begin{itemize}
    \item The proposed representation can exploit patterns in the dataset to achieve high compression. This will result in a smaller index size.
    \item The distance estimation (lower-, upper-bounds) are tighter than competitive techniques. This eventually leads to better pruning power during search.
\end{itemize}

Note that using the presented variable coefficient representation traditional space-partitioning indices, such as R-trees or KD-trees, \textit{cannot} be used.  This is because such techniques assume that each object is represented by the same set of coefficients, whereas our technique may use:  a) potentially disjoint sets of coefficients per object and/or b) variable number of coefficients per object.

Our representation can be indexed using metric trees which create a hierarchical organization of the compressed objects based on their respective distances. In previous work we have shown how VP-trees (a variant of metric trees) can be used to index representations that use variable sets of coefficients %\cite{DBLP:journals/tweb/VlachosKY10} 
\cite{vlachosTWEB}. A VP-tree is constructed by recursively partitioning the objects according to their distance to some selected objects.
These objects are called \emph{vantage points} (VP), and are selected in such a way so that they provide a good separation of the currently examined subset of the dataset. 
Search and pruning of the tree is facilitated using triangle inequality. Queries are compared with the vantage point at the current tree level,
and search is directed towards the most promising part of the tree. Parts of the tree that are provably outside the search scope (invoking triangle inequality)
are pruned from examination. For additional details the interested reader can consult %\cite{DBLP:journals/tweb/VlachosKY10}.
\cite{vlachosTWEB}.

\noindent 
\textit{Indexing setup: } We use three instances from the VLSI dataset: with 10K, 20K and 50K objects, compressed
using wavelet coefficients. Objects are represented in the compressed domain using $s = 4$,  $s = 8$ and  $s = 16$ coefficients per object. 
We use 100 objects as queries, which are not the same as the indexed objects.

To construct and search the tree on the compressed objects we use the presented double-waterfilling algorithm. The algorithm is used
to  compute distances both between objects for the tree construction phase, as well as for computing distances between the query posed
and the tree's vantage points. There are two search and pruning strategies one can follow: a \emph{conservative} and an \emph{aggressive} pruning strategy. For the conservative
strategy both lower and upper bounds (double-waterfilling process) from the query to each vantage point are used to navigate the tree and prune nodes. 
For the aggressive pruning strategy only the average distance $(ul+lb)/2$ is used as the distance proxy between the query and a vantage point. The aggressive
strategy achieves greater pruning but this results in slightly lower precision compared to the conservative strategy. However the precision of the aggressive search is still kept at very levels, ranging from $75-90\%$ across all experiments.

In Figure \ref{fig:index} we report the pruning power achieved with the use of the index for both conservative and aggressive pruning strategies.
Pruning power is evaluated as the number of leaves accessed over the total number of objects in the index.
We calculate the pruning power when running 1NN, 5NN and 10NN (Nearest Neighbor) search on the query objects.
One can see that with the use of indexing we can refrain from examining a very big part of the dataset, with the pruning consistently exceeding $90\%$. It is important to note that the pruning power grows for increasing dataset sizes.
%}

\section{Conclusion}
We have examined how to compute optimally-tight distance bounds on compressed data representations
under any orthonormal transform. We have demonstrated that our methodology can 
retrieve more relevant matches than competing approaches during mining operations on the compressed data. 
A particularly interesting result is that for data with very high redundancy/sparsity (see, for example, magnetic resonance imaging \cite{candes2006stable}), our approach may even provide better search performance than compressed sensing approaches,
which have been designed to specifically tackle the sparsity issue. In such scenarios, our method may even
outperform PCA-based techniques, owing to its capability to use different sets of high-energy coefficients per object. 

As future work, we intend to continue to investigate the merits of our methodology under a broader variety of distance-based operations such as anomaly detection and density-based clustering.

\medskip \noindent 
\footnotesize{
\textbf{Acknowledgements:}
The research leading to these results has received funding from the European Research Council under the European Union's Seventh Framework Programme (FP7/2007-2013) / ERC grant agreement no. 259569.}\normalsize{ }

\bibliographystyle{ieeetr}
\bibliography{optimalBoundsBib}

\section{Appendix}
\renewcommand{\theequation}{A-\arabic{equation}}
% redefine the command that creates the equation no.
\setcounter{equation}{0}  % reset counter

\subsection{Existence of solutions and necessary and sufficient conditions for optimality}\label{app:properties}
The constraint set is a compact convex set, in fact, a compact \emph{polyhedron}.
The function $g(x,y):= -\sqrt{x}\sqrt{y}$ is convex but not strictly convex on $\R^2_+$. To see this, note that the \emph{Hessian} exists for all $x,y>0$ and equals
\vspace{-0.5\baselineskip}
\begin{equation*}
\triangledown^2g = \frac{1}{4}\left(
\begin{array}{cc}
x^{-\frac{3}{2}}y^{-\frac{1}{2}} & -x^{-\frac{1}{2}}y^{-\frac{1}{2}}\\
-x^{-\frac{1}{2}}y^{-\frac{1}{2}} & x^{-\frac{1}{2}}y^{-\frac{3}{2}}
\end{array}\right)
\end{equation*}
\noindent
with eigenvalues $0, \frac{1}{\sqrt{xy}}(\frac{1}{x} + \frac{1}{y})$, and hence is positive semi-definite, which in turn implies that $g$ is convex \cite{BV04}. Furthermore, $-\sqrt{x}$ is a strictly convex function of $x$ so that the objective function of (\ref{opt3}) is convex, and strictly convex only if $p^-_x\cap p^-_q = \emptyset$. It is also a continuous function %for all $z_i,y_i > 0$
so solutions exist, i.e., the optimal value is bounded and is attained.
It is easy to check that the \emph{Slater condition} holds, whence the problem satisfies \emph{strong duality} and there exist Lagrange multipliers~\cite{BV04}. We skip the technical details for simplicity, but we want to highlight that this property is crucial because it guarantees that the Karush--Kuhn--Tucker (KKT) necessary conditions~\cite{BV04} for Lagrangian optimality are also \emph{sufficient}. Therefore, if we can find a solution that satisfies the KKT conditions for the problem, we have found an \emph{exact} optimal solution and the \emph{exact} optimal value of the problem. The Lagrangian is
\vspace{-0.5\baselineskip}
\begin{small}
\begin{align}\label{lagr1}
L(\y, \z, \lambda, \mu, \bm{\alpha}, \bm{\beta}) &:= -2\!\!\sum_{i\in P_1} \!\!b_i \sqrt{z}_i - 2\!\!\sum_{i\in P_2} \!\!a_i \sqrt{y}_i - 2\!\!\sum_{i\in P_3} \!\! \sqrt{z}_i\sqrt{y}_i\\
	&	+ \lambda \Big(\sum_{i\in p^-_x}( z_i - e_x)\Big) + \mu\Big(\sum_{i\in p^-_q}( y_i - e_q)\Big)\nonumber\\
	&	+ \sum_{i\in p^-_x}\alpha_i(z_i-Z) + \sum_{i\in p^-_q}\beta_i(y_i-Y)\nonumber \;.
\end{align}
\end{small}

\noindent
The KKT conditions are as follows\footnote{The condition (\ref{O}) excludes the cases that for some $i$ $z_i = 0$, or $y_i=0$, which will be treated separately.}:
\begin{subequations}\label{kkt_opt3}
\begin{small}
\begin{eqnarray}
0\le z_i  \le Z, \ 0\le y_i  \le Y,  & \text{(PF)}\label{PF}\\
\sum_{i\in p^-_x} z_i \le e_x, \ \sum_{i\in p^-_x} z_i \le e_Q & \nonumber\\
\lambda, \mu, \alpha_i, \beta_i \ge 0 & \text{(DF)}\label{DF}\\
\alpha_i(z_i-Z) = 0, \ \ \beta_i(y_i-Y)=0 & \text{(CS)}\label{CS}\\
\lambda \Big(\sum_{i\in p^-_x}( z_i - e_x)\Big) = 0, \ \ \mu\Big(\sum_{i\in p^-_q}( y_i - e_q)\Big) = 0 \nonumber\\
%i\in P_1: \  \frac{\partial L}{\partial z_i} = -\frac{1}{2}\frac{b_i}{\sqrt{z_i}} + \lambda + \alpha_i = 0   &  \text{(O)}\label{O}\\
%i\in P_2: \  \frac{\partial L}{\partial y_i} = -\frac{1}{2}\frac{a_i}{\sqrt{y_i}} + \mu + \beta_i = 0 &  \nonumber\\
%i\in P_3: \  \frac{\partial L}{\partial y_i} = -\frac{1}{2}\frac{\sqrt{z_i}}{\sqrt{y_i}} + \mu + \beta_i = 0, & \nonumber\\
%\tab         \frac{\partial L}{\partial z_i} = -\frac{1}{2}\frac{\sqrt{y_i}}{\sqrt{z_i}} + \lambda + \alpha_i = 0 & \nonumber
i\in P_1: \  \frac{\partial L}{\partial z_i} = -\frac{b_i}{\sqrt{z_i}} + \lambda + \alpha_i = 0   &  \text{(O)}\label{O}\\
i\in P_2: \  \frac{\partial L}{\partial y_i} = -\frac{a_i}{\sqrt{y_i}} + \mu + \beta_i = 0 &  \nonumber\\
i\in P_3: \  \frac{\partial L}{\partial z_i} = -\frac{\sqrt{y_i}}{\sqrt{z_i}} + \lambda + \alpha_i = 0 & \nonumber\\
\tab         \frac{\partial L}{\partial y_i} = -\frac{\sqrt{z_i}}{\sqrt{y_i}} + \mu + \beta_i = 0 \;, & \nonumber
\end{eqnarray}
\end{small}
\end{subequations}
\noindent where we use shorthand notation for \emph{Primal Feasibility} (PF), \emph{Dual Feasibility} (DF), \emph{Complementary Slackness} (CS), and \emph{Optimality} (O)~\cite{BV04}.

\subsection{Proof of theorem \ref{prop_thm}}\label{app:proof}
For the first part, note that problem (\ref{opt3}) is a double minimization problem over $\{z_i\}_{i\in p^-_x}$ and $\{y_i\}_{i\in p^-_q}$. If we fix one vector in the objective function of (\ref{opt3}), then the optimal solution with respect to the other one is given by the waterfilling algorithm. In fact, if we consider the KKT conditions (\ref{kkt_opt3}) or the KKT conditions to (\ref{opt2}), they correspond exactly to (\ref{nec_kkt_waterfill}). The waterfilling algorithm has the property that if $\mathbf{a} = \text{waterfill }(\mathbf{b},e_x,A)$, then $b_i>0$ implies $a_i>0$. Furthermore, it has a monotonicity property in the sense that $b_i \le b_j$ implies $a_i \le a_j$.
Assume that, at optimality, $a_{l_1} < a_{l_2}$ for some $l_1\in P_1,l_2\in P_3$. Because $b_{l_1} \ge B \ge b_{l_3}$ we can swap these two values to decrease the objective function, which is a contradiction. The exact same argument applies for $\{b_l\}$, so $\min_{l\in P_1}a_l \ge \max_{l\in P_3}a_l, \min_{l\in P_2}b_l \ge \max_{l\in P_3}b_l$.

For the second part, note that $-\sum_{i\in P_3} \sqrt{z_i}\sqrt{y_i} \ge -\sqrt{e_x'}\sqrt{e_q'}$. If $e_x'e_q' >0$, then at optimality this is attained with equality for the particular choice of $\{a_l,b_l\}_{l\in P_3}$. It follows that all entries of the optimal solution $\{a_l,b_l\}_{l\in p^-_x \cup p^-_q}$ are strictly positive, hence (\ref{O}) implies that 

\vspace{-\baselineskip}
\begin{subequations}\label{kkt_pos}
\begin{eqnarray}
a_i &=& \frac{b_i}{\lambda + \alpha_i}, \ \ i\in P_1\\
b_i &=& \frac{a_i}{\mu + \beta_i}, \ \ i \in P_2\\
a_i &=& (\mu + \beta_i)b_i, \ \ i \in P_3\\
b_i &=& (\lambda + \alpha_i)a_i, \ \ i \in P_3.\nonumber
\end{eqnarray}
\end{subequations}

For the particular solution with all entries in $P_3$ equal $ \left( a_l = \sqrt{e_x'/|P_3|}, b_l = \sqrt{e_q'/|P_3|} \right)$, (\ref{lagrange_mult}) is an immediate application of (\ref{kkt_pos}.c).  The optimal entries $\{a_l\}_{l\in P_1}, \{b_l\}_{l\in P_2}$ are provided by waterfilling with available energies $e_x - e_x', e_q-e_q'$, respectively, so (\ref{non_lin}) immediately follows.

For the third part, note that the cases that either $e_x'=0,e_q'>0$ or $e_x'>0,e_q'=0$ are excluded at optimality by the first part, cf. (\ref{nec_kkt_waterfill}).

For the last part, note that when $e_x' = e_q' = 0$, equivalently $a_l=b_l = 0$ for $l\in P_3$, it is not possible to take derivatives with respect to any coefficient in $P_3$, so the last two equations of (\ref{kkt_opt3}) do not hold. In that case, we need to perform a standard perturbation analysis. Let $\bm{\epsilon} := \{\epsilon_l\}_{l \in P_1 \cup P_2}$ be a sufficiently small positive vector. As the constraint set of (\ref{opt3}) is linear in $z_i,y_i$, any \emph{feasible} direction (of potential decrease of the objective function) is of the form $z_i \leftarrow z_i - \epsilon_i, i\in P_1$, $y_i \leftarrow y_i - \epsilon_i, i \in P_2$, and $z_i,y_i \ge 0, i \in P_3$ such that $\sum_{i\in P_3}z_i = \sum_{i\in P_1}\epsilon_i, \sum_{i\in P_3}y_i = \sum_{i\in P_2}\epsilon_i$. The change in the objective function is then equal to (modulo an $o(||\bm{\epsilon}||^2)$ term)
\begin{small}
\begin{align}
g(\bm{\epsilon}) & \approx \frac{1}{2} \sum_{i \in P_1} \frac{b_i}{\sqrt{z}_i}\epsilon_i + \frac{1}{2} \sum_{i \in P_2} \frac{a_i}{\sqrt{y}_i}\epsilon_i - \sum_{i\in P_3}\sqrt{z_i}\sqrt{y_i}\\
&\ge \frac{1}{2} \sum_{i \in P_1} \frac{b_i}{\sqrt{z}_i}\epsilon_i + \frac{1}{2} \sum_{i \in P_2} \frac{a_i}{\sqrt{y}_i}\epsilon_i - \sqrt{\sum_{i\in P_1}\epsilon_i}\sqrt{\sum_{i\in P_2}\epsilon_i} \nonumber\\
&\ge \frac{1}{2}\min_{i\in P_1}\frac{b_i}{\sqrt{z}_i}\epsilon_1 + \frac{1}{2} \min_{i \in P_2} \frac{a_i}{\sqrt{y}_i}\epsilon_2 - \sqrt{\epsilon_1}{\epsilon_2},\nonumber
\end{align}
\end{small}
\noindent where the first inequality follows from an application of the Cauchy--Schwartz inequality to the last term, and in the second one we have defined $\epsilon_j = \sum_{i\in P_j}\epsilon_i, i=1,2$.
Let us define $\epsilon := \sqrt{\epsilon_1/\epsilon_2}$. 
From the last expression, it suffices to test for any $i\in P_1,j\in P_2$:
\begin{small}
\begin{align}
g(\epsilon_1,\epsilon_2) &= \frac{1}{2} \frac{b_i}{\sqrt{z}_i}\epsilon_1 + \frac{1}{2} \frac{a_j}{\sqrt{y}_j}\epsilon_2 - \sqrt{\epsilon_1}\sqrt{\epsilon_2} = \frac{1}{2}\sqrt{\epsilon_1}\sqrt{\epsilon_2} g_1(\epsilon)\\
g_1(\epsilon) &:= \frac{b_i}{\sqrt{z}_i}\epsilon + \frac{a_j}{\sqrt{y}_j}\frac{1}{\epsilon} -2 \ge \frac{1}{\epsilon}g_2(\epsilon)\nonumber\\
g_2(\epsilon) &:=  \frac{b_i}{A}\epsilon^2 -2\epsilon + \frac{a_i}{B},\nonumber
\end{align}
\end{small}
\noindent where the inequality above follows from the fact that $\sqrt{z_i} \le A, i\in P_1$ and $\sqrt{y_i} \le B, i\in P_2$.
Note that $h(\epsilon)$ is a quadratic with a nonpositive discriminant $\Delta := 4(1 - \frac{a_ib_i}{AB}) \le 0$ as, by definition, we have that $B \le b_i, i\in P_1$ and $A\le a_i, i\in P_2$. Therefore $g(\epsilon_1,\epsilon_2) \ge 0$ for any $(\epsilon_1,\epsilon_2)$ both positive and sufficiently small, which is a necessary condition for local optimality. By convexity, the vector pair $(\mathbf{a},\mathbf{b})$ obtained constitutes an optimal solution. %The necessity of the last part is obvious.
\hfill$\blacksquare$%\hfill$\square$ 

\subsection{Energy allocation in double-waterfilling}\label{app:fixed_point}

Calculating a fixed point of $T$ is of interest only if $e_x'e_q'>0$ at optimality. We know that we are not in the setup of Theorem \ref{prop_thm}.4, therefore we have the additional property that either $e_x > |P_1|A^2$,  $e_q > |P_2|B^2$ or both. %We define the feasible set for $e_x',e_q'$
%which is a closed rectangle in $E \subset \mathbb{R}^2$:
%\begin{small}
%\begin{equation}
%E := \{(x_1,x_2): e_x - |P_1|A^2 \le x_1 \le e_x,  e_q - |P_2|B^2 \le x_2 \le e_q\}
%\end{equation}
%\end{small}
Let us define
\begin{small}
\begin{eqnarray}
\tab \gamma_a := \inf \Big\{\gamma> 0: \sum_{l \in P_2}\min \Big( a_l^2\frac{1}{\gamma},B^2 \Big) \le e_q\Big\}\label{gammaab}\\
\tab \gamma_b := \sup \Big\{\gamma\ge 0: \sum_{l \in P_1}\min \Big( b_l^2\gamma,A^2 \Big)\le e_x\Big\}.\nonumber
\end{eqnarray}
\end{small}

\noindent
Clearly if $e_x > |P_1|A^2$ then $\gamma_b = +\infty$, and for any $\gamma\ge \max_{l\in P_1}\frac{A^2}{b_l^2}$ we have $ \sum_{l \in P_1}\min(b_l^2\gamma,A^2) = |P_1|A^2$. Similarly, if $e_q>|P_2|B^2$ then $\gamma_a = 0$, and for any $\gamma\le \min_{l\in P_2}\frac{a_l^2}{B^2}$ we have $ \sum_{l \in P_2}\min(a_l^2\frac{1}{\gamma},B^2) = |P_2|B^2$. 
If $\gamma_b < +\infty$, we can find the exact value of $\gamma_b$ analytically by sorting $\{\gamma_l^{(b)}:=\frac{A^2}{b_l^2}\}_{l\in P_1}$, --i.e., by sorting $\{b_l^2\}_{P_1}$ in decreasing order--and considering
\begin{equation*}
h_b(\gamma) := \sum_{l \in P_1}\min(b_l^2\gamma_i^{(b)},A^2) - e_x
\end{equation*}
and $v_i := h_b(\gamma_i^{(b)})$. In this case,  $v_1<\hdots<v_{|P_1|}$, and $v_{|P_1|}>0$, and there are two possibilities: 1) $v_1>0$ whence $\gamma_b < \gamma_1^{(b)}$, or 2) there exists some $i$ such that $v_i<0<v_{i+1}$ whence $\gamma_i^{(b)} < \gamma_b < \gamma_{i+1}^{(b)}$. For both ranges of $\gamma$, the function $h$ becomes \emph{linear} and strictly increasing, and it is elementary to compute its root $\gamma_b$.
A similar argument applies for calculating $\gamma_a$ if $\gamma_a$ is strictly positive, by defining 

$$h_a:= \sum_{l \in P_2}\min \Big( a_l^2\frac{1}{\gamma},B^2 \Big) - e_q$$.
%
%%%%%%%%%%%%%%%%%%%%%%%%%%%%%%%%%%%%%%%%%%%%%%%%%%%%%%%%%%%%%%%%%%%%
\vspace{-0.1cm}
\begin{figure}[!ht]
\centerline{
\includegraphics[width=0.5\textwidth]{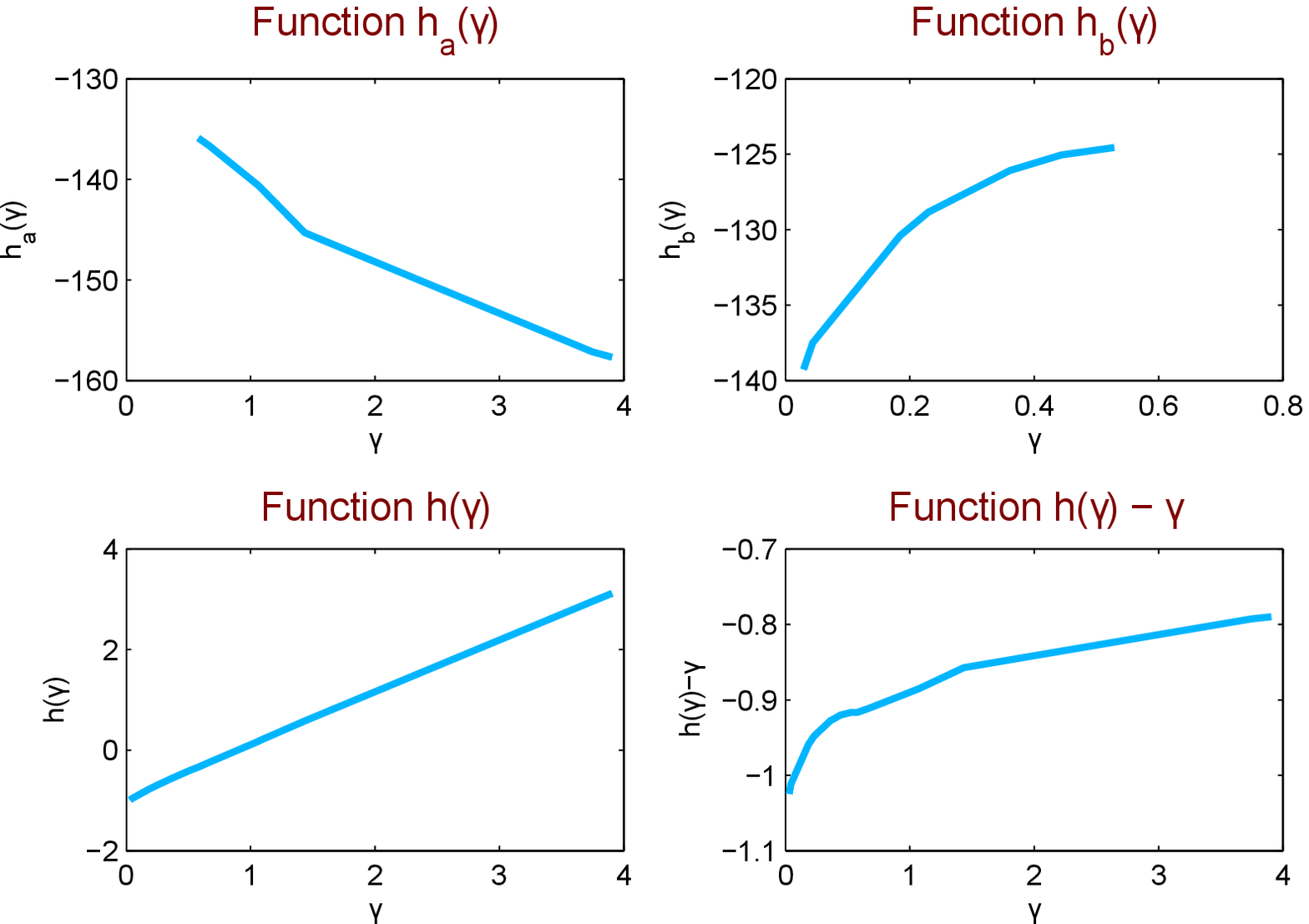}}
\vspace{-\baselineskip}
\caption{Plot of functions $h_a,h_b,h$. Top row: $h_a$ is a bounded decreasing function, which is piecewise linear in $\frac{1}{\gamma}$ with nonincreasing slope in $\frac{1}{\gamma}$; $h_b$ is a bounded increasing piecewise linear function of $\gamma$ with nonincreasing slope. Bottom row: $h$ is an increasing function; the linear term $\gamma$ dominates the fraction term, which is also increasing, see  bottom right.}
\label{fig:hplot}
\end{figure}
\vspace{-0.2cm}
%%%%%%%%%%%%%%%%%%%%%%%%%%%%%%%%%%%%%%%%%%%%%%%%%%%%%%%%%%%%%%%%%%%%

\begin{theorem}[Exact solution of (\ref{non_lin})]\label{opt_lem}~\\
If either $e_x > |P_1|A^2$, $e_q > |P_2|B^2$  or both, then
the nonlinear mapping $T$ has a unique fixed point $(e_x', e_q')$ with $e_x',e_q'>0$.
The equation
\begin{equation}
\frac{e_x - \sum_{l \in P_1}\min(b_l^2\gamma,A^2)}{e_q - \sum_{l \in P_2}\min(a_l^2\frac{1}{\gamma},B^2)} = \gamma
\end{equation}
has a unique solution $\bar{\gamma}$ with $\gamma_a \le \bar{\gamma}$ and  $\gamma_a \le \gamma_b$ when $\gamma_b <+\infty$.
%\begin{equation*}
%\max(\frac{e_x - |P_1|A^2}{e_q},\gamma_a) \le \bar{\gamma} \le \min(\frac{e_x}{e_q - |P_2|B^2},\gamma_b)
%\end{equation*}
%$ 0 < \frac{1}{\gamma^{(2)}} \le \bar{\gamma} \le \gamma^{(1)} <+\infty$.
The unique fixed point of $T$ (solution of (\ref{non_lin})) satisfies
\begin{eqnarray}\label{exeq_res}
e_x' &=& e_x - \sum_{l \in P_1}\min \left( b_l^2\bar{\gamma},A^2 \right)\\
e_q' &=& e_q - \sum_{l \in P_2}\min \left( a_l^2\frac{1}{\bar{\gamma}},B^2 \right).\nonumber
\nonumber
\end{eqnarray}
\end{theorem}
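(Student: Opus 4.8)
To prove Theorem~\ref{opt_lem}, the plan is to collapse the two-dimensional fixed-point equation for the map $T$ of \eqref{T_map} onto a single scalar equation, to solve that scalar equation by an elementary monotonicity and boundary argument, and then to lift the solution and its uniqueness back to $T$. Throughout, abbreviate
$$N(\gamma):=e_x-\sum_{l\in P_1}\min(b_l^2\gamma,A^2),\qquad D(\gamma):=e_q-\sum_{l\in P_2}\min\!\bigl(a_l^2/\gamma,B^2\bigr),$$
so that $N=-h_b$ and $D=-h_a$ in the notation of appendix~\ref{app:fixed_point}. First I would observe that a pair $(e_x',e_q')$ with $e_x',e_q'>0$ is a fixed point of $T$ if and only if, writing $\gamma:=e_x'/e_q'>0$, it holds that $e_x'=N(\gamma)$, $e_q'=D(\gamma)$ and $\gamma=N(\gamma)/D(\gamma)$: the two components of $T(e_x',e_q')$ are exactly $N(e_x'/e_q')$ and $D(e_x'/e_q')$, and imposing that they equal $(e_x',e_q')$ gives the three displayed relations. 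Hence positive fixed points of $T$ --- equivalently, positive solutions of the system \eqref{non_lin} --- are in bijection with positive zeros $\bar\gamma$ of $h(\gamma):=\gamma-N(\gamma)/D(\gamma)$, the fixed point being recovered uniquely as $(N(\bar\gamma),D(\bar\gamma))$. This is consistent with part~2 of Theorem~\ref{prop_thm}, where $\lambda=\sqrt{e_q'/e_x'}=1/\sqrt{\gamma}$ and $\mu=1/\lambda$.

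Next I would analyze $h$ on the open interval $(\gamma_a,\gamma_b)$ (allowing $\gamma_b=+\infty$), using only continuity and monotonicity of the piecewise-linear functions $N,D$: since $\gamma\mapsto\sum_{l\in P_1}\min(b_l^2\gamma,A^2)$ is nondecreasing and $\gamma\mapsto\sum_{l\in P_2}\min(a_l^2/\gamma,B^2)$ is nonincreasing, $N$ is nonincreasing and $D$ is nondecreasing, and by the definitions \eqref{gammaab} of $\gamma_a,\gamma_b$ we have $D(\gamma)>0$ for $\gamma>\gamma_a$ and $N(\gamma)\ge0$ for $\gamma<\gamma_b$. Therefore $N/D$ is nonincreasing on $(\gamma_a,\gamma_b)$ while $\gamma$ is strictly increasing, so $h$ is continuous and strictly increasing there --- the qualitative picture of Fig.~\ref{fig:hplot}. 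The standing hypothesis ``$e_x>|P_1|A^2$ or $e_q>|P_2|B^2$'' is exactly what forces $\gamma_a<\gamma_b$: if $e_x>|P_1|A^2$ then $\gamma_b=+\infty$ while $\gamma_a<+\infty$ (because $D(\gamma)\to e_q>0$ as $\gamma\to+\infty$), and if $e_q>|P_2|B^2$ then $\gamma_a=0$ while $\gamma_b>0$ (because $N$ is continuous with $N(0)=e_x$). I would then inspect the boundary signs of $h$: at the right endpoint, either $N(\gamma)\downarrow0$ as $\gamma\uparrow\gamma_b<\infty$, giving $h(\gamma)\to\gamma_b>0$, or $\gamma\to+\infty$ with $N/D$ bounded, giving $h\to+\infty$; at the left endpoint, either $D(\gamma)\downarrow0$ as $\gamma\downarrow\gamma_a>0$ (with $N(\gamma_a)>0$, since $\gamma_a<\gamma_b$), giving $N/D\to+\infty$ and $h\to-\infty$, or $\gamma_a=0$ with $D(0^+)=e_q-|P_2|B^2>0$, giving $h(0^+)=-e_x/(e_q-|P_2|B^2)<0$. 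A continuous, strictly increasing function that is negative near $\gamma_a$ and positive near $\gamma_b$ has a unique zero $\bar\gamma\in(\gamma_a,\gamma_b)$; in particular $\gamma_a\le\bar\gamma$, and $\gamma_a\le\gamma_b$ whenever $\gamma_b<\infty$, as claimed.

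I would then set $e_x':=N(\bar\gamma)$ and $e_q':=D(\bar\gamma)$. These are strictly positive because $\bar\gamma\in(\gamma_a,\gamma_b)$, and $e_x'/e_q'=N(\bar\gamma)/D(\bar\gamma)=\bar\gamma$ since $h(\bar\gamma)=0$. Substituting $\bar\gamma=e_x'/e_q'$ into the two defining identities for $e_x',e_q'$ turns them into exactly the system \eqref{non_lin}, so $(e_x',e_q')=T(e_x',e_q')$ and formula \eqref{exeq_res} holds. Uniqueness of the positive fixed point is then immediate from the bijection of the first step together with the uniqueness of $\bar\gamma$: a second positive fixed point would produce a second positive zero of $h$, which is impossible.

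The step I expect to be the main obstacle is the boundary bookkeeping in the second paragraph: one must verify $\gamma_a<\gamma_b$ and the correct sign of $h$ at each endpoint across all combinations of $\gamma_a=0$ versus $\gamma_a>0$ and $\gamma_b=+\infty$ versus $\gamma_b<\infty$, and one must carry these limits through functions that are merely continuous and piecewise linear, so no differentiability is available. Once the bracketing interval $(\gamma_a,\gamma_b)$ and the linear segment of $N,D$ containing $\bar\gamma$ are pinned down, the equation $\gamma D(\gamma)=N(\gamma)$ is linear in $\gamma$ on that segment (since $N$ is affine in $\gamma$ and $\gamma D(\gamma)$ is affine in $\gamma$), so $\bar\gamma$ is obtained in closed form --- the content of Remark~\ref{root_calc} --- and the remaining computations are routine.
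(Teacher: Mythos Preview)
Your proposal is correct and follows essentially the same route as the paper: reduce the fixed-point equation for $T$ to a scalar root-finding problem for $h(\gamma)=\gamma-N(\gamma)/D(\gamma)$, argue that $h$ is continuous and strictly increasing on $(\gamma_a,\gamma_b)$, check the boundary signs, and conclude by the intermediate value theorem. The only difference is that the paper invokes existence of optimal solutions and Lagrange multipliers for problem~\eqref{opt3} (via Theorem~\ref{prop_thm}.2) to assert existence of a positive fixed point up front, whereas you obtain existence directly from the sign analysis of $h$ at the endpoints; your version is therefore self-contained and spells out the case distinctions ($\gamma_a=0$ vs.\ $\gamma_a>0$, $\gamma_b<\infty$ vs.\ $\gamma_b=+\infty$) that the paper leaves implicit.
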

\begin{proof}
%By the assumption that $e_x'e_q'>0$ at optimality, we know that we are in the setup of Theorem \ref{prop_thm}.2.
Existence\footnote{An alternative and more direct approach of establishing the existence of a fixed point is by considering all possible cases and defining an appropriate compact convex set $E\subset \mathbb{R}^2_+ \setminus (0,0)$ so that $T(E)\subset E$, whence existence follows by the Brower's fixed point theorem~\cite{basar}, as $T$ is continuous.} of a fixed point is guaranteed by existence of solutions and Lagrange multiplies for (\ref{opt3}), as by assumption we are in the setup of Theorem \ref{prop_thm}.2.
Define $\gamma := \frac{e_x'}{e_q'}$; a fixed point $(e_x',e_q') = T((e_x',e_q')), e_x',e_q'>0$, corresponds to a root of
\begin{equation}
h(\gamma):= -\frac{e_x - \sum_{l \in P_1}\min(b_l^2\gamma,A^2)}{e_q - \sum_{l \in P_2}\min(a_l^2\frac{1}{\gamma},B^2)} + \gamma
\end{equation}
For the range $\gamma \ge \gamma_a$ and $\gamma\le \gamma_b$, if $\gamma_b<+\infty$, we have that $h(\gamma)$ is continuous and strictly increasing.  %If $|P_2|B^2 < e_q$, it holds that $\lim_{\gamma\to 0} = -\frac{e_x}{e_q - |P_2|B^2} <0$; otherwise $\lim_{\gamma\to \gamma_q} = -\infty$, where $\gamma_q := \min \{\gamma: \sum_{l \in P_2}\min(a_l^2\gamma,B) =  e_q\} >0$.
%Since $|P_2|B^2 < e_q$, it holds that $\lim_{\gamma\to 0} = -\frac{e_x}{e_q - |P_2|B^2} <0$; %otherwise $\lim_{\gamma\to \gamma_q} = %-\infty$, where $\gamma_q := \min \{\gamma: \sum_{l \in %P_2}\min(a_l^2\gamma,B) =  e_q\} >0$.
%note also that $\lim_{\gamma\to +\infty} =+\infty$. This proves that $h$ has a single positive root, whence the fixed point of $T$ is also unique.
The fact that $\lim_{\gamma\searrow \gamma_a} h(\gamma)<0,\lim_{\gamma\nearrow\gamma_b} h(\gamma)>0$ shows the existence of a unique root $\bar{\gamma}$ of $h$ corresponding to a unique fixed point of $T$, cf. (\ref{exeq_res}).  \hfill$\blacksquare$%\hfill$\square$
\end{proof}

\begin{remark}[Exact calculation of a root of $h$]\label{root_calc}
We seek to calculate the root of $h$ exactly and efficiently. In doing so, consider the points $\{\gamma_l\}_{l\in P_1\cup P_2}$, where
$\gamma_l := \frac{A}{b_l^2}, \ l\in P_1, \ \gamma_l := \frac{a_l^2}{B}, \ l\in P_2$. Then, note that for any $\gamma \ge \gamma_l, l\in P_1$ we have that $\min(b_l^2\gamma,A^2) = A^2$.
Similarly, for any $\gamma \le \gamma_l, l\in P_2$, we have that $\min(a_l^2\frac{1}{\gamma},B^2) = B^2$.
%Let us define $N:=|P_1|+|P_2|$. We order all such points in increasing order (which takes $O(N\log N)$ operations), and consider the resulting vector
We order all such points in increasing order, and consider the resulting vector $\bm{\gamma}':=\{\gamma_i'\}$  excluding any points below $\gamma_a$ or above $\gamma_b$. Let us define $h_i := h(\gamma_i')$.
If for some $i$, $h_i = 0$ we are done. Otherwise there are three possibilities:
1) there is an $i$
%
%the interval with with endpoints being consecutive points $(\gamma_{i}',\gamma_{i+1}')$. In the latter case, we have $h(\gamma_{i}')<0,h(\gamma_{i+1}')>0$,
such that $h_i< 0 < h_{i+1}$;
2) $h_1>0$, or 3) $h_N<0$. In all cases, the numerator (denominator) of $h$ is linear in $\gamma$ ($\frac{1}{\gamma}$) for the respective range of $\gamma$. Therefore, 
$\bar{\gamma}$ is obtained by solving the linear equation
\begin{equation}
f(\gamma) := e_x - \sum_{l \in P_1}\min(b_l^2\gamma,A^2) - \gamma \left( e_q - \sum_{l \in P_2}\min \left( a_l^2\frac{1}{\gamma},B^2 \right) \right).
\end{equation}
\noindent
Note that there is no need for further computation to set this into the form $f(\gamma) = \alpha\gamma + \beta$ for some $\alpha,\beta$. Instead, we use 
the elementary property that a linear function $f$ on $[x_0,x_1]$ with $f(x_0)f(x_1)<0$ has a unique root given by
$$\bar{x} = x_0 -\frac{x_1-x_0}{f(x_1)-f(x_0)}f(x_0)  \;\;. $$
\end{remark}
\end{document}